\def\RevisionApplyChanges{1}
\expandafter\def\csname RevisionIncludeM24\endcsname{1}
\expandafter\def\csname RevisionIncludeM28\endcsname{1}

\documentclass[letterpaper]{article}
\usepackage[preprint]{aaai2027}
\usepackage[hyphens]{url}
\usepackage{graphicx}
\usepackage{natbib}
\usepackage{caption}
\usepackage{algorithm}
\usepackage{algorithmic}
\usepackage{booktabs}
\usepackage{amsmath,amssymb,amsfonts,amsthm}
\usepackage{xcolor}
\usepackage{eucal}
\usepackage{xspace}
\usepackage{tabularx}
\usepackage{subcaption}
\usepackage{todonotes}
\newtheorem{lemma}{Lemma}
\newtheorem{proposition}{Proposition}
\theoremstyle{definition}

\newtheorem{theorem}{Theorem}
\newtheorem{remark}{Remark}

\newtheorem{definition}{Definition}

\newenvironment{threeparttable}{}{}
\newenvironment{tablenotes}
  {\begin{list}{}{\setlength{\leftmargin}{1.2em}}}
  {\end{list}}

\newcommand{\tnote}[1]{\textsuperscript{#1}}
\providecommand{\multirow}[3]{#3}

\def\X{\mathcal{X}}

\def\R{\mathbb{R}}

\def\diag{\operatorname{diag}}
\DeclareMathOperator*{\argmax}{arg\,max}
\DeclareMathOperator*{\argmin}{arg\,min}

\providecommand{\mathscr}[1]{\mathcal{#1}}

\newcommand{\brian}[1]{}

\newcommand{\revdel}[1]{}

\newif\ifrevisionmarked
\ifdefined\RevisionMarked
  \revisionmarkedtrue
\else
  \revisionmarkedfalse
\fi
\newif\ifrevisionapplychanges
\ifdefined\RevisionApplyChanges
  \revisionapplychangestrue
\else
  \revisionapplychangesfalse
\fi

\definecolor{RevisionRemoveRed}{RGB}{185,28,28}
\definecolor{RevisionAddGreen}{RGB}{0,118,72}
\definecolor{RevisionMarkerBlue}{RGB}{29,78,216}

\ifrevisionmarked
  \renewcommand{\todo}[1]{%
    \par\smallskip
    \noindent\textit{TODO: #1}%
    \par\smallskip
  }
\fi

\newcommand{\IssueRef}[1]{%
  \ifrevisionmarked
    \textsuperscript{\textcolor{RevisionMarkerBlue}{\texttt{[#1]}}}%
  \fi
}

\newcommand{\RevisionRemove}[2]{%
  \ifrevisionmarked
    {\textcolor{RevisionRemoveRed}{#2}}\IssueRef{#1}%
  \else
    \ifrevisionapplychanges
    \else
      #2%
    \fi
  \fi
}

\newcommand{\RevisionAdd}[2]{%
  \ifrevisionmarked
    {\textcolor{RevisionAddGreen}{#2}}\IssueRef{#1}%
  \else
    \ifrevisionapplychanges
      #2%
    \fi
  \fi
}

\newcommand{\RevisionReplace}[3]{%
  \ifrevisionmarked
    {\textcolor{RevisionRemoveRed}{#2}}%
    {\textcolor{RevisionAddGreen}{#3}}\IssueRef{#1}%
  \else
    \ifrevisionapplychanges
      #3%
    \else
      #2%
    \fi
  \fi
}

\newcommand{\RevisionComment}[2]{%
  \ifrevisionmarked
    \par\smallskip
    \noindent{\footnotesize\color{RevisionMarkerBlue}%
    \textbf{Revision note \texttt{[#1]}.} #2}%
    \par\smallskip
  \fi
}

\newcommand{\RevisionTieredReplace}[4]{%
  \ifrevisionmarked
    {\textcolor{RevisionRemoveRed}{#2}}%
    {\textcolor{RevisionAddGreen}{#4}}\IssueRef{#1}%
  \else
    \ifrevisionapplychanges
      \ifcsname RevisionInclude#1\endcsname
        #4%
      \else
        #3%
      \fi
    \else
      #2%
    \fi
  \fi
}

\newcommand{\RevisionOptionalRemove}[2]{%
  \ifrevisionmarked
    {\textcolor{RevisionRemoveRed}{#2}}\IssueRef{#1}%
  \else
    \ifrevisionapplychanges
      \ifcsname RevisionInclude#1\endcsname
      \else
        #2%
      \fi
    \else
      #2%
    \fi
  \fi
}

\newcommand{\ManuscriptNLSVDRepresentativePoints}{1{,}000}
\newcommand{\ManuscriptNLSVDNumTrials}{100}
\newcommand{\ManuscriptPerturbationNLSVDPoints}{1{,}000}
\newcommand{\ManuscriptPerturbationEvalPointsMnistFashion}{500}
\newcommand{\ManuscriptPerturbationEvalPointsCifar}{100}
\newcommand{\ManuscriptBlackBoxMnistTimeSec}{2.61}

\newcommand{\ManuscriptBlackBoxFashionTimeSec}{2.35}

\newcommand{\ManuscriptBlackBoxCifarTimeSec}{0.36}

\newcommand{\ManuscriptCWBigQueryRatioMnist}{6.6}
\newcommand{\ManuscriptCWBigQueryRatioFashion}{6.6}
\newcommand{\ManuscriptCWBigQueryRatioCifar}{6.6}
\newcommand{\ManuscriptCWMainMaxIterations}{1{,}500}
\newcommand{\ManuscriptCWBigMaxIterations}{10{,}000}
\newcommand{\ManuscriptNLSVDNetBatchSize}{64}
\newcommand{\ManuscriptNLSVDNetEncodingDims}{1{,}000}
\newcommand{\ManuscriptNLSVDNetResLayers}{20}
\newcommand{\ManuscriptNLSVDNetResChannels}{10}
\newcommand{\ManuscriptNLSVDNetLearningRate}{10^{-3}}
\newcommand{\ManuscriptNLSVDNetRegularizationCutoff}{4.0}
\newcommand{\ManuscriptNLSVDNetLabelOnValue}{10.0}
\newcommand{\ManuscriptNLSVDNetLabelOffValue}{0.1}
\newcommand{\ManuscriptNLSVDNetMnistEpochs}{5}
\newcommand{\ManuscriptNLSVDNetFashionEpochs}{10}
\newcommand{\ManuscriptNLSVDNetMnistEncodingScale}{2.5}
\newcommand{\ManuscriptNLSVDNetFashionEncodingScale}{3.0}

\AtBeginDocument{\thispagestyle{plain}}

\title{A Nonlinear Singular Value Theory for Neural Networks}
\author{
Brian Brown\equalcontrib\corresponding\textsuperscript{\rm 1},
Mauricio Mu\~noz\textsuperscript{\rm 2},
Robert Bridges\equalcontrib\textsuperscript{\rm 2},\\
David Grimsman\textsuperscript{\rm 1},
Sean Warnick\textsuperscript{\rm 1}
}
\affiliations{
\textsuperscript{\rm 1}Department of Computer Science, Brigham Young University, Provo, USA\\
\textsuperscript{\rm 2}AI Sweden, Gothenburg, Sweden\\
bcb56@byu.edu
}

\begin{document}
\maketitle

\begin{abstract}
\RevisionReplace{B01}{%
Building on the abstract Nonlinear Singular Value Decomposition (NLSVD) theory of~\citet{brown2025nlsvd}, we prove that most modern neural architectures admit a nonlinear SVD representation in which they are left-invertible before a final linear layer, with no change in input-output behavior. Furthermore, the left-invertible nonlinear portion of the input-output behavior can be made to be \emph{norm preserving}, meaning that perturbations in the left-invertible ``embedding'' (the activations prior to the final linear layer in this representation) correspond proportionally to changes in the input, i.e., distance in feature space can be calibrated directly to distance in input space.
We provide a data-driven algorithm for estimating this representation from trained models and propose a model architecture that naturally facilitates the decomposition.
We also develop the theory necessary for future applications to areas such as model bias and invertibility.%
}{%
Recently \citet{brown2025nlsvd} established a singular value decomposition (SVD) for maps (especially nonlinear)  satisfying certain norm conditions.
We prove that most modern neural architectures admit this nonlinear SVD (NLSVD) representation---with no change in input--output behavior---and enumerate the classes covered.
In this factorization the network is a left-invertible nonlinear map followed by a final linear layer.
Moreover, the left-invertible factor is norm-preserving, so distances in the embedding (activations before the final linear layer) calibrate directly to distances in input space.
We introduce a flexible architecture that yields an explicit decomposition at training time, a data-driven algorithm for estimating the representation from trained models, and the mathematical foundations for nonlinear analogues of row and null spaces in neural networks.
Empirical case studies illustrate uses of the theory for latent-space pullback (visualization and data generation), bias detection, membership-inference robustness under training, and black-box membership inference.
Altogether, these foundations support new approaches to core problems in neural-network analysis.%
}
\RevisionComment{B01}{Bobby rewrite of the abstract for coauthor review. Prior M04 attack wording is already baked into the red baseline (``We also develop\ldots'').}
\RevisionComment{B01}{Revise abstract last to reflect exactly what is in the paper.}
\end{abstract}

\section{Introduction}
\RevisionReplace{B02}{%
Neural networks are nonlinear, but many methods for interpreting and steering them still search for linear directions in an internal representation. An arbitrary representation does not tell us which directions affect the output or how distances are distorted when translating between internal representations and inputs. We study factorizations of the form \(f=U\Sigma v\), where \(v\) is injective and norm-preserving, \(U\) sets the output directions, and \(\Sigma\) assigns their gains. We call this factorization a nonlinear singular value decomposition (NLSVD). The factorization recasts the nonlinearity of broad classes of neural networks as a norm-preserving representation followed by an explicit linear map, so the same classical geometry can support methods for analysis, interpretation, and control. We use that geometry across several applications without claiming specialized state-of-the-art results in each. Rather, we seek to demonstrate how the representation can be used to approach many problems.%
}{%
Neural networks are nonlinear, yet much of modern analysis still hunts for linear structure in a latent representation---directions and subspaces that support interpretation, control, robustness, and related tasks.
Here we prove that most common network classes admit a recently developed nonlinear singular value decomposition (NLSVD), yielding a factorization \(f=U\Sigma v\) in which \(v\) is injective and norm-preserving, \(U\) sets the output directions, and \(\Sigma\) assigns their gains, so classical geometric reasoning can systematically inspire new methods for analysis, interpretation, and control.
We exhibit several uses of the decomposition, not seeking state-of-the-art results but to demonstrate how the representation can approach many problems.%
}
\RevisionComment{B02}{Bobby rewrite of the introduction lead-in. Supersedes the M24 opening paragraph.}

We begin by recalling the Singular Value Decomposition (SVD) from linear algebra.
Any $m\times n$ matrix $A$ can be factored into $A = U\Sigma V^\top$ with unitaries $U$ $(m\times m)$, $V (n\times n)$, and a rectangular diagonal matrix $\Sigma \in \mathbb{R}^{m \times n}$ containing non-negative singular values $\sigma_i$ in descending order.
The SVD provides a quantification of $A$'s \emph{anisotropy}---its directionally--dependent scaling.
Geometrically, the SVD characterizes how a linear transformation maps the unit sphere in the domain to an ellipsoid in the range space.
The columns of $U$ define the principal axes of this ellipsoid, the singular values $\sigma_i$ determine the stretching factors along those axes, and the columns of $V$ represent the pre-images of these axes in the input space.
Under this framework, the familiar operator norm $\|A\|_2 = \sup_{x \neq 0} {\|Ax\|_2}/{\|x\|_2} = \sigma_1$ gives the largest axis of the ellipsoid---the maximum stretching $A$ can produce.
Furthermore, the SVD provides a complete characterization of the fundamental subspaces: the columns of $V$ corresponding to non-zero singular values that span the row space $\mathcal{R}(A^T) \triangleq \{A^\top y : y \in \mathbb{R}^m\}$,  while those corresponding to zero singular values span the null space $\mathcal{N}(A) \triangleq \{x \in \mathbb{R}^n : Ax = 0\}$.
This geometric intuition and row-/null-space decomposition give valuable insights for analysis, e.g., the construction of the Moore-Penrose pseudoinverse $A^\dagger = V \Sigma^\dagger U^\top$, allowing for a principled approximate inversion of the map even in over-/under-determined ($m\neq n$) or rank-deficient settings.

Recently, the Nonlinear SVD (NLSVD) of \citet{brown2025nlsvd} extended this result to a broad class of non-linear functions, namely those with finite 2-induced norm\RevisionRemove{B03}{ (Definition \ref{def:2-2norm})}.
\RevisionComment{B03}{Drop the forward pointer; the definition follows immediately.}
\begin{definition}[$2\to2$ norm or 2-induced norm]\label{def:2-2norm}
For $f:\mathbb{R}^n\to\mathbb{R}^m$, define \mbox{$\|f\|_{2\to2}\triangleq \sup_{x\neq0}\frac{\|f(x)\|_2}{\|x\|_2}$.}
\end{definition}
This is a generalization of the operator norm $\|A\|_2$ and provides the same geometric interpretation:
$f(x)$ lies in an ellipsoid with its major axis stretched by at most $\|f\|_{2\to2}.$
\begin{theorem}[Nonlinear SVD \cite{brown2025nlsvd}]\label{thm:nlsvd}
Let $f:\mathbb R^{n}\to\mathbb R^{m}$ satisfy $f(0)=0$, let $\ell\triangleq m+n$, and assume $\|f\|_{2\to2}<\infty$.
Then there exists an $m\times m$ unitary $U$, a rectangular diagonal matrix $\Sigma=[\diag(\sigma_1,\dots,\sigma_m)\mid \mathbf{0}_{m\times n}]\in\mathbb R^{m\times \ell}$ with $\sigma_i>0$, and an injective map $v:\mathbb R^n\to\mathbb R^{\ell}$ satisfying $\|v(x)\|_2=\|x\|_2$, such that $f=U\Sigma v$.
\end{theorem}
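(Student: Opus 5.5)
We will prove the statement by an explicit construction that pads $f$ with a component carrying the ``remaining'' norm of $x$. The simplest choice is to take $U=I_m$ and all $\sigma_i$ equal to a single constant $\sigma>\|f\|_{2\to2}$; the rectangular structure of $\Sigma$ then just means that $\Sigma v$ reads off the first $m$ coordinates of $v$ and scales them by $\sigma$. Any other choice of unitary $U$ or distinct $\sigma_i$ could afterwards be absorbed by replacing $f$ with $U^\top f$ and rescaling coordinates, so the constant choice loses nothing for the existence claim.

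Define $v:\mathbb R^n\to\mathbb R^{m+n}$ by
\[
v(x)=\begin{bmatrix} f(x)/\sigma \\ \sqrt{1-\|f(x)\|_2^2/(\sigma^2\|x\|_2^2)}\; x\end{bmatrix}\quad (x\neq 0),\qquad v(0)=0 .
\]
The square root is real, and in fact bounded below by $c=\sqrt{1-\|f\|_{2\to2}^2/\sigma^2}>0$, because $\|f(x)\|_2\le\|f\|_{2\to2}\|x\|_2<\sigma\|x\|_2$. This is where the finiteness of $\|f\|_{2\to2}$ is used. Three checks then complete the argument.
\begin{enumerate}
\item \emph{Factorization.} $\Sigma v(x)=\sigma\cdot f(x)/\sigma=f(x)$, and $f(0)=0$ handles the origin.
\item \emph{Norm preservation.} $\|v(x)\|_2^2=\|f(x)\|_2^2/\sigma^2+\|x\|_2^2-\|f(x)\|_2^2/\sigma^2=\|x\|_2^2$.
\item \emph{Injectivity.} The lower block is $g(x)=\lambda(x)\,x$ with $\lambda(x)\ge c>0$.
\end{enumerate}

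The only step requiring care is injectivity, since $\lambda$ depends on $x$. Suppose $v(x)=v(y)$. Then $\|x\|_2=\|y\|_2$ by norm preservation, and $\lambda(x)x=\lambda(y)y$ with both scalars positive. If $x=0$, then $\|y\|_2=0$ and so $y=0$. Otherwise $x$ and $y$ are positive multiples of each other, and equal norms force $x=y$.

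I expect no real obstacle beyond guaranteeing a strictly positive scalar. That is precisely why I choose $\sigma$ strictly larger than the induced norm rather than equal to it: with equality, $\lambda(x)$ could vanish and the lower block would lose the information needed for injectivity. If one additionally wants $\Sigma$ with descending distinct entries, one can use $\sigma_i=\sigma(1+\epsilon_i)$, dividing the $i$-th coordinate of $f$ by $\sigma_i$. The same norm-budget argument goes through with $\sum_i f_i(x)^2/\sigma_i^2$ in place of $\|f(x)\|_2^2/\sigma^2$.
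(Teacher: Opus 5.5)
Your proof is correct and is essentially the paper's construction: $U=I_m$, the support block $\Sigma^\dagger f(x)$ completed by the kernel block $a(x)x$ with $a$ uniformly bounded below, and the same injectivity argument (equal norms first, then the direction from the lower block). The only difference is the condition imposed on $\Sigma$: with equal singular values, your threshold $\sigma>\|f\|_{2\to2}$ is slightly less demanding than the paper's coordinatewise budget $\sum_i\|f_i\|_{2\to2}^2/\sigma_i^2<1$ (since $\|f\|_{2\to2}^2\le\sum_i\|f_i\|_{2\to2}^2$), and either condition suffices.
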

\RevisionRemove{B04}{Theorem~\ref{thm:nlsvd} appears in \cite{brown2025nlsvd}. }
We provide a simplified proof in \RevisionReplace{B05}{Appendix}{Supplement}~\ref{pf:thm-nlsvd} that illuminates its constructive nature, leading directly to our Algorithm~\ref{alg:nlsvd-construction}.
\RevisionComment{B05}{Proof lives in the separate supplement PDF; keep \texttt{\textbackslash ref\{pf:thm-nlsvd\}} (resolve via xr later). Algorithm is in the main paper.}
\RevisionReplace{B07}{See Figures}{See Supplementary Figures}~\ref{fig:conceptual_figure_combined}
\&~\ref{fig:conceptual_figure_lifting}.
\RevisionComment{B07}{Conceptual figures live in the supplement; prefix avoids main/supplement figure-number collision.}
\RevisionComment{B09}{Move conceptual figures from the supplement into the main intro if the seven-page budget permits.}
\begin{table}[t]
\centering
\caption{{NLSVD coverage for modern neural networks.}
After anchoring at a reference input, the following architectures have finite
\(2\)-induced gain and therefore admit an NLSVD (satisfy Theorem \ref{thm:nlsvd}) under the stated
restrictions.
Unbounded attention inputs, unbounded carried state, and arbitrarily long
iteration are not covered. 
The formal result appears as
Theorem~\ref{thm:architecture}; the proof is in the supplement.}
\label{tab:nlsvd-nn-coverage}
\footnotesize
\begin{tabularx}{\linewidth}{@{}>{\raggedright\arraybackslash}p{0.38\linewidth}>{\raggedright\arraybackslash}X@{}}
\toprule
\textbf{Architecture} & \textbf{Required Restriction} \\
\midrule
MLPs, CNNs, ResNets, and U-Nets
  & Finite composition on a compact input domain \\
RNNs and recurrent networks
  & Finite horizon and compact hidden-state domain \\
Fixed-length transformers
  & Compact token domain and fixed attention mask \\
Encoder--decoder transformers
  & Bounded source and target lengths \\
Stateful transformers
  & Finite-dimensional compact state and finite update horizon \\
Diffusion U-Net denoisers
  & Fixed time and conditioning, with deterministic evaluation \\
\bottomrule
\end{tabularx}
\end{table}

\RevisionComment{B10}{Modular coverage table extracted from the M26 boxed minipage (\texttt{20-architecture-brian.tex}); place early (first page) if space allows.}

\subsubsection{Contributions}
We prove applicability of the Nonlinear SVD (NLSVD) to neural networks, introduce this perspective to the machine-learning community, and demonstrate its usefulness for analyzing neural networks.
Specific contributions include:
\textbf{Neural-network coverage:} conditions establishing NLSVD representations for common feedforward, convolutional, recurrent, and transformer architectures (summarized in Table \ref{tab:nlsvd-nn-coverage}). \textbf{Nonlinear geometric tools:} generalized row and null directions, left inversion, and an NLSVD-structured neural-network architecture. \textbf{Black-box construction:} an algorithm for estimating an NLSVD from function evaluations, together with empirical measurements of its approximation quality. \textbf{Controlled empirical studies:} calibrated latent traversal and pullback, inference-time bias detection and membership inference, and training-time robustness to membership inference attacks experiments exhibit NLSVD-specific application and interpretation.
\RevisionComment{B06}{Accepted M24: long contribution bullet list removed; compact contributions paragraph inlined here (was \texttt{10-contributions-brian.tex}).}
Overall, we argue that, analogous to the original SVD's contributions to linear analysis, NLSVD can inspire  geometric intuition and algorithmic improvements in various machine learning tasks.

\RevisionRemove{B12}{%
\subsection{Known Limitations}
Theorem~\ref{thm:nlsvd} relaxes the properties of $V^\top$ from the linear SVD—which requires $V^\top$ to be \emph{bijective} and \emph{inner-product preserving}—to a lifting function $v(x)$ that is only required to be \emph{injective} and \emph{norm-preserving}.
Several other properties of the classical linear SVD are relinquished to accommodate the broader class of functions with a bounded 2-induced norm.
Notably, the largest singular value $\sigma_1$ in the proposed construction is not necessarily a tight upper bound on the $\|f\|_{2\to2}$ norm, as seen in the constraint in \eqref{eq:sigma_constraint}.
This contrasts with the classical SVD of a linear operator, where the largest singular value is exactly the operator's induced norm.
In particular, this bound may be loose by a multiplicative factor as large as $\sqrt{m}$, since $\|f(x)\|_2^2 = \sum f_i(x)^2 \leq m \cdot \max_i f_i(x)^2$.
This same \(\sqrt m\) scale appears in the coordinate construction.
Writing \(\alpha_i=\|f_i\|_{2\to2}\), for any \(\epsilon>0\) the simultaneous
choice \(\sigma_i=(\sqrt m+\epsilon)\alpha_i\) for nonzero \(\alpha_i\) (and
any positive \(\sigma_i\) when \(\alpha_i=0\)) gives
$\sum_i \frac{\alpha_i^2}{\sigma_i^2}
\le \frac{m}{(\sqrt m+\epsilon)^2}<1$,
so the construction makes coordinate gains tight only up to an arbitrarily
small excess over the worst-case \(\sqrt m\) factor.  This scale is unavoidable
for proportional coordinate bounds in the aligned worst case: if all coordinate
gains are attained together and \(\sigma_i=c\alpha_i\), then the constraint
\eqref{eq:sum_norms} requires \(m/c^2<1\), hence \(c>\sqrt m\).
Finally, minimizing the elements of $\Sigma$ requires non-convex optimization over the input space if $f$ is non-convex, which presents a challenge for practical implementations.
Because \citet{brown2025nlsvd} prove the NLSVD theorem constructively, Algorithm \ref{alg:nlsvd-construction} follows the same coordinate-gain recipe for black-box estimation.
The resulting construction yields limitations on $U$ and $v$, which we describe and exhibit via examples in
Sections~\ref{sec:construction} and Appendix~\ref{sec:construction-limitation}.%
}
\RevisionComment{B12}{Known Limitations moved from the intro to the end of Section~\ref{sec:construction}.}

\subsection{NLSVD Related Work}\label{sec:related-works}
\RevisionReplace{B13}{%
The acronym ``GSVD'' already refers to several linear matrix-factorization frameworks, most commonly simultaneous decompositions of matrix pairs sharing a column dimension \cite{vanLoan1976GeneralizingSVD,paigeSaunders1981TowardsGSVD}, along with variants based on modified inner products, restricted factorizations, and generalized matrix pencils \cite{deMoorZha1991TreeGeneralizationsOSVD,deMoorGolub1991generalizations, deMoorZha1991TreeGeneralizationsOSVD}. We therefore use nonlinear singular value decomposition (NLSVD) for the map-level factorization studied here: nonlinear maps $f:\mathbb R^n\to\mathbb R^m$ admitting factorizations of the form $f(x)=U\Sigma v(x)$ with $v$ a norm-preserving injective lift. Related directions in machine learning include invertible and flow-based architectures \cite{kingma2018glow,behrmann2018}, Koopman-inspired lifting methods, saliency and attribution techniques \cite{simonyan2013saliency,smilkov2017smoothgrad}, neural tangent kernel and linearization analyses \cite{jacot2018neuralTangentKernel,lee2019wideLinearModels}, and spectral/Lipschitz approaches to robustness and sensitivity \cite{miyato2018spectralNormGAN,virmaux2018lipschitzRegularity}. The NLSVD perspective is complementary to these methods. Rather than giving a local linearization, training-limit description, or scalar gain bound, it provides a global anisotropic factorization exposing coordinate gains, generalized row/null geometry, and constructive pseudo-inverse structure for trained finite networks. Expanded comparisons and discussion of adjacent literature are deferred to Appendix~\ref{sec:related-works-appendix}.%
}{%
``Generalized SVD'' already refers to several linear matrix-factorization frameworks, usually decompositions of matrix pairs sharing a column dimension \cite{vanLoan1976GeneralizingSVD,paigeSaunders1981TowardsGSVD}, along with variants based on modified inner products, restricted factorizations, and generalized matrix pencils \cite{deMoorZha1991TreeGeneralizationsOSVD,deMoorGolub1991generalizations, deMoorZha1991TreeGeneralizationsOSVD}. We use NLSVD for non-linear maps $f:\mathbb R^n\to\mathbb R^m$ admitting factorizations of the form $f(x)=U\Sigma v(x)$ with $v$ a norm-preserving injective lift.
Related areas include invertible and flow-based architectures \cite{kingma2018glow,behrmann2018}, Koopman-inspired lifting methods, saliency and attribution techniques \cite{simonyan2013saliency,smilkov2017smoothgrad}, neural tangent kernel and linearization analyses \cite{jacot2018neuralTangentKernel,lee2019wideLinearModels}, and spectral/Lipschitz approaches to robustness and sensitivity \cite{miyato2018spectralNormGAN,virmaux2018lipschitzRegularity}. The NLSVD perspective is complementary to these methods. Rather than giving a local linearization, training-limit description, or scalar gain bound, it provides a global anisotropic factorization exposing coordinate gains, generalized row/null geometry, and constructive pseudo-inverse structure for trained finite networks. Expanded comparisons and discussion of adjacent literature are deferred to Supplement~\ref{sec:related-works-appendix}.%
}
\RevisionComment{B13}{Bobby polish of the opening related-work paragraph; Appendix$\to$Supplement for the deferred discussion.}

\RevisionReplace{B14}{%
Representation learning seeks mappings from complex inputs into task-optimized feature spaces by training models to extract abstract semantic variables \cite{bengio2013}. NLSVD similarly isolates an explicit, task-aligned latent representation $v(x)$ before the final layer, but mathematically factorizes a trained model rather than learning new features.
Manifold learning seeks low-dimensional structure in observed data \cite{meilua2024}. NLSVD similarly embeds inputs isometrically into $\operatorname{Im}(v)$ and uses singular values to identify a functionally relevant subspace. However, NLSVD decomposes the function rather than modeling the data distribution.%
}{%
Representation learning seeks mappings from complex inputs into task-optimized feature spaces by training models to extract abstract semantic variables \cite{bengio2013}. NLSVD similarly isolates an explicit, task-aligned latent representation $v(x)$ before the final layer, but factorizes a trained model rather than learning new features.
Manifold learning seeks low-dimensional structure in observed data \cite{meilua2024}. NLSVD similarly embeds inputs isometrically into $\operatorname{Im}(v)$ and uses singular values to identify a functionally relevant subspace. However, NLSVD decomposes the function rather than modeling the data distribution.%
}
\RevisionComment{B14}{Drop ``mathematically'' before factorizes.}

Related work also studies concepts represented as linear directions or subspaces, their measurement with linear probes, and their manipulation using steering vectors
\ifrevisionmarked
\cite{mikolov2013,alain2017}
\textcolor{RevisionRemoveRed}{[\texttt{turner2024}]}
\textcolor{RevisionAddGreen}{\cite{turner2023}}\IssueRef{M29}
\else
\ifrevisionapplychanges
\cite{mikolov2013,alain2017,turner2023}
\else
\cite{mikolov2013,alain2017,turner2024}
\fi
\fi.
Park et al.\ formalize these as the subspace, measurement, and intervention views of the ``Linear Representation Hypothesis'' \cite{park2024}.
\RevisionReplace{B15}{%
NLSVD is complementary: its directions are determined by global input--output gain and provides a formal framework for geometric analysis of its induced latent representation, with input-calibrated distances and functionally defined row, null, and singular directions.%
}{%
NLSVD is complementary.
Its directions are determined by global input--output gain, and it provides a formal framework for geometric analysis of its induced latent representation, with input-calibrated distances and functionally defined row, null, and singular directions.%
}
\RevisionComment{B15}{Split the closing sentence and fix subject--verb agreement (``directions \ldots provide'' $\to$ ``it provides'').}

\section{Mathematical Treatment}\label{sec:math}

\RevisionReplace{B16}{%
The foundational NLSVD theorem and the construction of the $v$ function are due to ~\cite{brown2025nlsvd}; we take that decomposition as our starting point.
In this section, we explore implications for associated generalizations of the idea of row space and null space.
We posit that these generalizations unify several disparate areas of machine learning, including synthetic data generation, model explainability, and adversarial robustness.%
}{%
Building on the foundational NLSVD theorem of \citet{brown2025nlsvd}, we make two advances: we prove that most modern neural networks admit an NLSVD, and we develop then explore nonlinear row/null-space generalizations.%
}
\RevisionComment{B16}{Bobby rewrite of the math-section lead-in; architecture coverage moved above row/null material.}

\subsection{Do modern NNs admit an NLSVD?}
\label{sec:nlsvd-for-which-nn}
\RevisionComment{B17}{Architecture-coverage block at the front of the math section; Theorem~\ref{thm:architecture} restored in-main (proof in supplement).}
We establish that networks with finite Lipschitz constants, esp. architectures on compact domains, admit an NLSVD.

\subsubsection{Terminology} 
A \emph{fixed-length token domain} means a subset of \(\R^{n\times d}\) with a fixed number \(n\) of tokens, each represented in \(d\) dimensions, with \(n,d<\infty\).  
A \emph{fixed attention mask} means that the additive mask used inside attention is chosen in advance for that sequence length rather than changing with the input.
A \emph{compact query--memory product domain} means a product \(\X_q\times\X_m\), where decoder/query states range over a compact \(\X_q\) and encoder/memory states range over a compact \(\X_m\).  
A \emph{bounded-length} sequence model means that only finitely many source and target lengths are allowed, e.g.\ \(n_s\le N_s\) and \(n_t\le N_t\).
An \emph{anchor} is simply a chosen point \(x_\star\in\X\) used to recenter the map when \(f(x_\star)\ne0\).  A map is \(L\)-\emph{Lipschitz} on \(\X\) when \(\|f(x)-f(y)\|_2\le L\|x-y\|_2\) for all \(x,y\in\X\); here ``finite gain''
means the supremum ratio defining the \(2\)-induced norm below is finite.

\subsubsection{Main Theoretical Results}
\RevisionReplace{B19}{%
In brief: Theorem~\ref{thm:compact-bound} shows that Lipschitz maps, after anchoring, have finite \(2\)-induced gain (hence admit an NLSVD). Theorem~\ref{thm:architecture} shows that modern architectures satisfy those Lipschitz hypotheses under standard domain restrictions, so they inherit finite gain and an NLSVD.
For each modern architecture we must guarantee the hypotheses of
the NLSVD theorem, namely, finite \(2\)-induced gain.
\citet{kim2021selfattentionlipschitz}
show that standard dot-product self-attention is not globally Lipschitz on
unbounded domains, while \citet{castin2024smoothattention} give
quantitative Lipschitz bounds for fixed-length unmasked and masked self-attention
on compact fixed-length token domains.
For a map \(f:\R^d\to\R^m\) and an application domain \(\X\subset\R^d\), write
\mbox{$\|f\|_{2\to2;\X}\triangleq\sup_{x\in\X\setminus\{0\}}\frac{\|f(x)\|_2}{\|x\|_2}$}.
For an anchor \(x_\star\in\X\), define the recentered domain
\mbox{\(\X_\star\triangleq\X-x_\star\)} and the deviation map \mbox{$f_\star(h)\triangleq f(x_\star+h)-f(x_\star),\qquad h\in\X_\star$}.
\begin{theorem}[Anchored Lipschitz maps have finite induced gain]
\label{thm:compact-bound}
Let \(\X\subset\R^d\), let \(x_\star\in\X\), and let \(f:\X\to\R^m\) be
\(L\)-Lipschitz on \(\X\).
Then \(f_\star(0)=0\) and
$\|f_\star\|_{2\to2;\X_\star}\le L<\infty$.
\end{theorem}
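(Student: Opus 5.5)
The plan is to verify both claims directly from the definitions of the deviation map \(f_\star\) and the restricted gain \(\|\cdot\|_{2\to2;\X_\star}\); no earlier result beyond these definitions is needed.

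\textbf{Step 1: the anchor is sent to zero.} Since \(x_\star\in\X\), we have \(0=x_\star-x_\star\in\X_\star\), so \(f_\star(0)\) is defined. By definition,
\(f_\star(0)=f(x_\star+0)-f(x_\star)=0\).

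\textbf{Step 2: the Lipschitz bound becomes a gain bound.} Fix any \(h\in\X_\star\setminus\{0\}\). By the definition of \(\X_\star\), the point \(x_\star+h\) lies in \(\X\), and so does \(x_\star\). Applying the \(L\)-Lipschitz hypothesis to this pair gives
\[
\|f_\star(h)\|_2=\|f(x_\star+h)-f(x_\star)\|_2\le L\,\|(x_\star+h)-x_\star\|_2=L\|h\|_2 .
\]
Dividing by \(\|h\|_2>0\) shows \(\|f_\star(h)\|_2/\|h\|_2\le L\). Taking the supremum over \(h\in\X_\star\setminus\{0\}\) yields \(\|f_\star\|_{2\to2;\X_\star}\le L<\infty\). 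If \(\X_\star\setminus\{0\}\) is empty, the supremum is over an empty set; we adopt the convention that it equals \(0\), and the bound still holds.

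\textbf{Main subtlety.} The argument involves no analytic difficulty. The only point needing care is bookkeeping: every \(h\) in the recentered domain must correspond to a point \(x_\star+h\) of \(\X\), so that the Lipschitz inequality is used only on \(\X\), where it is assumed. This is exactly why the result is stated for the restricted gain \(\|\cdot\|_{2\to2;\X_\star}\) rather than a global one.

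Note also that compactness is not needed here; it only enters later, when Theorem~\ref{thm:architecture} derives finite Lipschitz constants for the architectures. To connect with Theorem~\ref{thm:nlsvd}, which is stated for maps on all of \(\R^n\), one would extend \(f_\star\) off \(\X_\star\). For example, the McShane/Kirszbraun extension preserves the Lipschitz constant \(L\) and the value \(f_\star(0)=0\), and Step 2 then applies verbatim on \(\R^d\). That extension belongs to the downstream application rather than to this theorem.
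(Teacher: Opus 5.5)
Your proof is correct and follows the paper's own argument: apply the Lipschitz inequality to the pair $x_\star+h,\,x_\star\in\X$, divide by $\|h\|_2$, and take the supremum over $h\in\X_\star\setminus\{0\}$. One small aside on your closing remark: for $\R^m$-valued maps in the Euclidean norm, Kirszbraun's theorem preserves the constant $L$ exactly, whereas componentwise McShane extension can inflate it by up to a factor $\sqrt{m}$.
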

Thus any NLSVD theorem requiring finite \(2\)-induced gain applies to the anchored
deviation map \(f_\star\), and the original output is recovered by
$f(x)=f(x_\star)+f_\star(x-x_\star).$
\noindent\textit{Proof.} See Supplement~\ref{app:modern-architecture-proofs}.
The supplement gives explicit notation and Lipschitz bounds for affine
maps, convolutions, fixed pooling/downsampling, coordinatewise activations,
stabilized normalizations, concatenations, residual sums, and finite
compositions. Using Theorem~\ref{thm:compact-bound}, we immediately see
that the NLSVD theorem applies to neural networks composed of these elements.
Furthermore, U-Net denoisers used in diffusion models \cite{ronneberger2015unet, ho2020ddpm} satisfy the theorem's hypotheses
when viewed as deterministic maps on compact pixel or latent domains at a fixed diffusion time and fixed conditioning
(i.e., abstracting away non-Lipschitz behavior introduced by random sampling).
For attention-based architectures, decoder-only transformers are explicit functions of a finite context of previous tokens (unlike recurrent nets, which compress the past into a fixed-dimensional hidden state). To obtain a fixed-dimensional map, we fix the maximum input and output sequence lengths and use padding or length stratification.
The same argument covers stateful transformers over a fixed finite horizon when the carried state is finite-dimensional and restricted to a compact domain, since the state is included in the input and output of each update map. However, it does not provide a uniform finite-gain bound for unbounded state or arbitrarily long iteration.
\begin{theorem}[Modern architectures give anchored finite-gain maps]
\label{thm:architecture}
Let \(\X_0\) be compact, let \(T<\infty\), and for \(t=1,\ldots,T\) let
\(H_t:\X_{t-1}\to\X_t\) be blocks with \(\X_t=H_t(\X_{t-1})\).
Define the realized map by \(f=H_T\circ\cdots\circ H_1:\X_0\to\X_T\).
Suppose each block \(H_t\) is one
of the affine, convolutional, pooling/downsampling, coordinatewise-activation,
stabilized-normalization, residual, concatenation, feedforward, self-attention,
masked-attention, or encoder--decoder cross-attention blocks covered by
Lemma~\ref{lem:lipschitz-calculus}, the fixed-length attention
bound of \citet{castin2024smoothattention}, and Lemma~\ref{lem:cross-attention-lipschitz}.
For fixed finite token dimensions and fixed attention masks, let \(L_t<\infty\) be a Lipschitz constant for \(H_t\) on \(\X_{t-1}\).
Then \(\mathrm{Lip}_{\X_0}(f)\le \prod_{t=1}^T L_t < \infty\).
Consequently, for any fixed-length token domain anchor \(x_\star\in\X_0\), the recentered deviation map
\(f_\star(h)=f(x_\star+h)-f(x_\star)\) has finite induced gain by
Theorem~\ref{thm:compact-bound}. If only finitely many source/target sequence
lengths are allowed, the family constant is
\(\max_{(n_s,n_t)}\prod_t L_{t,n_s,n_t}<\infty\).
\end{theorem}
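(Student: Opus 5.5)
The argument is a direct induction on the number of blocks, using the fact that Lipschitz constants multiply under composition, followed by an appeal to Theorem~\ref{thm:compact-bound}. Before starting I will fix the setting. For fixed token dimensions and a fixed attention mask, each block \(H_t\) is a map between subsets of finite-dimensional Euclidean spaces; token arrays in \(\R^{n\times d}\) are flattened and carry the Frobenius norm, which equals the flattened \(\ell_2\) norm.

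\textbf{Step 1: each block is Lipschitz on its domain.} I will show inductively that every \(\X_t\) is compact and that \(H_t\) has a finite constant \(L_t\) on \(\X_{t-1}\). The sets \(\X_t=H_t(\X_{t-1})\) are compact because \(\X_0\) is compact and each \(H_t\) is continuous, being Lipschitz. Compactness is what the attention bound needs. The fixed-length self-attention and masked-attention constants of \citet{castin2024smoothattention} hold on token domains with bounded radius, and \(\X_{t-1}\) is bounded. Likewise, Lemma~\ref{lem:cross-attention-lipschitz} needs a compact query--memory product domain. I will obtain it by carrying the encoder output forward alongside the decoder state, through a concatenation block, so that the domain is a product of two compact images. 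For the remaining block types (affine, convolution, pooling, activations, stabilized normalization, residual, concatenation, feedforward), Lemma~\ref{lem:lipschitz-calculus} supplies \(L_t\) directly.

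\textbf{Step 2: the constants compose.} For \(x,y\in\X_0\), set \(x_t=H_t\circ\cdots\circ H_1(x)\), and similarly \(y_t\). Since \(x_{t-1},y_{t-1}\in\X_{t-1}\), the block bound applies and gives \(\|x_t-y_t\|_2\le L_t\|x_{t-1}-y_{t-1}\|_2\). Chaining these over \(t=1,\dots,T\) yields \(\|f(x)-f(y)\|_2\le\prod_{t=1}^T L_t\|x-y\|_2\). The product is finite because \(T<\infty\).

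\textbf{Step 3: finite induced gain.} I apply Theorem~\ref{thm:compact-bound} to \(f\) with \(\X=\X_0\), \(L=\prod_t L_t\) and anchor \(x_\star\). This gives \(f_\star(0)=0\) and \(\|f_\star\|_{2\to2;\X_{0,\star}}\le L\), so Theorem~\ref{thm:nlsvd} applies to \(f_\star\).

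\textbf{Step 4: bounded lengths.} With finitely many source/target lengths, each pair \((n_s,n_t)\) defines a fixed-dimensional realized map, padded or stratified, to which Steps 1--3 apply with constant \(\prod_t L_{t,n_s,n_t}\). The maximum over a finite index set is finite and serves as a uniform constant for the family.

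The main obstacle is Step 1 for the attention blocks. Self-attention is not globally Lipschitz \citep{kim2021selfattentionlipschitz}, so its constant must be computed on the realized image domain \(\X_{t-1}\), not on all of \(\R^{n\times d}\). This is why the compactness induction, the fixed mask (so the block is a single fixed function), and the product-domain handling of cross-attention must be settled before the Lipschitz product is formed. Among the remaining block types, only normalization needs care, since it is Lipschitz only when stabilized with \(\epsilon>0\).
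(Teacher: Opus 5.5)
Your proposal is correct and follows essentially the same route as the paper. It takes blockwise constants from Lemma~\ref{lem:lipschitz-calculus}, \citet{castin2024smoothattention}, and Lemma~\ref{lem:cross-attention-lipschitz}, multiplies them inductively along $H_t\circ\cdots\circ H_1$, invokes Theorem~\ref{thm:compact-bound} for the anchored gain, and takes a maximum over the finitely many length pairs. Your explicit induction that each $\X_t=H_t(\X_{t-1})$ is compact, and your carrying of the encoder memory forward so that cross-attention acts on (a subset of) a compact product domain, spell out points the paper leaves implicit.
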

\noindent\textit{Proof.} See Supplement~\ref{app:modern-architecture-proofs}.
Table~\ref{tab:nlsvd-nn-coverage} summarizes the covered architectures and required restrictions.%
}{%
For each modern architecture we must guarantee the hypotheses of
the NLSVD theorem, namely, finite \(2\)-induced gain.
In brief: Theorem~\ref{thm:compact-bound} shows that Lipschitz maps, after anchoring, have finite \(2\)-induced gain, and Theorem~\ref{thm:architecture} shows that modern architectures satisfy those Lipschitz hypotheses under standard domain restrictions.
\citet{kim2021selfattentionlipschitz}
show that standard dot-product self-attention is not globally Lipschitz on
unbounded domains, while \citet{castin2024smoothattention} give
Lipschitz bounds for fixed-length unmasked and masked self-attention
on compact fixed-length token domains.
For a map \(f:\R^d\to\R^m\) and an application domain \(\X\subset\R^d\), write
\mbox{$\|f\|_{2\to2;\X}\triangleq\sup_{x\in\X\setminus\{0\}}\frac{\|f(x)\|_2}{\|x\|_2}$}.
For an anchor \(x_\star\in\X\), define the recentered domain
\mbox{\(\X_\star\triangleq\X-x_\star\)} and the deviation map \mbox{$f_\star(h)\triangleq f(x_\star+h)-f(x_\star),\qquad h\in\X_\star$}.
\begin{theorem}[Anchored Lipschitz maps have finite induced gain]
\label{thm:compact-bound}
Let \(\X\subset\R^d\), let \(x_\star\in\X\), and let \(f:\X\to\R^m\) be
\(L\)-Lipschitz on \(\X\).
Then \(f_\star(0)=0\) and
$\|f_\star\|_{2\to2;\X_\star}\le L<\infty$.
\end{theorem}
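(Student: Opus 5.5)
The plan is a direct unpacking of the definitions; no earlier result beyond the definition of \(\|\cdot\|_{2\to2;\X}\) is needed.

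First, note that \(0\in\X_\star\) because \(x_\star\in\X\). Evaluating the deviation map there gives
\[
f_\star(0)=f(x_\star)-f(x_\star)=0,
\]
which settles the first claim.

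For the gain bound, fix an arbitrary \(h\in\X_\star\setminus\{0\}\) and write \(x=x_\star+h\), so that \(x\in\X\). Applying the Lipschitz hypothesis to the pair \(x,x_\star\in\X\) gives
\[
\|f_\star(h)\|_2=\|f(x_\star+h)-f(x_\star)\|_2\le L\|(x_\star+h)-x_\star\|_2=L\|h\|_2 .
\]
Dividing by \(\|h\|_2>0\) gives \(\|f_\star(h)\|_2/\|h\|_2\le L\). Taking the supremum over \(h\in\X_\star\setminus\{0\}\) yields \(\|f_\star\|_{2\to2;\X_\star}\le L\), and \(L<\infty\) by assumption.

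There is essentially no obstacle. The only points requiring care are bookkeeping ones:
\begin{itemize}
\item The domain of \(f_\star\) is exactly \(\X_\star\), so every \(x_\star+h\) used above lies in \(\X\), where the Lipschitz bound holds.
\item The supremum is taken over nonzero \(h\), so the division is legitimate.
\item If \(\X_\star=\{0\}\), the supremum is over the empty set. We adopt the convention that it equals \(0\), so the bound holds trivially.
\end{itemize}

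The closing remark in the paper is then immediate: \(f(x)=f(x_\star)+f_\star(x-x_\star)\) for all \(x\in\X\). Extending \(f_\star\) to all of \(\R^d\) with finite gain, for example by a McShane/Kirszbraun Lipschitz extension, would be needed to invoke Theorem~\ref{thm:nlsvd} verbatim on \(\R^d\), but that belongs to the application, not to this statement.
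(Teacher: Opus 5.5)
Your proposal is correct and matches the paper's proof: apply the Lipschitz bound to the pair \(x_\star+h,\ x_\star\in\X\), divide by \(\|h\|_2\), and take the supremum over \(h\in\X_\star\setminus\{0\}\). Your explicit check that \(f_\star(0)=0\) and your handling of the empty-supremum case are small details the paper leaves implicit.
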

Thus any NLSVD theorem requiring finite \(2\)-induced gain applies to the anchored
deviation map \(f_\star\), and the original output is recovered by
$f(x)=f(x_\star)+f_\star(x-x_\star).$
Supplement~\ref{app:modern-architecture-proofs} gives the proof along with explicit notation and Lipschitz bounds for affine
maps, convolutions, fixed pooling/downsampling, coordinatewise activations,
stabilized normalizations, concatenations, residual sums, and finite
compositions.
Using Theorem~\ref{thm:compact-bound}, we immediately see
that the NLSVD theorem applies to neural networks composed of these elements.
Furthermore, U-Net denoisers used in diffusion models \cite{ronneberger2015unet, ho2020ddpm} satisfy the theorem's hypotheses
when viewed as deterministic maps on compact pixel or latent domains at a fixed diffusion time and fixed conditioning
(i.e., abstracting away non-Lipschitz behavior introduced by random sampling).
For attention-based architectures, decoder-only transformers are explicit functions of a finite context of previous tokens (unlike recurrent nets, which compress the past into a fixed-dimensional hidden state). To obtain a fixed-dimensional map, we fix the maximum input and output sequence lengths and use padding or length stratification.
The same argument covers stateful transformers over a fixed finite horizon when the carried state is finite-dimensional and restricted to a compact domain, since the state is included in the input and output of each update map. However, it does not provide a uniform finite-gain bound for unbounded state or arbitrarily long iteration.
\begin{theorem}[Modern NNs give anchored finite-gain maps]
\label{thm:architecture}
Let \(\X_0\) be compact, let \(T<\infty\), and for \(t=1,\ldots,T\) let
\(H_t:\X_{t-1}\to\X_t\) be blocks with \(\X_t=H_t(\X_{t-1})\).
Define the realized map by \(f=H_T\circ\cdots\circ H_1:\X_0\to\X_T\).
Suppose each block \(H_t\) is one
of the affine, convolutional, pooling/downsampling, coordinatewise-activation,
stabilized-normalization, residual, concatenation, feedforward, self-attention,
masked-attention, or encoder--decoder cross-attention blocks covered by
Lemma~\ref{lem:lipschitz-calculus}, the fixed-length attention
bound of \citet{castin2024smoothattention}, and Lemma~\ref{lem:cross-attention-lipschitz}.
For fixed finite token dimensions and fixed attention masks, let \(L_t<\infty\) be a Lipschitz constant for \(H_t\) on \(\X_{t-1}\).
Then \(\mathrm{Lip}_{\X_0}(f)\le \prod_{t=1}^T L_t < \infty\).
Consequently, for any fixed-length token domain anchor \(x_\star\in\X_0\), the recentered deviation map
\(f_\star(h)=f(x_\star+h)-f(x_\star)\) has finite induced gain by
Theorem~\ref{thm:compact-bound}. If only finitely many source/target sequence
lengths are allowed, the family constant is
\(\max_{(n_s,n_t)}\prod_t L_{t,n_s,n_t}<\infty\).
\end{theorem}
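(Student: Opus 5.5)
The argument has three parts: a composition bound for Lipschitz constants, an application of Theorem~\ref{thm:compact-bound} to the anchored map, and a finite maximum over the allowed sequence lengths.

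\textbf{Composition bound.} We induct on \(T\). For \(x,y\in\X_0\), write \(x_t=H_t(x_{t-1})\) and \(y_t=H_t(y_{t-1})\). By construction \(x_{t-1},y_{t-1}\in\X_{t-1}\), which is exactly the set on which \(L_t\) is assumed to be a Lipschitz constant for \(H_t\). This gives
\[
\|x_t-y_t\|_2\le L_t\|x_{t-1}-y_{t-1}\|_2 .
\]
Chaining these inequalities for \(t=1,\dots,T\) yields \(\|f(x)-f(y)\|_2\le \bigl(\prod_{t=1}^T L_t\bigr)\|x-y\|_2\). Hence \(\mathrm{Lip}_{\X_0}(f)\le\prod_t L_t\), and this product is finite because \(T<\infty\) and each \(L_t<\infty\).

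\textbf{Finiteness of each \(L_t\).} This is the real content of the theorem and the main obstacle, because attention is not globally Lipschitz. We proceed in three steps.

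\begin{itemize}
\item \emph{Compactness of every layer domain.} Each block is continuous, so by induction every \(\X_t=H_t(\X_{t-1})\) is compact.
\item \emph{Globally Lipschitz blocks.} Affine maps, convolutions, fixed pooling, coordinatewise Lipschitz activations, concatenations and residual sums are Lipschitz by Lemma~\ref{lem:lipschitz-calculus}. The same lemma gives residual constants of the form \(1+L_g\) and stacked constants of the form \(\sqrt{\sum L_i^2}\).
\item \emph{Blocks that are only locally Lipschitz.} Stabilized normalizations and attention are merely locally Lipschitz, and here compactness of \(\X_{t-1}\) is what we use. For self-attention and masked attention with fixed length and a fixed mask, we cite \citet{castin2024smoothattention}, whose bound on a compact token domain depends on the radius of \(\X_{t-1}\). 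For cross-attention on the compact product \(\X_q\times\X_m\), we use Lemma~\ref{lem:cross-attention-lipschitz}.
\end{itemize}

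If a block is only known to be \(C^1\), a fallback argument is available: its Jacobian is bounded on a compact convex hull of \(\X_{t-1}\), and the mean value inequality then gives a finite Lipschitz constant. The anchor is harmless throughout, since subtracting the constant \(f(x_\star)\) does not change Lipschitz constants.

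\textbf{Anchoring and finitely many lengths.} Pick an anchor \(x_\star\in\X_0\) and apply Theorem~\ref{thm:compact-bound} with \(L=\prod_t L_t\). This gives \(f_\star(0)=0\) and \(\|f_\star\|_{2\to2;\X_\star}\le L\), which is the finite induced gain required by Theorem~\ref{thm:nlsvd}. When finitely many source/target lengths \((n_s,n_t)\) are allowed, we repeat the argument for each pair, treating the domain as the corresponding padded or stratified compact set. Each pair gives a finite product \(\prod_t L_{t,n_s,n_t}\), and a maximum over a finite index set of finite numbers is finite. That maximum is a uniform constant for the whole family.
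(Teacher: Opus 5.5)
Your proposal is correct and follows the paper's proof. Like the paper, you get finite per-block constants from Lemma~\ref{lem:lipschitz-calculus}, \citet{castin2024smoothattention}, and Lemma~\ref{lem:cross-attention-lipschitz}, chain them into the product bound for the composition, apply Theorem~\ref{thm:compact-bound} to the anchored map, and take a finite maximum over the allowed length pairs. Your explicit argument that each \(\X_t\) is compact fills in a step the paper leaves implicit, and it is needed for the attention bounds at later layers; one small slip is that stabilized normalization is globally Lipschitz, with constant \(2\|\Gamma\|_2/\sqrt{\epsilon}\) from Lemma~\ref{lem:lipschitz-calculus}, so it does not need compactness.
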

Supplement~\ref{app:modern-architecture-proofs} has the proof and details.
Table~\ref{tab:nlsvd-nn-coverage} summarizes the covered architectures and required restrictions.%
}
\RevisionComment{B19}{Bobby polish of Main Theoretical Results: tighter roadmap placement/wording, shorter proof pointers, theorem title ``Modern NNs\ldots''.}
\RevisionComment{B11}{Theorem~\ref{thm:architecture} kept in the main paper (statement here; proof in the supplement).}

\subsection{Nonlinear Row and Null Spaces}
The classical geometric intuition in nonlinear settings is that the row and null spaces ``bend'' into manifolds due to the map's nonlinearity.
We work instead with the domain of $\Sigma$, i.e., the lifted representation space $\R^{\ell} = \R^{m+n}$.
The norm-preserving lift $v(x)$ induces natural projections onto
$\mathcal{R}(\Sigma^\top) = \{\Sigma^\top y: y \in \R^m \}$ and $\mathcal{N}(\Sigma) = \{z\in \R^{\ell} : \Sigma z = 0\}$.
These projections define generalized notions of row/null structure by \emph{invariance}:
holding the row projection fixed yields score-equivalence classes (i.e., $U\Sigma z = U\Sigma (z+z_\mathcal{N})$ for all $z_\mathcal{N}\in \mathcal{N}(U\Sigma)= \mathcal{N}(\Sigma)$).
Similarly, holding the null projection fixed and moving in the row space yields row-like traversals (i.e., moves in the direction of most sensitivity).
\RevisionComment{B20}{Proofread row/null subsection: title casing, quotes, and comma splice before ``yields''.}

\begin{definition}[Row, null, and feature projections in the lifted space]
\label{def:zr_zn_of_x}
Assume $f=U\Sigma v:\mathbb{R}^n\to\mathbb{R}^m$ as in Theorem~\ref{thm:nlsvd}.
Let $P_{\mathrm{row}}$ and $P_{\mathrm{null}}$ denote the orthogonal projections in $\mathbb{R}^{\ell} = \R^{m+n}$ (range space of $v$)
onto the row space
$\mathcal{R}(\Sigma^\top)$ 
and null space $\mathcal{N}(\Sigma)$, respectively. (Equivalently, one may take
$P_{\mathrm{row}}=\Sigma^\dagger \Sigma$ and $P_{\mathrm{null}}=I_\ell-\Sigma^\dagger \Sigma$.)

Define the \emph{row} and \emph{null} components of the lifted input by
$z_r(x)\;\triangleq\;P_{\mathrm{row}}\,v(x),$ and 
$z_{\mathrm{n}}(x)\;\triangleq\;P_{\mathrm{null}}\,v(x).$
For each $i\in\{1,\dots,\ell\}$, define the \emph{$i$th feature projection} $P_i:\mathbb{R}^{\ell}\to\mathbb{R}^{\ell}$ by
$P_i z \triangleq (e_i^\top z)\,e_i$, where $\{e_i\}$ is the standard basis of $\mathbb{R}^{\ell}$.
\end{definition}

The intuition is that continuous movement in the input space corresponds to (not necessarily continuous) movement through the image of $v$.
As $v$ varies with $x$, so too does the projection onto $\mathcal{R}(\Sigma^\top)$ and $\mathcal{N}(\Sigma)$.
Perturbed inputs whose projections are \emph{invariant}---that project to the same location on either $\mathcal{R}(\Sigma^\top)$ or $\mathcal{N}(\Sigma)$ as the original input---are of particular interest because they contain the \emph{same amount} of the \emph{same feature} that is either mapped to zero (when projecting to the same lifted null location in $\mathcal{N}(\Sigma)$), or mapped to a particular output direction (when projecting to the same lifted row location in $\mathcal{R}(\Sigma^\top)$).
For a unitary $V^\top$, inner-product preservation ensures a special structure:
each projection level set is an affine subspace (a subspace at zero).
Furthermore, linearity (superposition) ensures that the same perturbation applied to different inputs has the same resulting effect on the projection onto $\mathcal{R}(\Sigma^\top)$ and $\mathcal{N}(\Sigma)$.
Nonlinear $f$ yields a $v$ that may rotate the same perturbation differently depending on the original input, resulting in different projections.
This motivates the generalized notions of null \emph{sets}, defined element-wise for specific inputs.

\begin{definition}[Generalized null set (row-invariance / score-equivalence)]
\label{def:generalized_null_set_x}
Under the assumptions of Definition~\ref{def:zr_zn_of_x}, define the \emph{generalized null set} at $x$ by
\begin{align*}
\mathcal{X}_{\mathrm{null}}(x)
\;&\triangleq\;
\{\,x'\in\mathbb{R}^n:\ z_r(x')=z_r(x)\,\} \\
&=
\{\,x'\in\mathbb{R}^n:\ P_{\mathrm{row}}v(x')=P_{\mathrm{row}}v(x)\,\}.
\end{align*}
Equivalently, since \(U\) is invertible, \(\Sigma\) has positive diagonal
entries on \(\mathcal{R}(\Sigma^\top)\), and \(\Sigma P_{\mathrm{null}}=0\),
this is the score-equivalence set: $\mathcal{X}_{\mathrm{null}}(x)=\{\,x'\in\mathbb{R}^n:\ f(x')=f(x)\,\}$.
\end{definition}

\begin{definition}[Generalized row set (null-invariance)]
\label{def:generalized_row_set_x}
Under the assumptions of Definition~\ref{def:zr_zn_of_x}, define the \emph{generalized row set} at $x$ by
\begin{align*}
\mathcal{X}_{\mathrm{row}}(x)
\;&\triangleq\;
\{\,x'\in\mathbb{R}^n:\ z_{\mathrm{n}}(x')=z_{\mathrm{n}}(x)\,\} \\
&=
\{\,x'\in\mathbb{R}^n:\ P_{\mathrm{null}}v(x')=P_{\mathrm{null}}v(x)\,\}.
\end{align*}
\end{definition}


\begin{remark}[Oblique motion relative to generalized row/null sets]
\label{rem:oblique_motion}
By Definitions~\ref{def:zr_zn_of_x}--\ref{def:generalized_row_set_x}, $\mathcal X_{\mathrm{null}}(x)$ and $\mathcal X_{\mathrm{row}}(x)$ are level sets of the orthogonal projections $P_{\mathrm{row}}v(\cdot)$ and $P_{\mathrm{null}}v(\cdot)$ in the lifted space $\mathbb R^{\ell}$.  However, the input trajectory $x\mapsto v(x)$ typically varies \emph{both} components $(P_{\mathrm{row}}v(x),\,P_{\mathrm{null}}v(x))$ at once, so small input perturbations move transversely across both foliations.  This mismatch is the geometric reason that ``linear motion'' in the lifted coordinates need not correspond to 1D motion in input space.  It motivates defining row-directed traversals as constrained/regularized pullbacks that explicitly control drift in the null component (Figure~\ref{fig:conceptual_figure_lifting}).
\end{remark}


\subsubsection{One-dimensional Row Traversals}
\label{sec:one_geometry_three_uses}

The \emph{bijective} property of $V^\top$ in linear SVD renders row/null space traversals of $\Sigma$ trivial.
NLSVD's $v$ is \emph{injective}, and thus a linear interpolation in the pre-image of $\Sigma$ may not lie in the image of $v$.
This motivates the following notion of an approximate inverse that involves a projection onto the \emph{image} of $v$ before taking $v$'s well-defined left inverse, per the following two definitions.

\begin{definition}[Row-space projection onto the lifted image]
\label{def:row_projection_to_image}

Let $\mathcal{M}\triangleq \mathrm{Im}(v)\subset\R^{\ell}$ denote the lifted image, and let
$P_{\mathrm{row}}:\R^{\ell}\to \mathcal{R}(\Sigma^\top)$ be the orthogonal projection onto the row space of $\Sigma$
(e.g., $P_{\mathrm{row}}=\Sigma^\dagger\Sigma$).

For $z\in\R^{\ell}$, define the (possibly set-valued) \emph{row-metric projection} onto $\mathcal{M}$ as a map into the lifted image:
\begin{align*}
P_{\mathcal{M},\mathrm{row}}(z)
&\triangleq
\left\{\,v(x^\star)\in\mathcal{M}:\right.\\
&\qquad\left.
x^\star\in\argmin_{x\in\R^{n}}
\bigl\|P_{\mathrm{row}}v(x)-P_{\mathrm{row}}z\bigr\|_2\,\right\}.
\end{align*}
whenever the minimum is attained. Thus the optimization is over input points
$x\in\R^n$, but the returned value is the corresponding lifted point(s)
$v(x^\star)\in\mathcal{M}\subset\R^{\ell}$.
\end{definition}
\begin{definition}[Projection onto the lifted image]
\label{def:metric_projection_to_image}
Let $\mathcal{M}\triangleq \mathrm{Im}(v)\subset\mathbb{R}^{\ell}$ denote the lifted image.
For $z\in\mathbb{R}^{\ell}$, define the (possibly set-valued) \emph{metric projection} onto $\mathcal{M}$ by
\mbox{$P_{\mathcal{M}}(z)\;\triangleq\;\{v(x^\star):x^\star\in\argmin_{x\in\mathbb{R}^n}\|v(x)-z\|_2\}$}
whenever the minimum is attained.
\end{definition}
\begin{definition}[Projected pseudo-inverse of the lift]
\label{def:projected_pseudoinverse_v}
Let $\mathcal{M}=\mathrm{Im}(v)$ and let
$\tilde P_{\mathcal{M}}:\R^{\ell}\to\mathcal{M}$ be either a
single-valued choice from Definition~\ref{def:metric_projection_to_image} or a
learned approximation whose image lies in $\mathcal{M}$.  Assume a left inverse
$v^{-L}:\mathcal{M}\to\mathbb{R}^n$ is available on $\mathcal{M}$.
Define the \emph{projected pseudo-inverse} of $v$ as the map
$v^\dagger:\R^{\ell}\to\mathbb{R}^n$ given by $v^\dagger(z)\;\triangleq\; v^{-L}\!\bigl(\tilde P_{\mathcal{M}}(z)\bigr),$
so the projected lifted point is always mapped back into the original input
space $\mathbb{R}^n$.
\end{definition}
\RevisionComment{B22}{Fix metric projection to return lifted points in $\mathcal{M}$ (was typed as returning $x\in\mathbb{R}^n$); rename $\tilde P_{\mathcal{M},\mathrm{row}}$ to $\tilde P_{\mathcal{M}}$.}
Given that $v^{\dagger}(z)$ is, in general, non-convex in $z$, we propose two methods of approximation.

\subsubsection{Learned Decomposition Approach}
For $g:\mathbb{R}^n\to\R^{\ell}$, the intended inverse (Definition~\ref{def:projected_pseudoinverse_v}) can be approximated by training a parametric left inverse $g^{-L}_\theta$ 
to minimize reconstruction error 
on the range $g(\mathbb{R}^n)$, e.g.,
$\min_{\theta}\ \mathbb{E}_{x\sim \mathcal{D}}\bigl[\|g^{-L}_\theta(g(x)) - x\|_2^2\bigr],$
so that $g^{-L}_\theta$ implicitly learns both a projection onto $\mathrm{Im}(g)$ and an approximate inverse on that manifold.

\subsubsection{Approximate Pseudo-Inverse by Construction}
In any case where $f$ can be evaluated directly, a left inverse on the image $\mathcal{M}\triangleq\mathrm{Im}(v)$ is given by the
following ``kernel recovery'' rule.  For any $z\in\mathbb{R}^{\ell}$, write $z = [z_{\mathrm{m}}, z_{\mathrm{n}}]^\top \in \R^{m\times n}=\R^{\ell}$
and define the \emph{naive extended decoder}
$\ \hat{v}^{-L}:\mathcal{D}\to\mathbb{R}^n$ on the domain $\mathcal{D}\triangleq
\{z\in\mathbb{R}^{\ell}:z_{\mathrm{n}}\neq 0\}$
by
$\hat{v}^{-L}(z)\;\triangleq\;({\|z\|_2}/{\|z_{\mathrm{n}}\|_2})\,z_{\mathrm{n}}.$
On $\mathcal{M}\cap\mathcal{D}$ this agrees with the canonical left inverse
$v^{-L}$ from the construction, i.e.\ $\hat v^{-L}(v(x))=x$ for \(x\ne0\);
the origin can be assigned separately: \(\hat v^{-L}(0)\triangleq 0\).
\RevisionTieredReplace{M25}{\todo{every neural network has an NLSVD representation in which its nonlinearity is left invertible.}}{}{%
\begin{remark}
A factorization \(f=U\Sigma v\) need not make \(f\) invertible, because the linear factor \(U\Sigma\) may discard directions; all nonlinearity is contained in the injective map \(v\), which is invertible on its image. Thus every network covered by Theorem~\ref{thm:architecture} admits a representation in which all the nonlinearity is left-invertible.
\end{remark}%
}
\begin{remark}[Why this decoder can misbehave]
The rule that
$\hat{v}^{-L}(z)\;\triangleq\; ({\|z\|_2}/{\|z_{\mathrm{n}}\|_2})\,z_{\mathrm{n}}$
fixes the \emph{direction} of $\hat x$ by $z_{\mathrm{n}}$ and forces $\|\hat x\|_2=\|z\|_2$.
It depends on $z_{\mathrm{m}}$ only through the scalar norm $\|z\|_2$.  Consequently, purely ``row'' changes in $z$
(i.e., changing $z_{\mathrm{m}}$ while $z_{\mathrm{n}}$ remains fixed)
can rescale $\hat x = \hat{v}^{-L}(z)$ even though they do not affect $\Sigma z$,
while purely ``null'' changes (changing $z_{\mathrm{n}}$ with $z_{\mathrm{m}}$ fixed) can alter $\hat x$ arbitrarily even though
they are annihilated by $\Sigma$.
The extension is ill-conditioned near $\{z:\|z_{\mathrm{n}}\|_2=0\}$, where the scaling
factor blows up; thus, off-manifold decoding is in general not related to the metric projection pullback
(Definition~\ref{def:metric_projection_to_image}).
\end{remark}

\begin{remark}[ML tasks as row and null set operations]
In the NLSVD coordinates $f(x)=U\Sigma v(x)$, several downstream tasks reduce to choosing a linear traversal in the $\Sigma$-pre-image space and pulling it back through $v$ to input space via a (projected) left inverse.
For example, output-preserving synthetic generation at \(x\) can be understood as
traversing \(v^{\dagger}(z_r(x)+\mathcal{N}(\Sigma))\) to vary the input
while preserving outputs as much as the geometry of \(v\) permits (Figure~\ref{fig:svdnet_visual_examples}, left panel).
Similarly, both model explainability and adversarial robustness can be understood as \emph{row-space} phenomena: explainability seeks \emph{structured} traversals aligned with $\mathcal{R}(\Sigma^\top)$ that induce interpretable score changes (Figure~\ref{fig:svdnet_visual_examples}, right panel), while model robustness can be understood as \emph{worst-case} (often constrained) row-space traversals that maximize score change under a norm or feature-invariance constraint.
\end{remark}
\RevisionComment{B21}{Proofread end of Section~2: subsubsection titles, remark formatting, and M25 remark grammar.}

\RevisionRemove{B29}{%
\begin{remark}
Comparing to the original SVD, NLSVD has known limitations.
Theorem~\ref{thm:nlsvd} relaxes the properties of $V^\top$ from the linear SVD---which requires $V^\top$ to be \emph{bijective} and \emph{inner-product preserving}---to a lifting function $v(x)$ that is only required to be \emph{injective} and \emph{norm-preserving}.
Several other properties of the classical linear SVD are relinquished to accommodate the broader class of functions with a bounded 2-induced norm.
Notably, the largest singular value $\sigma_1$ in the proposed construction is not necessarily a tight upper bound on the $\|f\|_{2\to2}$ norm, as seen in the constraint in \eqref{eq:sigma_constraint}.
This contrasts with the classical SVD of a linear operator, where the largest singular value is exactly the operator's induced norm.
In particular, this bound may be loose by a multiplicative factor as large as $\sqrt{m}$, since $\|f(x)\|_2^2 = \sum f_i(x)^2 \leq m \cdot \max_i f_i(x)^2$.
This same \(\sqrt m\) scale appears in the coordinate construction.
Writing \(\alpha_i=\|f_i\|_{2\to2}\), for any \(\epsilon>0\) the simultaneous
choice \(\sigma_i=(\sqrt m+\epsilon)\alpha_i\) for nonzero \(\alpha_i\) (and
any positive \(\sigma_i\) when \(\alpha_i=0\)) gives
$\sum_i {\alpha_i^2}/{\sigma_i^2}
\le {m}/{(\sqrt m+\epsilon)^2}<1$,
so the construction makes coordinate gains tight only up to an arbitrarily
small excess over the worst-case \(\sqrt m\) factor.  This scale is unavoidable
for proportional coordinate bounds in the aligned worst case: if all coordinate
gains are attained together and \(\sigma_i=c\alpha_i\), then the constraint
\eqref{eq:sum_norms} requires \(m/c^2<1\), hence \(c>\sqrt m\).
\end{remark}%
}
\RevisionComment{B29}{Known-limitations remark moved to Supplement after the NLSVD proof; Section~4 keeps a one-line pointer.}

\section{NLSVDNet: Training Explicit Components}\label{sec:svdnet}
\RevisionReplace{B23}{%
Here we present a method for training a neural network $f:\R^m \to \R^n$
with an explicit NLSVD-style factorization.
The full neural network is then expressed as $f(x) = K g(x)$, where $K$ is the final linear layer, and $g$ contains all previous layers, including all nonlinearities.
We will force $\|g(x)\| = \|x\|$ by simply defining the desired network layers $g_0(x)$, and setting $g(x) = \|x\| g_0(x)/\|g_0(x)\|$ for each $x$, if \(g_0(x)\neq 0\), and \(g(x)=0\) otherwise.
Next, we simultaneously train a second neural network $g^{-L}: \mathbb{R}^{\ell} \to \mathbb{R}^m$ to serve as the left inverse of $g$.
We augment the primary loss function with a term that enforces the left invertibility of $g$, such as $\|g^{-L}(g(x))-x\|_2^2$, allowing $g^{-L}$, $g$, and $K$ to be optimized concurrently.
Finally, we compute the SVD of the linear layer $K = U \Sigma V^\top$.
If \(g\) is norm-preserving and injective, then \(f(x)=U\Sigma v(x)\),
where \(v(x)\triangleq V^\top g(x)\), satisfies the NLSVD lift constraints.
The reconstruction loss encourages, rather than proves, this exact condition.
For finite-gain maps satisfying the NLSVD hypotheses after anchoring when needed,
Theorem \ref{thm:nlsvd} ensures that enforcing such a decomposition exists, in theory,
without loss of generality so long as $\ell = m+n$.%
}{%
Here we present a method for training a neural network $f:\R^m \to \R^n$
with an explicit NLSVD-style factorization.
Write $f(x)=Kg(x)$, where $K$ is the final linear layer and $g$ contains all previous layers (including all nonlinearities).
We enforce $\|g(x)\|_2=\|x\|_2$ by defining the desired network layers $g_0(x)$ and setting
$g(x)=\tfrac{\|x\|_2\,g_0(x)}{\|g_0(x)\|_2}$ when \(g_0(x)\neq 0\), and $g(x)=0$ otherwise.
Next, we simultaneously train a second network $g^{-L}:\mathbb{R}^{\ell}\to\mathbb{R}^m$ to serve as a left inverse of $g$.
We augment the primary loss with a reconstruction term such as $\|g^{-L}(g(x))-x\|_2^2$, so that $g^{-L}$, $g$, and $K$ can be optimized concurrently.
Finally, we compute the SVD of the linear layer, $K=U\Sigma V^\top$.
If \(g\) is norm-preserving and injective, then \(f(x)=U\Sigma v(x)\) with
\(v(x)\triangleq V^\top g(x)\) satisfies the NLSVD lift constraints.
The reconstruction loss encourages, rather than proves, this exact condition.
For finite-gain maps satisfying the NLSVD hypotheses (after anchoring when needed),
Theorem~\ref{thm:nlsvd} guarantees that such a decomposition exists whenever $\ell=m+n$.%
}
\RevisionComment{B23}{Proofread NLSVDNet lead-in: restore title NLSVD, norms with subscript 2, tfrac, Theorem-ref spacing, and grammar on the existence claim.}

\subsection{NLSVDNet Decoding is ``Metric-meaningful''}
\RevisionTieredReplace{M27}{\todo{movement in the latent space corresponds to the same distance in the input space}}{}{The norm-preserving representation calibrates latent radius to input radius. In particular, \(\|g(x)\|_2=\|x\|_2\) measures both radii from the origin. This identity does not assert preservation of arbitrary pairwise distances or path lengths.}
\RevisionReplace{B24}{%
This left-inverse of $v$ in the NLSVD of a neural network, or equivalently $g$ in the NLSVDNet, is a key distinction from an ordinary neural
latent space: a generic embedding may map distinct inputs to the same latent representation,
in which case no strict inverse exists on the data image.
The NLSVD lift is constructed to be injective, and NLSVDNet encourages the learned representation to retain that property, making the projected pseudo-inverse a structured pullback to $\mathbb{R}^m$.
Neural networks equipped with decoding architectures also learn a reconstruction heuristic for latent spaces.
While autoencoders indeed can render a left-inverse for a latent representation of an input, they do not
specify how gain is distributed between the encoder and decoder, so latent directions can be arbitrarily warped while preserving reconstruction quality.%
}{%
The left inverse of $v$ in an NLSVD---equivalently, of $g$ in NLSVDNet---is a key distinction from an ordinary neural latent space: a generic embedding may map distinct inputs to the same latent representation, in which case no strict inverse exists on the data image.
The NLSVD lift is constructed to be injective, and NLSVDNet encourages the learned representation to retain that property, making the projected pseudo-inverse a structured pullback to $\mathbb{R}^m$.
Autoencoders likewise learn a reconstruction map for latent codes, but reconstruction alone does not specify how gain is distributed between encoder and decoder, so latent directions can be arbitrarily warped while preserving reconstruction quality.%
}
\RevisionReplace{M27}{The NLSVDNet provides both,
and by preserving the norm in the embedding, ensures that perturbations are proportional to perturbations in the input space. We exploit this in the next section \ref{sec:adversarial_experiments}.}{NLSVDNet provides both. Norm preservation calibrates representation radius to input radius, while \(K\) identifies which components affect the output.}
\RevisionComment{B24}{Proofread metric-meaningful subsection: tighten autoencoder contrast; keep accepted M27 green text.}

\begin{proposition}[Necessity of $v$'s norm preservation for optimal latent traversal]
\label{prop:latent-metric-nonidentifiability}
Let \(f=Kg\) on a domain \(\Omega\), and suppose
\(g^{-L}(g(x))=x\) for all \(x\in\Omega\).
For any invertible matrix
\(S\in\R^{\ell\times\ell}\), define $\tilde g\triangleq Sg,\  \tilde K\triangleq KS^{-1},\ 
\tilde g^{-L}(z)\triangleq g^{-L}(S^{-1}z)$ for $z\in \tilde g(\Omega)$.
Then, \(\tilde K\tilde g=f\) and \(\tilde g^{-L}(\tilde g(x))=x\) on
\(\Omega\).  However, for any \(x,x'\in\Omega\), \mbox{$\|\tilde g(x')-\tilde g(x)\|_2=\|S(g(x')-g(x))\|_2$},
while the pulled-back perturbation remains \(x'-x\).  In particular, choosing
\(S=cI\) changes every nonzero latent perturbation norm by the arbitrary factor
\(c>0\) without changing either the input-output map or the exact left inverse.
\end{proposition}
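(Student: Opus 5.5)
The plan is to verify each claim by direct substitution; every step is a one-line algebraic identity that uses only the invertibility of \(S\) and the hypothesis \(g^{-L}(g(x))=x\) on \(\Omega\).

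\emph{Output map.} For any \(x\in\Omega\) we compute
\(\tilde K\tilde g(x)=KS^{-1}Sg(x)=Kg(x)=f(x)\),
using associativity of matrix products and \(S^{-1}S=I_\ell\).

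\emph{Left inverse.}
1. Check well-definedness: \(\tilde g^{-L}\) is only specified on \(\tilde g(\Omega)\). Any \(z\in\tilde g(\Omega)\) has the form \(z=Sg(x)\) for some \(x\in\Omega\).
2. Then \(S^{-1}z=g(x)\in g(\Omega)\), which is exactly where \(g^{-L}\) is applied in the hypothesis.
3. Hence \(\tilde g^{-L}(\tilde g(x))=g^{-L}(S^{-1}Sg(x))=g^{-L}(g(x))=x\).

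I expect step 1 to be the only point needing care. The resolution is that invertibility of \(S\) makes \(z\mapsto S^{-1}z\) a bijection from \(\tilde g(\Omega)\) onto \(g(\Omega)\). Consequently \(\tilde g\) is injective on \(\Omega\) exactly when \(g\) is.

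\emph{Latent perturbation versus pulled-back perturbation.}
1. By linearity of \(S\), \(\tilde g(x')-\tilde g(x)=S\bigl(g(x')-g(x)\bigr)\). Taking norms gives the stated identity.
2. The pulled-back points are \(\tilde g^{-L}(\tilde g(x'))=x'\) and \(\tilde g^{-L}(\tilde g(x))=x\), by the left-inverse step above. So the input-space perturbation is \(x'-x\) regardless of \(S\).

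\emph{The case \(S=cI\).} For \(c>0\), the matrix \(S=cI\) is invertible, and
\(\|\tilde g(x')-\tilde g(x)\|_2=c\,\|g(x')-g(x)\|_2\).
Every nonzero latent perturbation is therefore scaled by the arbitrary factor \(c\), while \(f\) and the exact left inverse are unchanged.

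I would close with a remark on what this shows. Reconstruction and output fidelity alone cannot pin down the latent metric. Only an additional normalization, such as the norm-preservation constraint \(\|g(x)\|_2=\|x\|_2\) used in NLSVDNet, removes the scaling freedom: with \(S=cI\) and \(c\neq1\), the rescaled map gives \(\|\tilde g(x)\|_2=c\|x\|_2\) and so violates that constraint.
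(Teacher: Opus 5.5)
Your proposal is correct and matches the paper's proof, which is the same direct substitution: $\tilde K\tilde g=KS^{-1}Sg=Kg=f$ and $\tilde g^{-L}(\tilde g(x))=g^{-L}(S^{-1}Sg(x))=g^{-L}(g(x))=x$. Your explicit derivation of the perturbation identity from linearity of $S$, and your treatment of the case $S=cI$, spell out steps the paper leaves implicit.
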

\noindent\textit{Proof.}
Substitute:
\(\tilde K\tilde g=KS^{-1}Sg=Kg=f\) and
\(\tilde g^{-L}(\tilde g(x))=g^{-L}(S^{-1}Sg(x))=g^{-L}(g(x))=x\). \qed

\section{Constructing NLSVD from Black-box $f$}
\label{sec:construction}
\RevisionTieredReplace{M28}{\todo{reference the theorems that give us permission to do this}}{}{Theorem~\ref{thm:nlsvd} guarantees an exact NLSVD for maps with finite \(2\)-induced gain. Theorems~\ref{thm:compact-bound} and \ref{thm:architecture} establish this condition for the anchored neural-network classes considered here. Algorithm~\ref{alg:nlsvd-construction} turns the constructive proof of Theorem~\ref{thm:nlsvd} into a black-box estimation procedure.}
We consider estimating an NLSVD of a black-box classifier or regressor $f$ with bounded $2$-induced norm.
By ``black-box'' we mean we can observe $f(x)$ for any $x$, but do not have access to $f$'s weights.
\RevisionOptionalRemove{M28}{This section assumes \(f\), or its anchored deviation
\(f_\star(h)=f(x_\star+h)-f(x_\star)\), is in one of the finite-gain classes identified in Theorems~\ref{thm:compact-bound}, \ref{thm:architecture}.}
\begin{algorithm}[t]
\caption{Coordinatewise Gain-Lifting Construction}
\label{alg:nlsvd-construction}
\footnotesize
\begin{algorithmic}[1]
\INPUT Black-box $f: \mathbb{R}^m \to \mathbb{R}^n$, dataset $X \subset \mathbb{R}^m$, $\epsilon > 0$.
\OUTPUT Unitary (permutation) matrix $U$, diagonal matrix $\Sigma$, norm-preserving injective lift $v(x)$, left inverse $v^{-L}$.
\STATE Estimate gains $\alpha_i=\max_{x\in X}\tfrac{\|f_i(x)\|_2}{\|x\|_2}$. 
\STATE Sort indices $q$ so $\alpha_{q(1)}\ge\cdots\ge\alpha_{q(n)}$, and let $U$ be the corresponding permutation.
\STATE Set $\sigma_i\triangleq  \alpha_{q(i)}\sqrt{\tfrac{n}{1-\epsilon}}$ and $\Sigma\triangleq  \diag(\sigma_1,\ldots,\sigma_n)$.
\STATE Compute $\gamma(x)=1-\sum_{j=1}^{n}\tfrac{f_{q(j)}(x)^2}{\sigma_j^2\|x\|_2^2}$ and $\delta_i(x)=\tfrac{f_{q(i)}(x)}{\sigma_i\sqrt{\gamma(x)}}$.
\STATE Define $x_\delta(x)\triangleq [\vec{\delta}(x),x]^\top\in\mathbb{R}^{n\times m}$ and $v(x) \triangleq \tfrac{\|x\|_2\,x_\delta(x)}{\|x_\delta(x)\|_2}$.
\STATE For $z=[z_\delta,z_x]^\top\in\mathrm{Im}(v)\subset\mathbb{R}^{n\times m}$, set $v^{-L}(z)\triangleq \tfrac{\|z\|_2\,z_x}{\|z_x\|_2}$.
\end{algorithmic}
\end{algorithm}
\RevisionComment{B25}{Algorithm~1: $\mathrm{Im}(v)$; product space $\mathbb{R}^n\times\mathbb{R}^m$ (not $\mathbb{R}^{n\times m}$); restore ``define'' on the left-inverse line. Note: input/output dims here are $m\to n$, flipped relative to Theorem~\ref{thm:nlsvd}'s $n\to m$.}

We provide a cleaner proof (\RevisionReplace{B08}{Appendix}{Supplement}~\ref{pf:thm-nlsvd})
to illuminate the construction of NLSVD components.
Once one can identify $\sigma_i$ satisfying Eq.~(\ref{eq:sum_norms}), defining $U$, $\Sigma$, and $v$ is straightforward.
The central task is estimating $\|f_i\|_{2\to2}$.
Given a ``representative'' dataset $X$ (not necessarily $f$'s training set), one can simply observe the norms of $f_i$.
(Here ``representative'' means how well $\|f_i\|_{2\to2}\big|_X$ approximates $\|f_i\|_{2\to2}$.)
Notably, since $f(X)$ can be computed via batched inference, this step is relatively fast.
If gradients or reliable finite differences are available, the sampling estimate can be strengthened by gain search, e.g., gradient ascent on
\(\phi_i(x)=|f_i(x)|/\|x\|_2\) (or the anchored analogue), initialized from high-ratio data points.
This yields an empirical norm for the Algorithm~\ref{alg:nlsvd-construction} construction.
The construction yields a left inverse for $v$, and a pseudo-inverse for $f$.
\RevisionAdd{B28}{%
Supplement~\ref{sec:construction-validation} (Table~\ref{tab:nlsvd-construction-results}) checks the construction on MNIST and Fashion-MNIST NLSVDNet classifiers treated as black boxes.
Floating-point reconstruction, left-inverse, and norm-preservation errors stay near machine precision, so numerical error does not obstruct the constructive identities.
The substantive metric is gain recovery relative to the singular values of $K$: data sampling alone recovers about $64\%$, while gradient gain search raises this to $97.1\%$ (MNIST) and $95.8\%$ (Fashion-MNIST).%
}
\RevisionComment{B28}{Short summary: identity checks vs gain-recovery findings; details in supplement.}

\subsection{Discussion \& Limitations}

\subsubsection{What if $\|f_i\|_{2\to2}$ is under-estimated?}
Operationally, Algorithm~\ref{alg:nlsvd-construction} estimates each
\(\sigma_i\) as an upper bound on the observed ratios
\(|f_i(x)|/\|x\|_2\) (or the anchored analogue
\(|f_{\star,i}(h)|/\|h\|_2\)).
This requires possibly non-convex optimization over the input space, which presents a challenge for practical implementations.
If only black-box queries are available, the same quantities can be estimated from representative query batches.
The construction is also partially self-checking.
The lift contains the square root of $\gamma$ from step~4 of Algorithm~\ref{alg:nlsvd-construction}.
Therefore, an underestimated
\(\sigma_i\) is exposed by any validation point for which the residual becomes
negative, i.e., the construction will yield complex values, often causing an error in code and easily checked in use.
In experiments,
one should therefore report the most negative value of $\gamma$, since positive $\gamma$ is exactly the
finite-sample certificate that the learned \(\Sigma\) is large enough on the
tested domain.


\subsubsection{Can the black-box decomposition recover NLSVDNet?}
That is, can this construction recover NLSVDNet's $K$ and $g$?
Unfortunately, the construction is too structurally limited:
first, $U$ is a permutation matrix, not an arbitrary unitary;
second, Eq.~(\ref{eq:sum_norms}) must be satisfied, resulting in $\sigma_i$ that are inflated relative to the actual coordinate gains (up to a $\sqrt{m}$ factor in the aligned worst case; Supplement~\ref{rem:nlsvd-svd-limitations});
third, $v$ is limited to its specific structure in Eqs.~(\ref{eq:v_support})--(\ref{eq:v}).
In particular, the reconstruction algorithm cannot recover the internal parameters $(K,g)$ of an NLSVDNet factorization $f(x)=Kg(x)$.
This is demonstrated by a simple counterexample in \RevisionReplace{B08}{Appendix}{Supplement}~\ref{sec:construction-limitation}.
\RevisionReplace{M04}{Nevertheless, the NLSVD construction yields geometric properties and input-output sensitivities giving valuable and useful insights, which we exploit in Section \ref{sec:adversarial_experiments}.}{Nevertheless, the construction provides a left inverse, a norm-preserving lift, and finite-sample gain diagnostics.}
We note that the construction, applied to linear $f$, does not yield a linear $v$, because it does not find a general $U$.
An example is in \RevisionReplace{B08}{Appendix}{Supplement}~\ref{sec:nlsvd-linear-example}.
\RevisionComment{B27}{Proofread structural-limitation paragraph: fix ``i.e,.'', ``Eq.s'', and wording.}

\section{Experiments}\label{sec:experiments}

\begin{figure}[t]
\centering
\caption{\textbf{NLSVDNet pullbacks.} In each subplot, the top row is MNIST and the bottom row is FashionMNIST. Left: samples from the extended null space of a classification NLSVDNet $f=Kg$. Right: linear interpolation in output space visualized through the learned left inverse $g^{-L}$.
}
\label{fig:svdnet_visual_examples}

\begin{subfigure}[t]{0.48\linewidth}
\centering
\includegraphics[width=\linewidth]{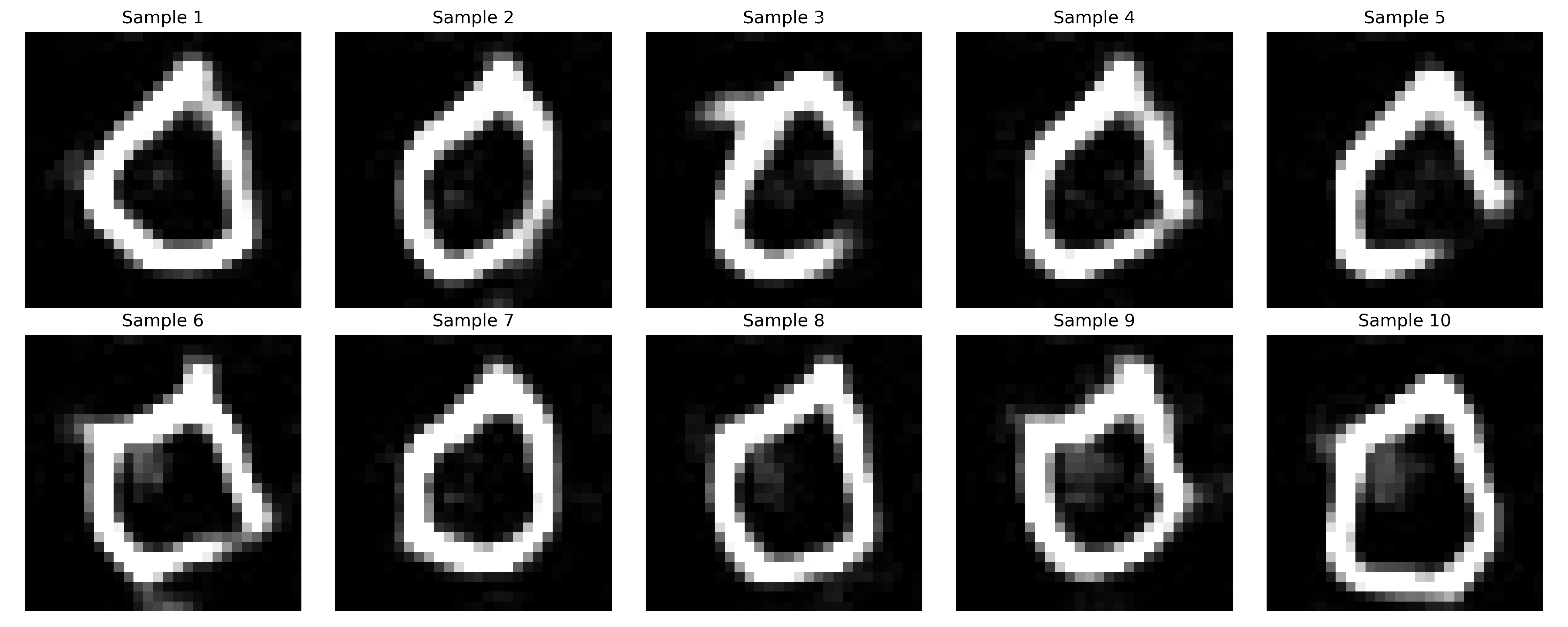}
\vspace{0.5ex}

\includegraphics[width=\linewidth]{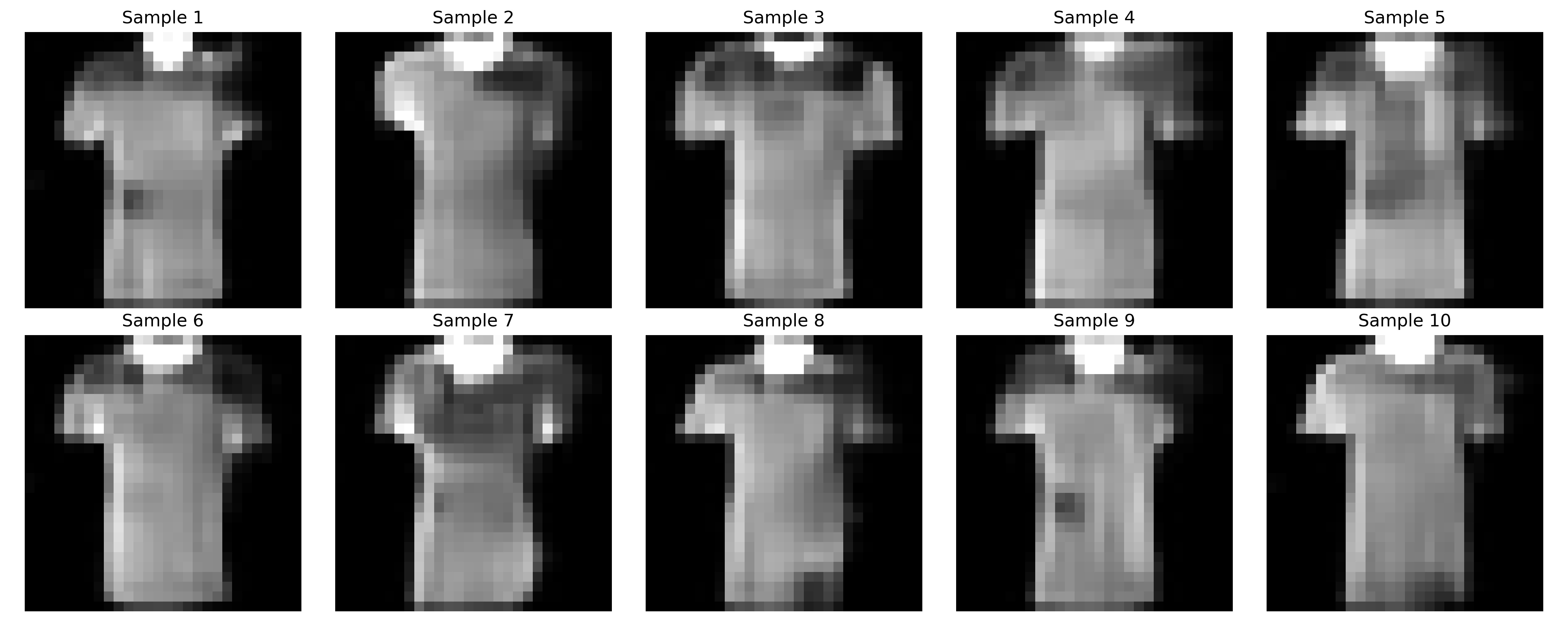}
\caption{Extended ``null space'' samples. For fixed logit $y_1$ and $z_{\mathcal{N}}\in\mathcal{N}(K)$, each image is generated as $g^{-L}(K^\dagger y_1+z_{\mathcal{N}})$. Nullspace pullback provides exploration and visualization of unrealized latent-spaces representations of each class.}
\label{fig:null_space_main}
\end{subfigure}
\hfill
\vrule width 0.4pt
\hfill
\begin{subfigure}[t]{0.48\linewidth}
\centering
\includegraphics[width=\linewidth]{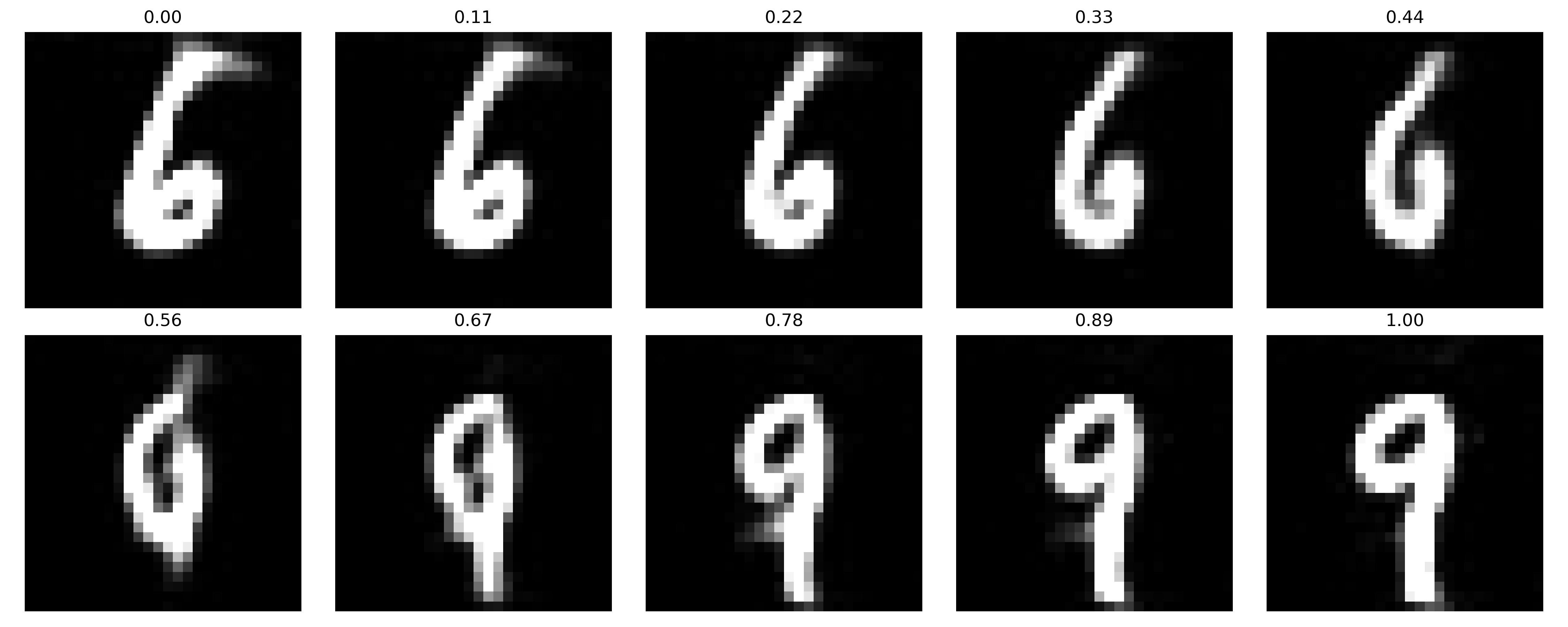}
\vspace{0.5ex}

\includegraphics[width=\linewidth]{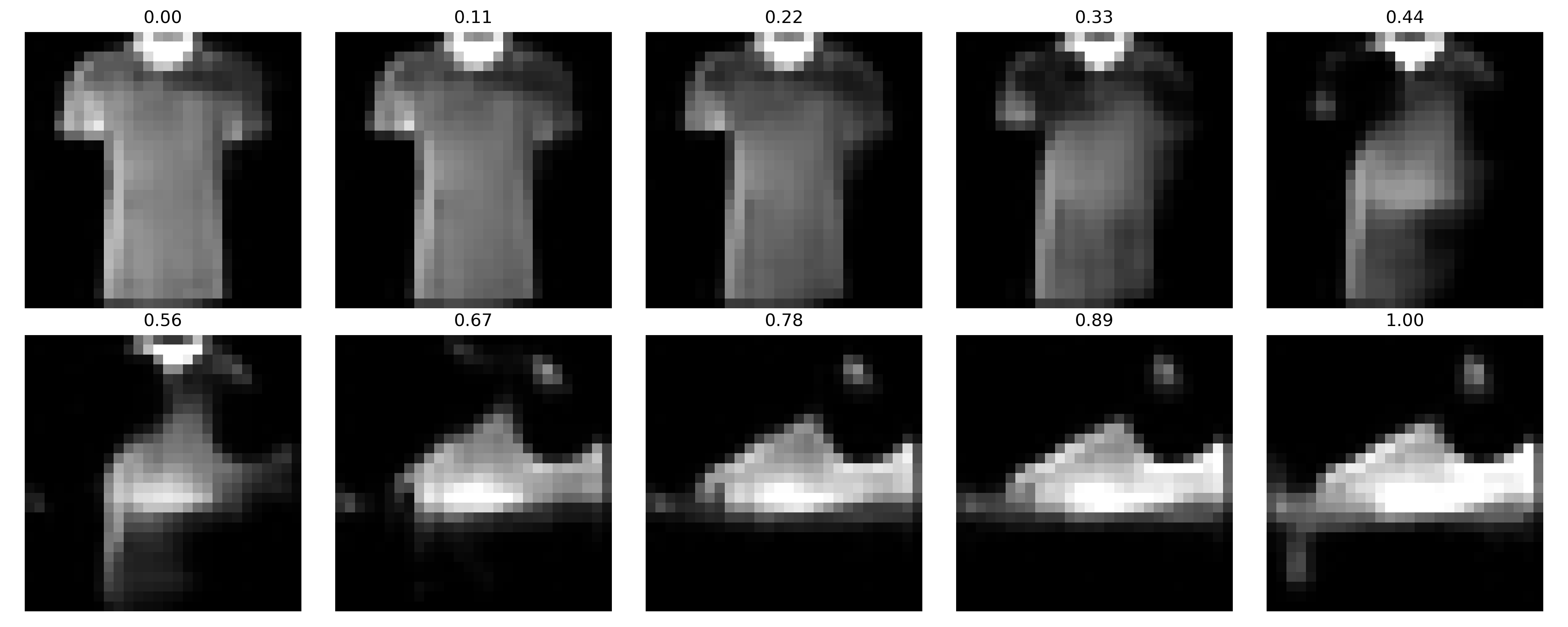}
\caption{Output-space interpolation. For logits $y_1,y_2$, set $y_t=(1-t)y_1+ty_2$; each image is generated as $g^{-L}(K^\dagger y_t)$. Row space pullback images show a non-linear deformation that is informative about the model's out-of-distribution performance.
}
\label{fig:row_space_main}
\end{subfigure}
\end{figure}

\subsection{NLSVDNet Examples}\label{sec:svdnet-examples}
\RevisionTieredReplace{M27}{\todo{make sure to connect back to proposition 1}
As with any en-/decoder pair that is approximately information-preserving, the row and null spaces of the final linear layer of the network (after the encoding) are meaningful relative to the input space, e.g., Definitions \ref{def:generalized_null_set_x} and \ref{def:generalized_row_set_x}.}{As with any en-/decoder pair that is approximately information-preserving, the row and null spaces of the final linear layer of the network (after the encoding) are meaningful relative to the input space, as described in Definitions \ref{def:generalized_null_set_x} and \ref{def:generalized_row_set_x}.}{Proposition~\ref{prop:latent-metric-nonidentifiability} shows that a decoder does not by itself assign a stable scale to a latent traversal. NLSVDNet rules out such arbitrary rescaling by requiring \(\|g(x)\|_2=\|x\|_2\). 
The radius of each decoded row- or null-space sample can therefore be interpreted relative to the input origin, while \(K\) determines which components affect the output.}
Figure \ref{fig:svdnet_visual_examples} provides visual examples of the distinction between row and null space traversals in the latent space: the first fixes a logit $y_1$ and applies $g^{-L}$ to elements in $K^{\dagger}(y_1+\mathcal{N}(K))$, while the second exploits the row-space of $K$ to interpolate between two logits.
Like en-/decoder architectures, the NLSVD components realized via NLSVDNet training facilitate model explainability---but unlike traditional encoding schemes, the embeddings are calibrated by Euclidean distance in the input space, meaning the SVD of the final layer can be used to rank the relative importance of embedded features.
See the Appendix \ref{sec:svdnet-examples-details} for training details.

\RevisionRemove{M27}{\todo{add a 2-d plot showing the distance traveled in latent space and distance traveled in input space. And then compare to autoencoder distances.}}
\RevisionComment{M27}{Radial calibration vs matched AE is baked into 52-nlsvd-vs-ae.tex. The old 51-radial-calibration-brian revision block is no longer input from main.}

\subsection{Radial Calibration AutoEncoder Comparison}\label{sec:radial-calibration}
\begin{figure}[t]
\centering
\caption{\textbf{Raw decoder radius along row- and null-space traversals.}
NLSVDNet remains on the radius-equality line, while the architecture-Matched AutoEncoder (Matched AE) does not calibrate decoded radius to latent radius.}
\label{fig:radial-distance-row-null}
\includegraphics[width=.9\linewidth]{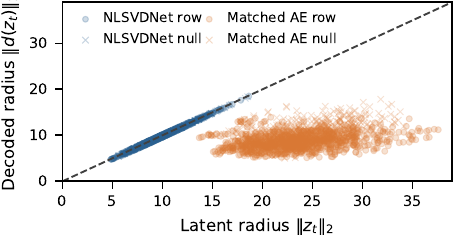}
\end{figure}

We isolate the decoder calibration used by these traversals by training an NLSVDNet and an architecture-matched autoencoder on MNIST. The models each contain \(1{,}622{,}809\) trainable parameters, begin from identical weights, receive the same minibatches, and attain held-out accuracies of \(97.45\%\) and \(97.55\%\), respectively. Figure~\ref{fig:radial-distance-row-null} compares latent radius with the radius of the raw decoded output after moving through the row and null spaces of the final linear layer. The mean relative radial error for NLSVDNet is \(4.13\times10^{-8}\) in the row space and \(4.50\times10^{-8}\) in the null space, compared with \(0.621\) and \(0.572\) for the matched autoencoder. The measurement uses radius from the zero input. Pairwise distance, traversal length, and arc length remain outside its scope. The result establishes radial calibration of the ambient decoder without asserting that the sampled latent points lie on the learned encoder image.
This exhibits a clear contrast between AEs and NLSVDNet approaches.

  \ifrevisionmarked
    \begingroup
    \color{RevisionRemoveRed}%
    \subsection{Input Perturbation Attack Experiment}
\label{sec:adversarial_experiments}
\todo{connect back to the proposition about distances being meaningful in the latent space.}
\todo{refer to C\&W as seminal}
Here we present an example of how the geometric insights made possible by NLSVD can be used to find an elegant and effective solution in a classical machine learning task---black-box adversarial input perturbation (or counterfactual generation \cite{zufiria2024multicriteria}).\IssueRef{M04}\IssueRef{M06}
Given a classifier $f$ and input $x_0$ with predicted class $u_0=\argmax_j f_j(x_0)$, we seek a small $\eta$ such that $\argmax_j f_j(x_0+\eta)\neq u_0$.
Unlike iterative high-dimensional attacks such as C\&W \cite{carlini2017towards}, the NLSVD estimate $f(x)=U\Sigma v(x)$ gives a guided 1D search: writing $u[i]=Ue_i$, the lifted boundary between $u_0=u[i_0]$ and target $u[i]$ is crossed when $\text{Gap}_i = \sigma_{i_0} v_{i_0}(x) - \sigma_i v_i(x) < 0$. Our attack chooses $i^*\neq i_0$ as the target class.
When $i_* = \argmin_{i}\text{Gap}_i$
the attack targets the closest alternate class in the model's latent space.\IssueRef{M05}
Next, estimate the contrastive gain direction\IssueRef{M07}
$\mathbf{d}=\nabla_x[\sigma_i v_i(x)-\sigma_{i_0}v_{i_0}(x)]_{x=x_0}$ by finite differences, and performs a 1D search over $\eta(r)=r\mathbf d/\|\mathbf d\|$.
The explicit algorithm is given in Appendix \ref{sec:algorithms}; the baseline is the original C\&W $L^2$ implementation, which optimizes $\min_{\eta}\|\eta\|_2^2+c\,\mathcal L(f(x_0+\eta))$ using ADAM and binary search over $c$.\footnote{\url{https://github.com/carlini/nn_robust_attacks/blob/master/l2_attack.py}\ .}\IssueRef{M01}\IssueRef{M02}


The NLSVD is computed once on \ManuscriptPerturbationNLSVDPoints \; held-out data points for each dataset.
We then evaluate on \ManuscriptPerturbationEvalPointsMnistFashion \; samples for MNIST and Fashion-MNIST, and on \ManuscriptPerturbationEvalPointsCifar \; samples for each of CIFAR-10 and CIFAR-100.
Detailed success, perturbation, and query metrics are reported in Table \ref{tab:perturbation-attack-results}.\IssueRef{M03}\IssueRef{M10}
\begin{table*}[h]
\centering
\begin{threeparttable}
\caption{{\bf Results of un-targeted perturbation attacks} compare the ``black-box'' NLSVD Directional Search (using reconstructed $U\Sigma v$) and \citealt{carlini2017towards} (C\&W) $L^2$ attack. The C\&W baseline uses the authors' original implementation with \texttt{MAX\_ITER = \ManuscriptCWMainMaxIterations}; Appendix \ref{sec:cw-maxiter-full} reports the larger-budget setting \texttt{MAX\_ITER = \ManuscriptCWBigMaxIterations}.}
\label{tab:perturbation-attack-results}
\revdel{White-box NLSVD Directional Search columns using $f=Kg$ were removed to avoid mixing access models in the main comparison.}
{
\setlength{\tabcolsep}{3pt}
\footnotesize
\begin{tabular}{lcccccccc}
\toprule
& \multicolumn{2}{c}{\textbf{MNIST}} & \multicolumn{2}{c}{\textbf{Fashion-MNIST}} & \multicolumn{2}{c}{\textbf{CIFAR-10}} & \multicolumn{2}{c}{\textbf{CIFAR-100}} \\
& $U\Sigma v$ & {C\&W} & $U\Sigma v$ & {C\&W} & $U\Sigma v$ & {C\&W} & $U\Sigma v$ & {C\&W} \\
\cmidrule(lr){2-3} \cmidrule(lr){4-5} \cmidrule(lr){6-7} \cmidrule(lr){8-9}
Success percent (\%) & \textbf{97.00} & 90.40 & \textbf{96.60} & 91.60 & \textbf{100.00} & 87.00 & \textbf{100.00} & 99.00 \\
Avg. Perturbation Norm & 18.30 & \textbf{13.47} & \textbf{6.58} & 11.53 & \textbf{0.22} & 13.70 & \textbf{0.19} & 14.89 \\
Avg. Queries/Sample & 1413.5 & \textbf{1360.0} & \textbf{1359.6} & 1360.0 & 3076.2 & \textbf{1360.0} & 3075.9 & \textbf{1360.0} \\
\bottomrule
\end{tabular}
}
\end{threeparttable}
\end{table*}

The black-box NLSVD attack achieves higher success than C\&W on all four datasets;
its perturbation norm is larger on MNIST, but one to three orders of magnitude smaller on Fashion-MNIST, CIFAR-10, and CIFAR-100, for similar to $1-2\times$ queries.\IssueRef{M08}\IssueRef{M09}\IssueRef{M10}
Average NLSVD wall-clock times are \ManuscriptBlackBoxMnistTimeSec s, \ManuscriptBlackBoxFashionTimeSec s, and \ManuscriptBlackBoxCifarTimeSec s on MNIST, Fashion-MNIST, and CIFAR-10, respectively.
\todo{maybe add in a model inversion (training data recovery) example????}
    \endgroup
    \IssueRef{M01}%
  \else
    \ifrevisionapplychanges
    \else
    \fi
  \fi

\RevisionComment{M01}{The full attack section and table are proposed for removal. The issue register retains the corrected numerical comparison, the norm-accounting error, the access-model mismatch, and the complete algebraic reduction.}
\subsection{Model Bias Detection} \label{sec:bias}
The NLSVD decomposition also exposes geometric observables for studying dataset imbalance and representation collapse.
Intuitively, $\sigma_i$ weight features to influence the output classes and may provide indicators of bias.
To examine the relationship, for each of MNIST, Fashion-MNIST, and CIFAR-10 we build biased variants by keeping a target class fully represented while under-sampling all other classes, across eight sampling ratios (log-scaled), and train one NLSVDNet per variant ($10$ classes $\times 8$ sampling ratios).
\RevisionAdd{M20}{ This intervention changes class proportions, training-set size, and the number of optimizer updates together.}\RevisionReplace{M17}{ Because the network factors as $f = K\,g$ with $K = U\Sigma V^{\top}$, the NLSVD cleanly separates the anisotropic gain structure $\Sigma$ from the nonlinear representation geometry $g(x)$, letting us probe each side directly---quantities that are typically entangled in ordinary networks.}{ Because \(f=K\,g\), we report the singular structure (\textit{gain}) of the head \(K\) separately from labeled feature statistics (\textit{representation}) computed on \(g(x)\).}
\RevisionComment{M20}{The observed trends respond to the complete under-sampling protocol. The experiment does not isolate class proportion from the accompanying changes in data volume and training updates.}

\begin{figure}[t]
\centering
\caption{\textbf{Model bias detection with NLSVD.}
NLSVDNets $f=Kg$ trained under increasing class imbalance.
As the minority sampling ratio decreases (stronger imbalance; leftward on the log $x$-axis), mean bias metrics are shown for MNIST, Fashion-MNIST, and CIFAR-10.
Shaded bands are $\pm 1$ std across target-class choices.
Left: gain-based metric from $K$; right: representation-based metric from $g$.
Table~\ref{tab:bias-cross-dataset} records additional observables.
}
\label{fig:bias-crossdataset}
\begin{subfigure}[t]{0.48\columnwidth}
\centering
\includegraphics[width=\linewidth]{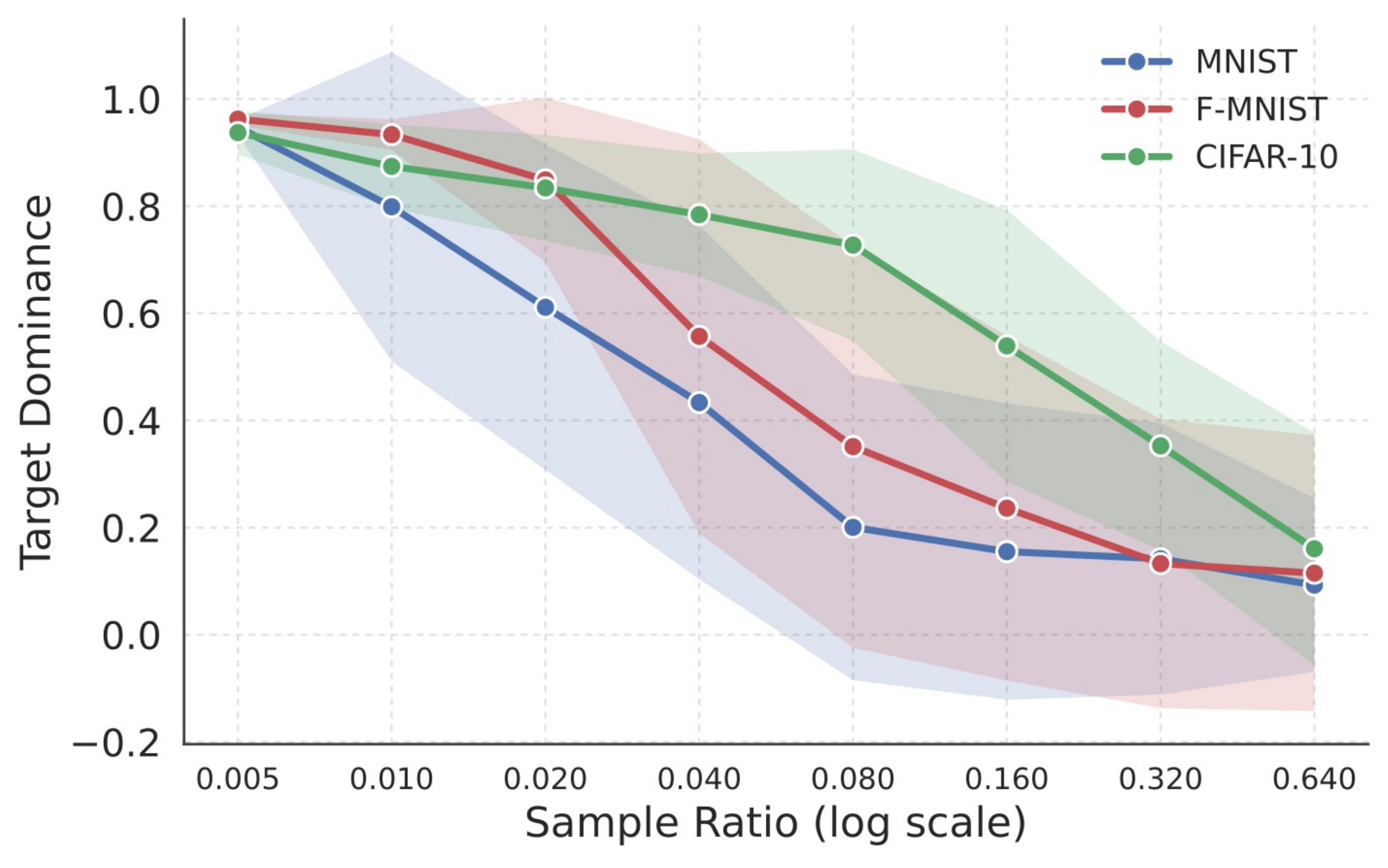}
\caption{Target dominance metric: fraction of signal-space energy on the leading singular direction of $K$. A consistent gain-side signal---dominance rises from ${\approx}0.1$ toward ${\approx}0.95$ as imbalance strengthens.}
\label{fig:bias-dominance}
\end{subfigure}
\hfill
\begin{subfigure}[t]{0.48\columnwidth}
\centering
\includegraphics[width=\linewidth]{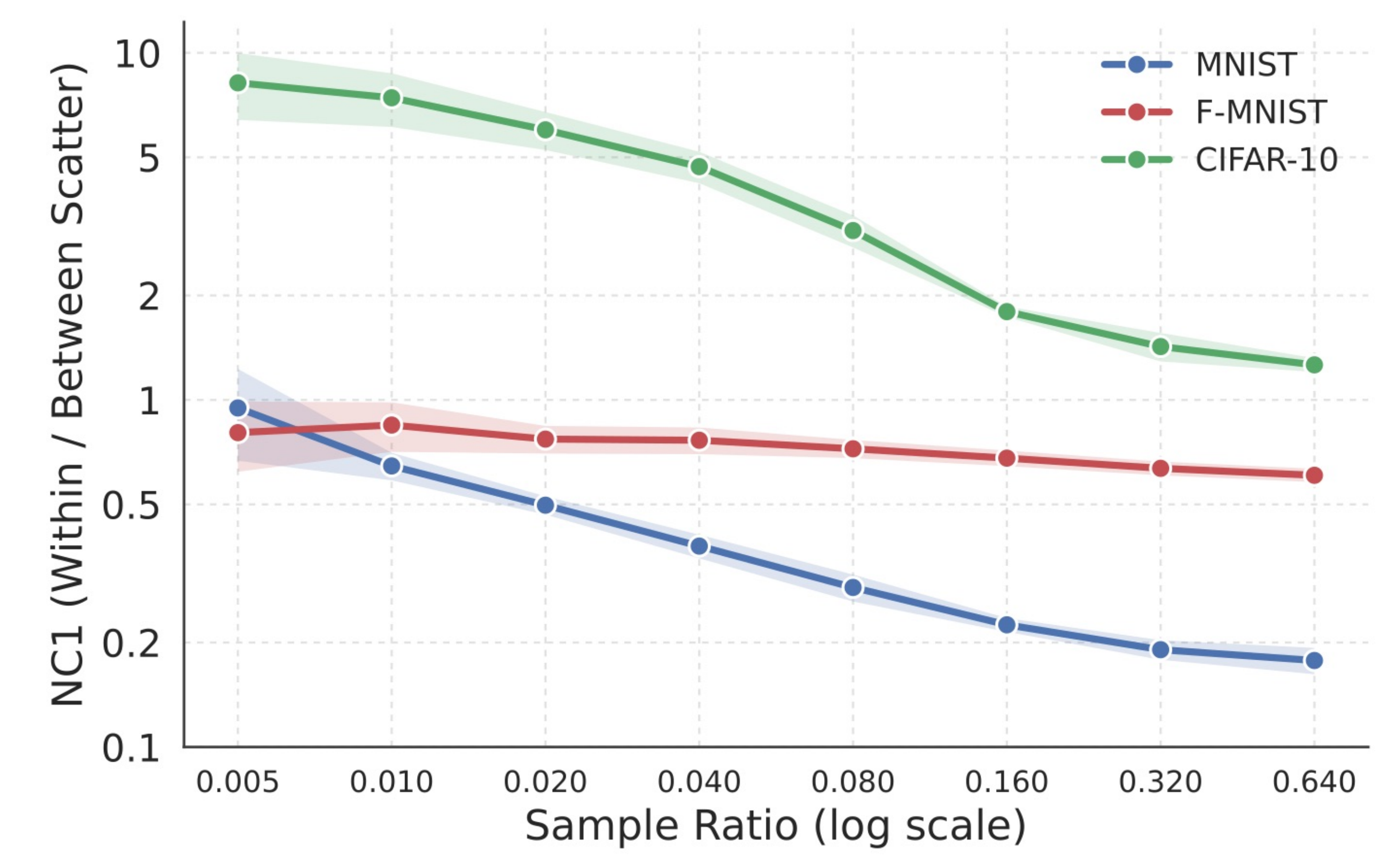}
\caption{NC1 metric: within- / between-class scatter ratio of $g(x)$ (log scale). The response is monotone but scales with task difficulty---mild on Fashion-MNIST, severe on CIFAR-10, as features become entangled.}
\label{fig:bias-nc1}
\end{subfigure}
\end{figure}

Figure~\ref{fig:bias-crossdataset} isolates the two robust signatures.
On the \emph{gain} side, we introduce \emph{target dominance}---the fraction of signal-space energy on the leading singular direction of $K$, averaged on target-class samples (with a minority-class analogue defined in Supplement~\ref{sec:bias-appendix})---and observe that it rises from ${\approx}0.1$ toward ${\approx}0.95$ as the leading direction aligns with the over-represented class; \RevisionReplace{M18}{the smallest singular values contract, so the directions associated with the under-sampled classes are progressively ``starved.''}{the smallest singular values also contract as under-sampling becomes more severe.}
On the \emph{representation} side, \RevisionReplace{M19}{class separability in $g(x)$ degrades under imbalance (a rising ratio of within- to between-class scatter, NC1); the direction of this effect is consistent, but its magnitude scales with task difficulty (negligible on Fashion-MNIST yet severe on CIFAR-10, where features become genuinely entangled).}{a standard labeled-feature diagnostic, NC1~\citep{papyan2020}, records a rising ratio of within-class to between-class scatter in \(g(x)\). The direction of this effect is consistent, and its magnitude scales with task difficulty. It is negligible on Fashion-MNIST and severe on CIFAR-10, where features become genuinely entangled.}
By contrast, the more commonly cited summaries---peaked spectra ($\sigma_1/\sigma_2$) and reduced effective rank~\citep{roy2007effective}---trend only in a dataset-dependent way, and we observe no migration of minority-class energy into the null space of $K$.
Table~\ref{tab:bias-cross-dataset} summarizes which observables replicate.
\RevisionReplace{M15}{These experiments are exploratory, but they point to a practical use of the NLSVD viewpoint: because imbalance imprints itself on the singular structure of $K$ and the geometry of $g(x)$, the direction and severity of dataset bias can be read directly off a trained model's coordinates—as spectral concentration, directional dominance, and representation collapse—\textit{without access to its training distribution}.}{These controlled experiments show that the reported head and feature statistics vary with the known target class and under-sampling factor in this intervention.}
Full metric descriptions, per-metric traces, and additional measurements are given in Supplement~\ref{sec:bias-appendix}.
\RevisionComment{M15}{A blinded CIFAR-10 follow-up does not support recovery of an unknown training distribution. On nine non-collapsed cells, the labeled NLSVD probe identifies \(6/9\) favored classes, while output frequency and mean predicted probability each identify \(8/9\). The probe has no uniquely correct cell. Overall accuracy also predicts severity more accurately than the probe.}
\RevisionComment{M17}{The \(6/9\) NLSVD probe result differs from the \(0/9\) radial-normalization ablation result, which supports an internal alignment distinction. It does not provide an incremental detector because ordinary output behavior contains every correct probe answer.}

\begin{table}[t]
\caption{\textbf{Bias geometry across datasets.} Does each NLSVD observable trend monotonically as imbalance increases? $\checkmark$~= clear trend, $\sim$~= weak / dataset-dependent, $\times$~= not observed.}
\label{tab:bias-cross-dataset}
\centering
\small
\setlength{\tabcolsep}{3.5pt}
\begin{tabular}{@{}lccc@{}}
\toprule
Observable (as imbalance $\uparrow$) & MNIST & F-MNIST & CIFAR \\
\midrule
Target--direction alignment $\uparrow$        & $\checkmark$ & $\checkmark$ & $\checkmark$ \\
Tail $\sigma_{\min}\!\downarrow$ (starvation) & $\checkmark$ & $\checkmark$ & $\checkmark$ \\
Feat. separability worsens (NC1 $\uparrow$)          & $\checkmark$ & $\sim$       & $\checkmark$ \\
Peaked spectrum $\sigma_1/\sigma_2\!\uparrow$          & $\sim$       & $\checkmark$ & $\checkmark$ \\
Effective-rank reduction                  & $\sim$       & $\checkmark$ & $\sim$       \\
Minority null-space migration                          & $\times$     & $\times$     & $\times$     \\
\bottomrule
\end{tabular}
\end{table}

\subsection{Membership Inference Robustness}
\label{sec:mia}

The NLSVD factorization $f = Kg$ also invites a privacy question: does separating an explicit gain matrix $K$ from a representation $g$ change how much a network memorizes its training data? We probe this with the Likelihood-Ratio Attack (LiRA), the seminal  membership-inference attack (MIA) \cite{carlini2022membership}, which infers whether a given example was in a model's training set.

We run online LiRA using CIFAR-10:
train $128$ shadow and $25$ target models per condition, each on an independent random $50\%$ membership mask of the training pool; fit per-example in/out
Gaussians over the shadows' logit-scaled confidences;
score each target by the resulting likelihood ratio (Appendix~\ref{sec:mia-appendix}). We compare three conditions: NLSVDNet,
a conventional CNN, and (to isolate architecture from objective) the NLSVDNet architecture retrained with a plain cross-entropy objective (bijection loss and $\sigma$-regularization removed). Since the generalization gap is the classical driver of leakage \cite{yeom2018privacy}, we match it between the NLSVDNet and the CNN. We report attack AUC and the true-positive rate at a low false-positive rate (TPR@$10^{-3}$), the regime that matters for privacy \cite{carlini2022membership}.

\begin{figure}[t]
\centering
\caption{\textbf{LiRA ROC (log-log) on CIFAR-10}. Solid line = median over the $25$ target models; shaded band = their $5$--$95$th percentile at each false-positive rate (target-to-target variability, not a
confidence interval). The NLSVDNet sits below the plain CNN throughout, with the gap widening in the low-FPR regime, indicating greater privacy.}
\label{fig:mia-roc}

\includegraphics[width=.7\linewidth]{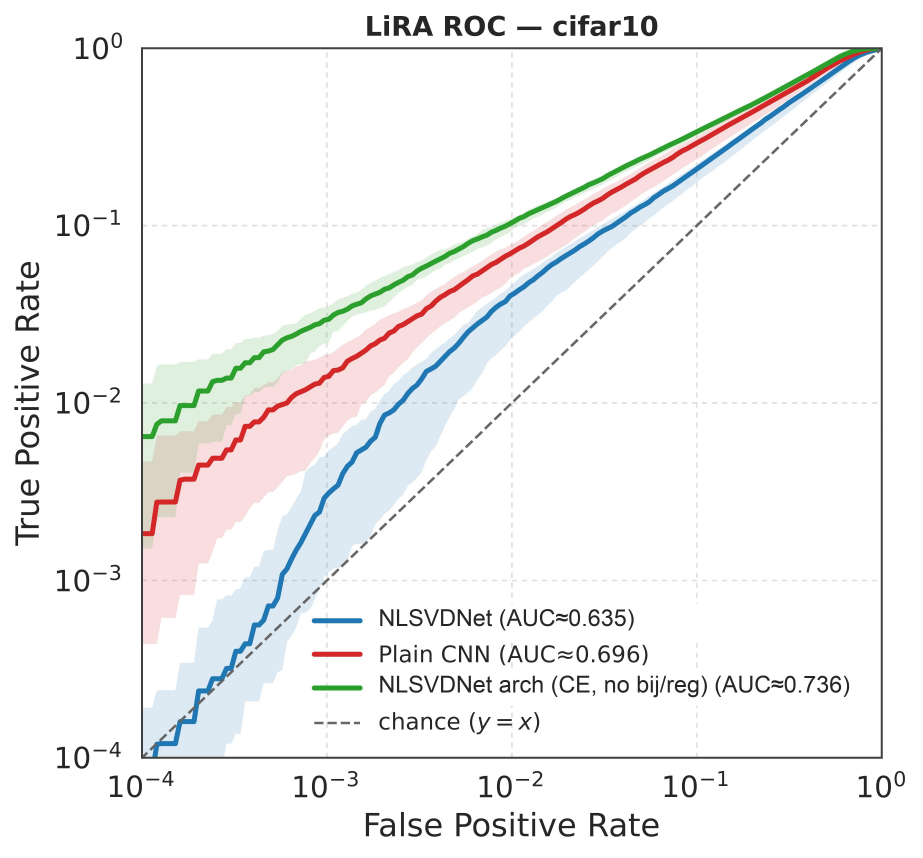}
\end{figure}

  \ifrevisionmarked
    \begingroup
    \color{RevisionRemoveRed}%
    At a matched $15.6\%$ generalization gap, full-objective NLSVDNet leaks less than a plain CNN (AUC $.635$ vs.\ $.696$; TPR@$10^{-3}$ $.003$ vs.\ $.014$; Figure~\ref{fig:mia-roc}), while being more accurate ($72.5\%$ vs.\ $67.1\%$). Training the same architecture with plain CE removes the advantage (AUC $.736$, TPR@$10^{-3}$ $.030$, gap $26.5\%$), so the gain is associated with the training regimen (Table~\ref{tab:mia-results}). Appendix~\ref{sec:mia-appendix} further reports that sweeping the $\sigma$-cutoff of $K$ alone does not move leakage, and that leakage tracks the generalization gap rather than spectral observables of $K$.
    \endgroup
    \IssueRef{M22}%
  \else
    \ifrevisionapplychanges
    \else
    \fi
  \fi

  \ifrevisionmarked
    \begingroup
    \color{RevisionAddGreen}%
    At a matched $15.6\%$ generalization gap, full-objective NLSVDNet leaks less than a plain CNN (AUC $.635$ vs.\ $.696$; TPR@$10^{-3}$ $.003$ vs.\ $.014$; Figure~\ref{fig:mia-roc}), while being more accurate ($72.5\%$ vs.\ $67.1\%$). Training the same architecture with plain CE removes the advantage (AUC $.736$, TPR@$10^{-3}$ $.030$, gap $26.5\%$), so the gain is associated with the training regimen (Table~\ref{tab:mia-results}). Appendix~\ref{sec:mia-appendix} further reports that sweeping the $\sigma$-cutoff of $K$ alone does not move leakage, and that leakage tracks the generalization gap rather than spectral observables of $K$.
    \endgroup
    \IssueRef{M22}%
  \else
    \ifrevisionapplychanges
    \fi
  \fi

\RevisionComment{M22}{The comparison supports an association at these operating points. It does not isolate architecture from objective because several loss terms and the generalization gap change together.}

\FloatBarrier

  \ifrevisionmarked
    \begingroup
    \color{RevisionRemoveRed}%
    \subsection{Membership Inference Attack}
LiRA \cite{carlini2022membership} is the strongest membership-inference attack available, but an expensive one: it trains a farm
of $128$ shadow models to estimate, for each point, the confidence a model would place on it had it \emph{not} been trained on it. Because the NLSVDNet exposes structure a conventional network
hides (the null space of the head $K$, its singular spectrum, the learned bijection $g^{-L}\!\circ g$) it is natural to ask whether an NLSVDNet can match LiRA's performance \emph{without} that farm by exploiting the structure instead. Using the same setup as in Section \ref{sec:mia} (reusing the same $25$ target models but \emph{zero} shadow models), and calibrating against a set of public non-members (the CIFAR-10 test set), our strongest attack turns out to be a shadow-free \emph{confidence} test. For a
candidate example $(x,y)$ we compute its logit-scaled confidence
$\phi(x)=\ell_y-\log\!\sum_{k\neq y}e^{\ell_k}=\log\tfrac{p_y}{1-p_y}$, which corresponds to the model's log-odds on the
true class $y$, and turn it into a per-example membership score by
$z$-scoring it against the reference non-members of the same class, so the threshold adapts to how confident the model is on that class in general. This exploratory experiment reaches AUC $0.596$, just short of LiRA's $0.635$ score in our setup. Clearly above a random guessing baseline of $0.5$, thi shows it is possible to achieve comparable leakage detection at a tiny fraction of the cost.\IssueRef{M21} More details in (Appendix~\ref{sec:mia-attack-appendix}).
    \endgroup
    \IssueRef{M21}%
  \else
    \ifrevisionapplychanges
    \else
    \fi
  \fi

\RevisionComment{M21}{The strongest shadow-free score is calibrated confidence, which is available from any classifier. The NLSVD-native scores do not improve on it, so this section is proposed for removal from the main paper while the negative result remains documented in the issue register.}
\section{Conclusions}

We prove that  NLSVD theorem of~\citet{brown2025nlsvd} applies to most modern architectures providing global factorizations with strong, exploitable structure. In these coordinates, nonlinear representation geometry is separated from anisotropic gain structure, yielding nonlinear analogues of singular directions, row/null geometry, and pseudo-inverse structure. These coordinates preserve model left-invertibility up until a final linear layer\RevisionRemove{M04}{, and preserve perturbation norms between the embedding space and perturbations in the input space, which we exploit in a proof-of-concept for adversarial perturbation analysis}. Future work could include extended analysis of model bias and inversion, and targeted model adaptation, each uniquely leveraging the geometric results developed throughout this paper.
\RevisionComment{M04}{The conclusion must no longer claim that the invalid attack comparison demonstrates an NLSVD application.}

\paragraph{Software and Data.}
Code and experiment assets will be released publicly with this preprint.
Until the repository URL is finalized, the accompanying software package
matches the assets described in
Appendix~\ref{sec:reproducibility-assets-compute}.

\paragraph{Impact Statement.}
This paper advances basic machine-learning research.  The framework may support
algorithmic development and understanding of nonlinear functions, especially
neural networks.  As our results illustrate, techniques stemming from this work
may also identify, exacerbate, or help remediate AI-security issues.

\section*{Acknowledgments}
M.M. was funded by Swedish Army Staff.

\bibliography{references}

\appendix
\section{Additional Related Work}

\subsection{NLSVD Related Work} \label{sec:related-works-appendix}
The acronym \emph{GSVD} already refers to several matrix factorizations, most commonly a simultaneous factorization of a matrix pair $(A,B)$ sharing a column dimension \cite{vanLoan1976GeneralizingSVD,paigeSaunders1981TowardsGSVD}. We use NLSVD to refer to the nonlinear map factorization in this paper.
Other variants modify inner products, impose restrictions, or provide taxonomies of SVD-like matrix factorizations \cite{deMoorGolub1991generalizations, deMoorZha1991TreeGeneralizationsOSVD,deMoorGolub1991RSVDPropertiesApplications,zha1992NumericalAlgorithmRSVDTriplets}.
These are valuable but distinct from the nonlinear factorization studied here, where $f:\mathbb R^n\to\mathbb R^m$ is represented as $f(x)=U\Sigma v(x)$ with $v$ a norm-preserving injective lift.

Network invertibility is also a long-standing ML research area. Invertible networks support bijective mapping and density estimation \cite{kingma2018glow}, extend to ill-posed inverse problems \cite{ardizzone2019}, and often require stability controls such as Lipschitz constraints \cite{behrmann2018}. The NLSVD lift is different in that it applies to almost all modern architectures, and proves that there exists a representation that remains left invertibility up until a final linear layer.

\paragraph{Contrast with existing geometric tools.}
Jacobian saliency and SmoothGrad-style attributions \cite{simonyan2013saliency,smilkov2017smoothgrad} are local first-order probes; NTK analyses describe training dynamics in infinite-width or linearized regimes \cite{jacot2018neuralTangentKernel,lee2019wideLinearModels}; and spectral methods bound scalar gains through layerwise norms or Lipschitz certificates \cite{miyato2018spectralNormGAN,virmaux2018lipschitzRegularity}. The NLSVD is complementary---for a trained finite network, it gives a global factorization $f=U\Sigma v$ exposing coordinate gains, generalized row/null structure, and a constructive projected pseudo-inverse, rather than only a local linearization, a training-time limit, or a scalar upper bound.

\section{Appendix: Proofs}

\paragraph{Note on the form of this proof.}
~\citet{brown2025nlsvd} already prove Theorem~\ref{thm:nlsvd}
constructively.  We restate the construction here only to fix the notation used
by Algorithm~\ref{alg:nlsvd-construction}: the support component
$v_{\mathrm{support}}$ in~\eqref{eq:v_support}, the kernel component
$v_{\mathrm{kernel}}$ in~\eqref{eq:v_kernel}, and the gain
constraint~\eqref{eq:sum_norms} on the singular values are the quantities
estimated from data in the black-box setting.

\begin{proof}[Proof of Theorem \ref{thm:nlsvd}]\label{pf:thm-nlsvd}
It suffices to construct one admissible factorization, so take \(U=I_m\).  Write
\[
\alpha_i\triangleq \|f_i\|_{2\to2}
=\sup_{x\ne0}\frac{|f_i(x)|}{\|x\|_2}<\infty .
\]
Choose positive numbers \(\sigma_i\) such that
\begin{align}
\label{eq:sum_norms}
\rho^2\triangleq \sum_{i=1}^m \frac{\alpha_i^2}{\sigma_i^2}<1 .
\end{align}
Such a choice always exists; for example, one may take all \(\sigma_i=S\) with
\(S^2>\sum_i\alpha_i^2\), which also makes the diagonal nonincreasing.  Let
\[
\Sigma=\begin{bmatrix}\diag(\sigma_1,\ldots,\sigma_m)&\mathbf 0_{m\times n}\end{bmatrix}.
\]
For \(x\ne0\), define
\begin{align}
\label{eq:v_support}
v_{\mathrm{support}}(x)
&\triangleq \Sigma^\dagger f(x)
=
\begin{bmatrix}
f_1(x)/\sigma_1\\
\vdots\\
f_m(x)/\sigma_m\\
\mathbf 0_n
\end{bmatrix},\\
a(x)
&\triangleq
\left(1-\frac{\|v_{\mathrm{support}}(x)\|_2^2}{\|x\|_2^2}\right)^{1/2},\\
\label{eq:v_kernel}
v_{\mathrm{kernel}}(x)
&\triangleq
\begin{bmatrix}
\mathbf 0_m\\
a(x)x
\end{bmatrix},
\end{align}
and define, for \(x\ne0\),
\begin{equation}
\label{eq:v}
v(x)\triangleq v_{\mathrm{support}}(x)+v_{\mathrm{kernel}}(x).
\end{equation}
Set \(v(0)=0\).  Since \(f(0)=0\), this definition is consistent with the
factorization at the origin.

The scale \(a(x)\) is real and uniformly positive away from zero.  Indeed, for
\(x\ne0\),
\begin{align}
\label{eq:sigma_constraint}
\frac{\|v_{\mathrm{support}}(x)\|_2^2}{\|x\|_2^2}
&=
\sum_{i=1}^m \frac{f_i(x)^2}{\sigma_i^2\|x\|_2^2}
\le
\sum_{i=1}^m \frac{\alpha_i^2}{\sigma_i^2}
=\rho^2<1,
\end{align}
so \(a(x)\ge \sqrt{1-\rho^2}>0\).  Moreover
\(v_{\mathrm{support}}(x)\perp v_{\mathrm{kernel}}(x)\), and therefore
\begin{align*}
\|v(x)\|_2^2
&=\|v_{\mathrm{support}}(x)\|_2^2+a(x)^2\|x\|_2^2\\
&=\|v_{\mathrm{support}}(x)\|_2^2+
\|x\|_2^2-\|v_{\mathrm{support}}(x)\|_2^2\\
&=\|x\|_2^2 .
\end{align*}
Thus \(v\) is norm-preserving.

The same positive lower block gives injectivity.  If \(v(x)=v(y)\), then
norm preservation implies \(\|x\|_2=\|y\|_2\).  If either point is zero, both
are zero.  Otherwise the final \(n\) coordinates give \(a(x)x=a(y)y\).  Taking
norms and using \(\|x\|_2=\|y\|_2>0\) gives \(a(x)=a(y)\); substituting back
gives \(x=y\).

Finally,
\[
\Sigma v(x)
=
\begin{bmatrix}\diag(\sigma_i)&\mathbf 0_{m\times n}\end{bmatrix}
\begin{bmatrix}
f_1(x)/\sigma_1\\
\vdots\\
f_m(x)/\sigma_m\\
a(x)x
\end{bmatrix}
=f(x),
\]
and since \(U=I_m\), \(f=U\Sigma v\).  This proves the claim.
\end{proof}

\begin{remark}[Limitations relative to linear SVD]
\label{rem:nlsvd-svd-limitations}
Comparing to the original SVD, NLSVD has known limitations.
Theorem~\ref{thm:nlsvd} relaxes the properties of $V^\top$ from the linear SVD---which requires $V^\top$ to be \emph{bijective} and \emph{inner-product preserving}---to a lifting function $v(x)$ that is only required to be \emph{injective} and \emph{norm-preserving}.
Several other properties of the classical linear SVD are relinquished to accommodate the broader class of functions with a bounded 2-induced norm.
Notably, the largest singular value $\sigma_1$ in the proposed construction is not necessarily a tight upper bound on the $\|f\|_{2\to2}$ norm, as seen in the constraint in \eqref{eq:sigma_constraint}.
This contrasts with the classical SVD of a linear operator, where the largest singular value is exactly the operator's induced norm.
In particular, this bound may be loose by a multiplicative factor as large as $\sqrt{m}$, since $\|f(x)\|_2^2 = \sum f_i(x)^2 \leq m \cdot \max_i f_i(x)^2$.
This same \(\sqrt m\) scale appears in the coordinate construction.
Writing \(\alpha_i=\|f_i\|_{2\to2}\), for any \(\epsilon>0\) the simultaneous
choice \(\sigma_i=(\sqrt m+\epsilon)\alpha_i\) for nonzero \(\alpha_i\) (and
any positive \(\sigma_i\) when \(\alpha_i=0\)) gives
$\sum_i {\alpha_i^2}/{\sigma_i^2}
\le {m}/{(\sqrt m+\epsilon)^2}<1$,
so the construction makes coordinate gains tight only up to an arbitrarily
small excess over the worst-case \(\sqrt m\) factor.  This scale is unavoidable
for proportional coordinate bounds in the aligned worst case: if all coordinate
gains are attained together and \(\sigma_i=c\alpha_i\), then the constraint
\eqref{eq:sum_norms} requires \(m/c^2<1\), hence \(c>\sqrt m\).
\end{remark}
\RevisionComment{B29}{Moved from end of Section~2 (main) to sit with the constructive NLSVD proof.}

\subsection{The NLSVDNet left-inverse obstruction}
\begin{proposition}[Strict left inverses forbid full-dimensional compression]
\label{prop:left-inverse-no-compression}
Let \(\Omega\subset\R^m\) contain a nonempty open set, and let
\(g:\Omega\to\R^\ell\) be continuous.  If there is a map
\(g^{-L}:g(\Omega)\to\Omega\) satisfying \(g^{-L}(g(x))=x\) for every
\(x\in\Omega\), then \(g\) is injective on \(\Omega\).  Consequently, when
\(\ell<m\), no such continuous strictly left-invertible \(g\) can exist on a
full-dimensional input domain.
\end{proposition}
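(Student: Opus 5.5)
The plan has two parts: an elementary injectivity argument, and a dimension-theoretic argument that needs a topological input.

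\textbf{Injectivity.} Suppose \(g(x)=g(y)\) for some \(x,y\in\Omega\). Applying \(g^{-L}\) to both sides gives
\[
x=g^{-L}(g(x))=g^{-L}(g(y))=y .
\]
Hence \(g\) is injective on \(\Omega\). No continuity is needed for this part.

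\textbf{Compression.} Let \(\ell<m\) and pick a nonempty open set \(W\subset\Omega\). Inside \(W\), choose a closed ball \(\bar B\) of positive radius; it is compact and homeomorphic to the unit ball of \(\R^m\).

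\begin{itemize}
\item[(i)] The restriction \(g|_{\bar B}\) is continuous and injective on a compact set, so it is a homeomorphism onto its image \(g(\bar B)\subset\R^\ell\). This is the standard fact that a continuous bijection from a compact space to a Hausdorff space is a homeomorphism.
\item[(ii)] Compose with the inclusion \(\R^\ell\hookrightarrow\R^m\), \(z\mapsto(z,0)\). This gives a continuous injection from the open ball \(B=\operatorname{int}\bar B\subset\R^m\) into \(\R^m\) whose image lies in the proper subspace \(\R^\ell\times\{0\}\).
\item[(iii)] Brouwer's invariance of domain says that a continuous injection from an open subset of \(\R^m\) into \(\R^m\) has open image. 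But a nonempty subset of \(\R^\ell\times\{0\}\) with \(\ell<m\) has empty interior in \(\R^m\), so the image cannot be open. This contradiction shows no such \(g\) exists.
\end{itemize}

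An alternative to invariance of domain is topological dimension: \(g(\bar B)\) would be homeomorphic to an \(m\)-ball yet embedded in \(\R^\ell\), which has covering dimension \(\ell<m\), and covering dimension is a topological invariant. A Borsuk--Ulam argument would not suffice, since it only excludes continuous injections of spheres \(S^\ell\to\R^\ell\) and so does not reach \(\ell<m-1\) directly.

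\textbf{Main obstacle.} The only non-elementary step is the topological fact in (iii). Invariance of domain is the cleanest route to cite. The statement contains a subtle point: the hypothesis is that \(g\) is continuous, while \(g^{-L}\) need not be. Without continuity of \(g\) the conclusion fails, because a set-theoretic bijection \(\R^m\to\R^\ell\) exists by cardinality. So the argument must use continuity of \(g\) itself, which the invariance-of-domain route does.
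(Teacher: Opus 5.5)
Your proof is correct and follows the paper's route: injectivity by applying $g^{-L}$ to $g(x)=g(y)$, then invariance of domain for $J\circ g$ with $J(y)=(y,0)$, whose image sits in the empty-interior set $\R^\ell\times\{0\}$. Your restriction to an open ball inside $\Omega$ is actually a small improvement, because the paper applies invariance of domain to $\Omega$ itself even though $\Omega$ is only assumed to contain an open set; your step (i) is not needed on this route.

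One side remark is wrong: Borsuk--Ulam does cover every $\ell<m$. Composing $g$ with the inclusion $\R^\ell\hookrightarrow\R^{m-1}$ and restricting to the boundary sphere of your ball gives a continuous injection $S^{m-1}\to\R^{m-1}$, which Borsuk--Ulam forbids.
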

\begin{proof}[Proof of Proposition~\ref{prop:left-inverse-no-compression}]
\label{app:proof-left-inverse-no-compression}
If \(g(x_1)=g(x_2)\), then applying the left inverse gives
\[
x_1=g^{-L}(g(x_1))=g^{-L}(g(x_2))=x_2,
\]
so \(g\) is injective.  The final claim is the standard topological dimension
obstruction.  If \(\ell<m\), let \(J:\R^\ell\to\R^m\) be the coordinate
inclusion \(J(y)=(y,0)\).  Then \(J\circ g:\Omega\to\R^m\) is continuous and
injective.  By invariance of domain, \((J\circ g)(\Omega)\) must be open in
\(\R^m\) \citep[Thm.~62.3]{munkres2000topology}.  But
\((J\circ g)(\Omega)\subset \R^\ell\times\{0\}^{m-\ell}\), a set with empty
interior in \(\R^m\), a contradiction.  Therefore strict continuous
left-invertibility over a full-dimensional domain requires latent dimension at
least \(m\).
\end{proof}

\subsection{Architecture classes covered by the finite-gain argument}
\label{app:architecture-coverage}

\RevisionComment{B11}{Do not re-state Theorem~\ref{thm:architecture} in the supplement (avoids duplicate labels); proof follows below.}

\begin{table}[H]
\centering
\small
\caption{Architecture classes covered by the finite-gain NLSVD argument.}
\label{tab:architecture-counts-for-nlsvd}
\begin{tabularx}{\linewidth}{@{}>{\raggedright\arraybackslash}p{0.23\linewidth}>{\raggedright\arraybackslash}p{0.28\linewidth}>{\raggedright\arraybackslash}p{0.35\linewidth}@{}}
\toprule
Architecture class & Counts for NLSVD? & Required domain hypothesis \\
\midrule
MLPs, CNNs, ResNets, finite feedforward nets & Yes & Finite compositions of affine/convolutional maps, fixed pooling/downsampling, Lipschitz activations, residual sums, and stabilized normalizations; use anchoring whenever the chosen reference output is nonzero. \\
Fixed-length transformers & Yes & Compact fixed-length token domain and fixed attention mask; self-attention Lipschitz bounds apply on such compact domains. \\
Encoder--decoder cross-attention & Yes & Compact query--memory product domain with fixed source/target lengths. \\
Bounded variable-length sequence models & Yes, after fixing length & Pad or stratify by length and take the maximum over finitely many length pairs. \\
\ifrevisionmarked
\textcolor{RevisionAddGreen}{Stateful transformers} &
\textcolor{RevisionAddGreen}{Yes, over a fixed finite horizon} &
\textcolor{RevisionAddGreen}{Finite-dimensional carried state restricted to a compact domain and a fixed finite number of update steps.}\IssueRef{M26} \\
\else
\ifrevisionapplychanges
\ifcsname RevisionIncludeM26\endcsname
Stateful transformers & Yes, over a fixed finite horizon & Finite-dimensional carried state restricted to a compact domain and a fixed finite number of update steps. \\
\fi
\fi
\fi
Unbounded-length transformers or unbounded attention inputs & Not covered & Standard dot-product attention is not globally Lipschitz on unbounded domains. \\
Diffusion U-Net denoisers/samplers & Yes as deterministic maps & Fix time/conditioning and compact pixel/latent domain; the random law obtained by resampling noise is not itself the deterministic map decomposed here. \\
\bottomrule
\end{tabularx}
\end{table}

\subsection{Proofs for modern-architecture finite-gain claims}
\label{app:modern-architecture-proofs}

\begin{definition}[Block domains and norms]\label{def:block_domains_norms}
All architecture claims below are finite-dimensional.  Vector spaces
\(\R^d\) use the Euclidean norm.  Token or image tensors are identified with
Euclidean vectors by vectorization; for matrices \(X\in\R^{n\times d}\) this is
the Frobenius norm \(\|X\|_F^2=\sum_{i,j}X_{ij}^2\).  Product domains such as a
decoder-query and encoder-memory pair use
\[
\|(x_1,\ldots,x_k)\|_{\rm prod}^2=\sum_{j=1}^k\|x_j\|_2^2 .
\]
For a map \(H:\X\to\mathcal Y\), \(\mathrm{Lip}(H;\X)\le L\) means
\(\|H(x)-H(y)\|\le L\|x-y\|\) for all \(x,y\in\X\), using the norms just defined.
\end{definition}

\begin{definition}[Elementary network blocks]\label{def:elementary_network_blocks}
The non-attention blocks used in Theorem~\ref{thm:architecture} are the following
finite-dimensional maps, with fixed weights.
\begin{enumerate}
\item An affine layer is \(T(x)=Ax+b\).
\item A convolutional layer with fixed input/output tensor shapes is the linear
map \(C\) represented by that convolution.  Fixed strided downsampling and
average pooling are also linear maps.
\item A finite-window max-pooling layer with fixed windows \(W_j\) is
\((P_{\max}x)_j=\max_{i\in W_j}x_i\).
\item A coordinatewise activation is \(\Phi(x)_j=\phi(x_j)\).
\item A stabilized normalization block acts on a feature group of size \(r\) by
\begin{align*}
N_{\gamma,\beta,\epsilon}(x)
&=\Gamma\frac{Px}{\sqrt{r^{-1}\|Px\|_2^2+\epsilon}}+\beta,\\
P&=I-\frac{1}{r}\mathbf 1\mathbf 1^\top,\qquad \epsilon>0,
\end{align*}
where \(\Gamma=\diag(\gamma)\).  Applying this formula independently over
several fixed groups covers LayerNorm and the same epsilon-stabilized groupwise
normalization pattern.
\item A residual block has the form \(R(x)=x+G(x)\), where \(G\) is another
Lipschitz block with the same input and output dimension.
\item A concatenation block is \(C_F(x)=(F_1(x),\ldots,F_k(x))\).
\item A feedforward subnetwork is a finite composition of the preceding blocks.
\end{enumerate}
\end{definition}

\begin{lemma}[Elementary block Lipschitz bounds]
\label{lem:lipschitz-calculus}
The elementary blocks above have finite Lipschitz constants under the following
explicit bounds.
\begin{enumerate}
\item If \(T(x)=Ax+b\), then \(\mathrm{Lip}(T)\le \|A\|_2\), the spectral norm
of \(A\)~\cite{miyato2018spectralNormGAN,gouk2021enforcingLipschitz}.
\item If \(C\) is a convolutional, fixed strided downsampling, or average-pooling
layer, then \(\mathrm{Lip}(C)\le \|C\|_2\); Sedghi et
al.~\cite{sedghi2019convSingularValues} characterize these singular/operator
norms for standard convolutional layers.  If \(P_{\max}\) is finite-window
max-pooling and each input coordinate appears in at most \(M\) pooling windows,
then \(\mathrm{Lip}(P_{\max})\le\sqrt M\), so nonoverlapping max-pooling is
\(1\)-Lipschitz.
\item If \(\phi\) is \(L_\phi\)-Lipschitz on the interval containing all
coordinates of \(\X\), then \(\Phi\) is \(L_\phi\)-Lipschitz on \(\X\).  ReLU is
globally \(1\)-Lipschitz; smooth activations with bounded derivative on a compact
interval are Lipschitz there by the mean-value theorem.
\item The stabilized normalization block satisfies
\[
\mathrm{Lip}(N_{\gamma,\beta,\epsilon})\le \frac{2\|\Gamma\|_2}{\sqrt{\epsilon}}.
\]
This is the standard LayerNorm form of \citet{ba2016layerNormalization},
with the positive numerical stability parameter made explicit; inference-mode
BatchNorm is affine once its running statistics are fixed.  Training-time
BatchNorm is covered only if the whole batch is treated as the deterministic
input; stochastic training behavior is not the map considered here.
\item If \(F_j:\X\to\R^{d_j}\) are \(L_j\)-Lipschitz, then
\[
\mathrm{Lip}\big((F_1,\ldots,F_k);\X\big)
\le \Big(\sum_{j=1}^k L_j^2\Big)^{1/2}.
\]
If \(F,G:\X\to\R^d\) are \(L_F,L_G\)-Lipschitz, then
\[
\mathrm{Lip}(F+G;\X)\le L_F+L_G ;
\]
in particular \(\mathrm{Lip}(I+G;\X)\le1+L_G\).  If
\(F:\mathcal{Y}\to\R^p\) and \(G:\X\to\mathcal{Y}\) are Lipschitz, then
\[
\mathrm{Lip}(F\circ G;\X)\le \mathrm{Lip}(F;\mathcal{Y})\,\mathrm{Lip}(G;\X).
\]
\end{enumerate}
\end{lemma}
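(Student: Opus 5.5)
The lemma is a list of five independent Lipschitz facts, so I would prove each item directly from the definition \(\|H(x)-H(y)\|\le L\|x-y\|\), with norms as in Definition~\ref{def:block_domains_norms}.

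\textbf{Items 1 and 2.} For the affine map, \(T(x)-T(y)=A(x-y)\), so \(\|T(x)-T(y)\|_2\le\|A\|_2\|x-y\|_2\) by the definition of the operator norm. Convolutions, fixed strided downsampling and average pooling are linear once vectorized, so the same one-line argument gives \(\|C\|_2\). For max-pooling I would first use \(|\max_{i\in W}a_i-\max_{i\in W}b_i|\le\max_{i\in W}|a_i-b_i|\). Then
\[
\sum_j\bigl|(P_{\max}x)_j-(P_{\max}y)_j\bigr|^2\le\sum_j\sum_{i\in W_j}|x_i-y_i|^2\le M\|x-y\|_2^2 ,
\]
because each coordinate is counted in at most \(M\) windows. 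Taking square roots gives \(\sqrt M\).

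\textbf{Item 3.} Coordinatewise activations give \(\sum_j|\phi(x_j)-\phi(y_j)|^2\le L_\phi^2\|x-y\|_2^2\). This uses that every coordinate lies in the interval where \(\phi\) is \(L_\phi\)-Lipschitz. The mean-value theorem supplies \(L_\phi\) for smooth activations on compact intervals.

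\textbf{Item 4 (main obstacle).} This is the normalization bound. Set \(u=Px\) and \(h(u)=u/\sqrt{r^{-1}\|u\|^2+\epsilon}\). Then \(N=\Gamma h\circ P+\beta\), and \(\|P\|_2=1\) because \(P\) is an orthogonal projection. So it suffices to show \(\mathrm{Lip}(h)\le 2/\sqrt\epsilon\).

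I would bound the Jacobian, with \(s=\sqrt{r^{-1}\|u\|^2+\epsilon}\):
\[
Dh(u)=\frac{1}{s}\Bigl(I-\frac{uu^\top}{r s^2}\Bigr).
\]
Since \(\|u\|^2/(rs^2)<1\), the bracketed matrix has eigenvalues in \((0,1]\). Hence \(\|Dh(u)\|_2\le 1/s\le 1/\sqrt\epsilon\). Because \(\R^r\) is convex, the mean-value inequality then gives Lipschitz constant \(1/\sqrt\epsilon\), which is better than the claimed \(2/\sqrt\epsilon\).

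As a fallback that avoids derivatives, I would split
\[
h(u)-h(w)=\frac{u-w}{s_u}+w\Bigl(\frac1{s_u}-\frac1{s_w}\Bigr).
\]
The first term is at most \(\|u-w\|/\sqrt\epsilon\). For the second, note that \(\|w\|/s_w\le\sqrt r\) and \(|s_u-s_w|\le r^{-1/2}\|u-w\|\). This gives a second term of size at most \(\|u-w\|/s_u\le\|u-w\|/\sqrt\epsilon\), so \(2/\sqrt\epsilon\) in total, exactly the stated constant. The remark about BatchNorm needs no proof: in inference mode it is affine, so item 1 covers it.

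\textbf{Item 5.} Concatenation: \(\sum_j\|F_j(x)-F_j(y)\|^2\le(\sum_jL_j^2)\|x-y\|^2\). Sums follow from the triangle inequality, and \(\mathrm{Lip}(I+G)\le 1+L_G\) is the special case \(F=I\). Composition: apply the bound for \(F\) at the points \(G(x),G(y)\in\mathcal Y\), then the bound for \(G\).
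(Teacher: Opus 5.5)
Your proposal is correct. Items 1--3 and 5 follow the paper's proof essentially line by line:
\begin{itemize}
\item the operator-norm bound for the linear layers;
\item the max-difference inequality plus double counting over at most $M$ windows for max-pooling;
\item coordinatewise summation for activations;
\item the triangle-inequality and composition estimates.
\end{itemize}

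The only real divergence is item 4. The paper also differentiates, obtaining $D_x(Px/q(x))[u]=Pu/q-Px\langle Px,Pu\rangle/(rq^3)$. It then bounds the two terms separately with the triangle inequality, each by $\|u\|_2/q\le\|u\|_2/\sqrt{\epsilon}$, and that is where the factor $2$ comes from.

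You instead observe that the Jacobian is $s^{-1}$ times the symmetric matrix $I-uu^\top/(rs^2)$. Its eigenvalues, $1$ and $r\epsilon/(\|u\|_2^2+r\epsilon)$, lie in $(0,1]$. This captures the cancellation between the two terms, since the correction only shrinks the component along $u$. It yields the sharper constant $\|\Gamma\|_2/\sqrt{\epsilon}$. That constant is sharp for $h$ because $Dh(0)=I/\sqrt{\epsilon}$, so the paper's stated bound is loose by a factor of two, though of course still true.

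Your fallback splits $h(u)-h(w)=(u-w)/s_u+w(1/s_u-1/s_w)$ and uses two facts:
\begin{itemize}
\item $\|w\|_2\le\sqrt{r}\,s_w$;
\item $|s_u-s_w|\le r^{-1/2}\|u-w\|_2$, because $s$ is the Euclidean norm of $(r^{-1/2}u,\sqrt{\epsilon})$.
\end{itemize}
This is the finite-difference analogue of the paper's term-by-term estimate. It recovers exactly $2/\sqrt{\epsilon}$ with no differentiability or mean-value argument at all.

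So your primary route buys a tight constant, and your fallback buys a derivative-free proof. The paper's route is simply the quickest path to the stated constant. It leaves the derivative-to-Lipschitz step implicit, which you make explicit via convexity of $\R^r$.
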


\begin{proof}[Proof of Theorem~\ref{thm:compact-bound}]
For \(h\in\X_\star\setminus\{0\}\), both \(x_\star+h\) and \(x_\star\) lie in
\(\X\), so
\[
\frac{\|f_\star(h)\|_2}{\|h\|_2}
=\frac{\|f(x_\star+h)-f(x_\star)\|_2}{\|h\|_2}
\le L .
\]
Taking the supremum gives \(\|f_\star\|_{2\to2;\X_\star}\le L<\infty\).
\end{proof}

\begin{proof}[Proof of Lemma~\ref{lem:lipschitz-calculus}]
For affine maps,
\[
\|T(x)-T(y)\|_2=\|A(x-y)\|_2\le \|A\|_2\|x-y\|_2 .
\]
A convolutional layer, fixed strided downsampling, and average pooling are
special linear maps after fixing input/output tensor shapes, so the same
inequality applies with \(A=C\).  For max-pooling,
\begin{align*}
|(P_{\max}x)_j-(P_{\max}y)_j|
&\le \max_{i\in W_j}|x_i-y_i|\\
&\le \left(\sum_{i\in W_j}|x_i-y_i|^2\right)^{1/2}.
\end{align*}
Squaring and summing over \(j\) gives
\[
\|P_{\max}x-P_{\max}y\|_2^2
\le \sum_j\sum_{i\in W_j}|x_i-y_i|^2
\le M\|x-y\|_2^2 .
\]
For a coordinatewise activation,
\begin{align*}
\|\Phi(x)-\Phi(y)\|_2^2
&=\sum_i|\phi(x_i)-\phi(y_i)|^2\\
&\le L_\phi^2\sum_i|x_i-y_i|^2\\
&=L_\phi^2\|x-y\|_2^2 .
\end{align*}

For stabilized normalization, put \(z=Px\) and
\(q(x)=\sqrt{r^{-1}\|z\|_2^2+\epsilon}\).  Since \(P=P^\top=P^2\) and
\(\|P\|_2=1\), write \(D_xG(x)[u]\) for the Fréchet derivative of a
map \(G\) at \(x\), applied to the perturbation \(u\); equivalently
\(D_xG(x)[u]=\frac{d}{dt}|_{t=0}G(x+tu)\).  Then
\[
D_x(Px)[u]=Pu,\qquad
D_xq(x)[u]=\frac{\langle Px,Pu\rangle}{r\,q(x)} ,
\]
where the second identity follows by differentiating
\(q(x)^2=r^{-1}\|Px\|_2^2+\epsilon\).  Thus, by the quotient rule,
\[
D_x\!\left(\frac{Px}{q(x)}\right)[u]
=\frac{Pu}{q(x)}
-\frac{Px\,\langle Px,Pu\rangle}{r\,q(x)^3}.
\]
Because \(q(x)\ge\sqrt{\epsilon}\) and \(r^{-1}\|Px\|_2^2\le q(x)^2\),
\[
\left\|D_x\!\left(\frac{Px}{q(x)}\right)[u]\right\|_2
\le \frac{\|u\|_2}{q(x)}
+\frac{\|Px\|_2^2\|u\|_2}{r\,q(x)^3}
\le \frac{2}{\sqrt{\epsilon}}\|u\|_2 .
\]
Multiplication by \(\Gamma\) gives the claimed bound.  Biases do not affect
Lipschitz constants.  Applying the same estimate independently to finitely many
fixed groups gives a finite groupwise normalization constant.  For concatenation,
\begin{align*}
&\|(F_1,\ldots,F_k)(x)-(F_1,\ldots,F_k)(y)\|_2^2\\
&\quad=\sum_j\|F_j(x)-F_j(y)\|_2^2\\
&\quad\le \Big(\sum_j L_j^2\Big)\|x-y\|_2^2,
\end{align*}
\begin{align*}
\|(F+G)(x)-(F+G)(y)\|_2
&\le \|F(x)-F(y)\|_2\\
&\quad+\|G(x)-G(y)\|_2\\
&\le (L_F+L_G)\|x-y\|_2,
\end{align*}
\begin{align*}
\|(I+G)(x)-(I+G)(y)\|_2
&\le \|x-y\|_2\\
&\quad+\|G(x)-G(y)\|_2\\
&\le (1+L_G)\|x-y\|_2,
\end{align*}
and for composition,
\begin{align*}
\|F(G(x))-F(G(y))\|_2
&\le L_F\|G(x)-G(y)\|_2\\
&\le L_F L_G\|x-y\|_2 .
\end{align*}
\end{proof}

\begin{lemma}[Cross-attention is Lipschitz on compact fixed-length domains]
\label{lem:cross-attention-lipschitz}
Fix query length \(n_q\), memory length \(n_m\), finite feature dimensions, and a
fixed finite additive mask \(M\).  For one attention head, define
\[
\begin{gathered}
Q=X_qW_Q,\qquad K=X_mW_K,\qquad V=X_mW_V,\\
A=\mathrm{softmax}_{\rm row}\!\left(\frac{QK^\top}{\sqrt{d_k}}+M\right),
\qquad H(X_q,X_m)=AV .
\end{gathered}
\]
Let \(\X_q\times\X_m\) be compact and set
\begin{align*}
B_q&=\sup_{\X_q}\|X_q\|_F,&
B_m&=\sup_{\X_m}\|X_m\|_F,\\
a_Q&=\|W_Q\|_2,& a_K&=\|W_K\|_2,& a_V&=\|W_V\|_2,
\end{align*}
\begin{align*}
R_Q&=a_QB_q,& R_K&=a_KB_m,\\
R_V&=a_VB_m,\\
L_S&=\frac{\sqrt{(R_Ka_Q)^2+(R_Qa_K)^2}}{\sqrt{d_k}} .
\end{align*}
Then, with the product norm
\(\Delta^2=\|X_q-X_q'\|_F^2+\|X_m-X_m'\|_F^2\),
\[
\|H(X_q,X_m)-H(X_q',X_m')\|_F
\le \bigl(\sqrt{n_q}\,a_V+R_VL_S\bigr)\Delta .
\]
Thus one cross-attention head is Lipschitz. A finite multi-head cross-attention
module followed by an output projection is Lipschitz with constant at most
\[
\|W_O\|_2\left(\sum_{h=1}^H
\bigl(\sqrt{n_q}\,a_{V,h}+R_{V,h}L_{S,h}\bigr)^2\right)^{1/2}.
\]
Fixed support masks with \(-\infty\) entries obey the same bound after deleting
the masked coordinates from each row, provided every query row has at least one
unmasked key.
\end{lemma}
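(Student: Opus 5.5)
The plan is to split the difference of head outputs into a value-perturbation term and an attention-perturbation term, bound each with operator/Frobenius inequalities, and then assemble the multi-head bound from Lemma~\ref{lem:lipschitz-calculus}. Write \(A,V\) for the quantities at \((X_q,X_m)\) and \(A',V'\) at \((X_q',X_m')\). Then
\[
H(X_q,X_m)-H(X_q',X_m')=A(V-V')+(A-A')V'.
\]
Both points lie in the compact domain, so every sup-based constant \(B_q,B_m,R_Q,R_K,R_V\) applies to either point.

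\emph{Value term.} Use \(\|A(V-V')\|_F\le\|A\|_2\|V-V'\|_F\).
\begin{itemize}
\item Each row of \(A\) is a probability vector, so its Euclidean norm is at most \(1\). Hence \(\|A\|_2\le\|A\|_F\le\sqrt{n_q}\).
\item \(\|V-V'\|_F=\|(X_m-X_m')W_V\|_F\le a_V\|X_m-X_m'\|_F\le a_V\Delta\).
\end{itemize}
This gives the \(\sqrt{n_q}\,a_V\Delta\) contribution.

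\emph{Attention term.} Use \(\|(A-A')V'\|_F\le\|A-A'\|_F\|V'\|_2\), and bound the two factors as follows.
\begin{itemize}
\item \(\|V'\|_2\le\|V'\|_F\le a_VB_m=R_V\).
\item Row-wise softmax is \(1\)-Lipschitz in \(\ell_2\). Its Jacobian \(\diag(p)-pp^\top\) is symmetric positive semidefinite with trace at most \(1\), so its spectral norm is at most \(1\); the mean-value theorem along segments then gives the Lipschitz bound. The fixed finite additive mask \(M\) is a constant shift and cancels. Squaring and summing over rows yields \(\|A-A'\|_F\le\|S-S'\|_F\), where \(S=QK^\top/\sqrt{d_k}\).
\item Decompose \(QK^\top-Q'K'^\top=(Q-Q')K^\top+Q'(K-K')^\top\). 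The two pieces are bounded by \(a_QR_K\|X_q-X_q'\|_F\) and \(R_Qa_K\|X_m-X_m'\|_F\), using \(\|K\|_2\le R_K\) and \(\|Q'\|_2\le R_Q\).
\item Cauchy--Schwarz on the pair \((\|X_q-X_q'\|_F,\|X_m-X_m'\|_F)\) gives \(\|S-S'\|_F\le L_S\Delta\).
\end{itemize}
Together these give \(\|(A-A')V'\|_F\le R_VL_S\Delta\). Adding the two terms proves the single-head bound.

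\emph{Multi-head module.} The multi-head output is the concatenation of the heads followed by the linear map \(W_O\). The concatenation rule in Lemma~\ref{lem:lipschitz-calculus}(5) gives constant \(\bigl(\sum_h L_h^2\bigr)^{1/2}\). The composition rule with the affine bound of Lemma~\ref{lem:lipschitz-calculus}(1) then multiplies this by \(\|W_O\|_2\).

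\emph{Masks with \(-\infty\) entries.} Each row's softmax depends only on its unmasked coordinates, so I delete the masked columns from \(S\) and \(V\) row by row. Every deleted piece has norm no larger than the original, and the row-wise argument above is unchanged. The assumption that every row has at least one unmasked key keeps the softmax well defined.

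The step I expect to need the most care is the softmax Lipschitz constant with the mask. The cleanest route is the Jacobian spectral bound (eigenvalues of \(\diag(p)-pp^\top\) lie in \([0,1]\)) applied row by row, followed by the mean-value inequality. Everything else is a routine application of submultiplicativity of the norms.
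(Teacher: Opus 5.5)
Your proposal is correct and matches the paper's proof essentially step for step. It uses the same split $A(V-V')+(A-A')V'$ with $\|A\|_2\le\|A\|_F\le\sqrt{n_q}$ and $\|V'\|_F\le R_V$, the row-wise $1$-Lipschitz softmax from the Jacobian $\diag(p)-pp^\top$ plus the mean-value inequality, the product-rule estimate for $QK^\top$ followed by Cauchy--Schwarz to get $L_S\Delta$, and the concatenation and composition rules of Lemma~\ref{lem:lipschitz-calculus} for the multi-head constant. The differences are cosmetic: you bound the Jacobian's spectral norm via positive semidefiniteness and trace $1-\|p\|_2^2\le 1$, where the paper bounds the quadratic form directly, and your $-\infty$-mask argument is more explicit than the paper's one-line remark (though what gets dropped are the masked key indices, i.e.\ columns of $S$ and rows of $V$).
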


\begin{proof}
First, row-wise softmax is \(1\)-Lipschitz in Euclidean norm.  If
\(p=\mathrm{softmax}(s)\), then
\[
D\mathrm{softmax}(s)=\diag(p)-pp^\top ,
\]
and for every vector \(z\),
\begin{align*}
z^\top(\diag(p)-pp^\top)z
&=\sum_jp_jz_j^2-\Big(\sum_jp_jz_j\Big)^2\\
&\le \sum_jp_jz_j^2\\
&\le \|z\|_2^2 .
\end{align*}
Hence \(\|D\mathrm{softmax}(s)\|_2\le1\), and the mean-value theorem gives
\(\|\mathrm{softmax}(s)-\mathrm{softmax}(s')\|_2\le\|s-s'\|_2\).  Applying this
row by row gives
\[
\|A-A'\|_F\le \|S-S'\|_F,\qquad
S=\frac{QK^\top}{\sqrt{d_k}}+M .
\]
The mask cancels because it is fixed.  Next,
\[
\begin{gathered}
\|Q-Q'\|_F\le a_Q\|X_q-X_q'\|_F,\\
\|K-K'\|_F\le a_K\|X_m-X_m'\|_F,\\
\|V-V'\|_F\le a_V\|X_m-X_m'\|_F,
\end{gathered}
\]
and compactness gives the same bounds for primed and unprimed variables:
\(\|Q\|_F,\|Q'\|_F\le R_Q\), \(\|K\|_F,\|K'\|_F\le R_K\), and
\(\|V\|_F,\|V'\|_F\le R_V\).  Therefore
\[
\begin{aligned}
\|QK^\top-Q'K'^\top\|_F
&\le \|(Q-Q')K^\top\|_F\\
&\quad+\|Q'(K-K')^\top\|_F\\
&\le R_Ka_Q\|X_q-X_q'\|_F\\
&\quad+R_Qa_K\|X_m-X_m'\|_F,
\end{aligned}
\]
so by Cauchy's inequality,
\[
\|S-S'\|_F
\le L_S\Delta .
\]
Finally, each row of \(A\) is a probability vector, so
\(\|A\|_2\le\|A\|_F\le\sqrt{n_q}\).  Thus
\[
\begin{aligned}
\|AV-A'V'\|_F
&\le \|A(V-V')\|_F+\|(A-A')V'\|_F\\
&\le \|A\|_2\|V-V'\|_F+\|A-A'\|_F\|V'\|_F\\
&\le \sqrt{n_q}\,a_V\|X_m-X_m'\|_F+R_VL_S\Delta\\
&\le \bigl(\sqrt{n_q}\,a_V+R_VL_S\bigr)\Delta .
\end{aligned}
\]
The multi-head bound follows by the concatenation inequality in
Lemma~\ref{lem:lipschitz-calculus} and one final linear projection.
\end{proof}

\begin{remark}[Self-attention as a fixed-length special case]
A self-attention head is the diagonal restriction \(S(X)=H(X,X)\) of the
cross-attention map above, with query and memory equal.  If the one-head
cross-attention constant is \(L_H\) in the product norm, then
\begin{align*}
\|S(X)-S(X')\|_F
&\le L_H\sqrt{\|X-X'\|_F^2+\|X-X'\|_F^2}\\
&=\sqrt{2}\,L_H\|X-X'\|_F .
\end{align*}
Thus fixed-length self-attention with a fixed mask is Lipschitz on compact token
domains by the same algebra.  We cite \citet{castin2024smoothattention}
in the main theorem because they give sharper quantitative bounds for masked and
unmasked self-attention and analyze normalization effects; \citet{kim2021selfattentionlipschitz} explain why no unbounded-domain global
Lipschitz claim should be made for standard dot-product attention.
\end{remark}

\begin{proof}[Proof of Theorem~\ref{thm:architecture}]
For each non-attention block, Lemma~\ref{lem:lipschitz-calculus} gives a finite
constant \(L_t\): affine, convolutional, and linear pooling/downsampling layers
are bounded linear maps, finite-window max-pooling is Lipschitz by
Lemma~\ref{lem:lipschitz-calculus},
coordinatewise activations are Lipschitz by assumption, stabilized normalization
has finite constant because \(\epsilon>0\), residual additions use
\(\mathrm{Lip}(I+G)\le1+\mathrm{Lip}(G)\), and feedforward sublayers are finite
compositions.  For fixed-length self-attention and masked self-attention on
compact token domains, \citet{castin2024smoothattention} provide
finite constants; \citet{kim2021selfattentionlipschitz} show why this
compact-domain restriction is necessary on unbounded domains.  For fixed-length
encoder--decoder cross-attention, Lemma~\ref{lem:cross-attention-lipschitz}
gives a finite constant.

This covers the usual named architectures by reduction to their finite
computation graphs.  MLPs are finite compositions of affine maps and
coordinatewise activations.  CNNs replace affine maps by convolutional and
pooling/downsampling operators.  ResNets add residual maps \(I+G\), which are
covered above.  A deterministic U-Net denoiser at fixed time and fixed
conditioning is likewise a finite graph of convolutions, activations, stabilized
normalizations, residual or skip concatenations, pooling/downsampling, and
sometimes attention.  A finite-step diffusion sampler is
covered only after fixing the time schedule, conditioning, and noise path, so it
is an ordinary deterministic finite composition; the theorem is not a statement
about the random law obtained by resampling noise.

It remains only to multiply constants.  Put \(F_t=H_t\circ\cdots\circ H_1\).
Inductively,
\[
\mathrm{Lip}(F_t;\X_0)
\le \mathrm{Lip}(H_t;\X_{t-1})\,\mathrm{Lip}(F_{t-1};\X_0)
\le \prod_{j=1}^t L_j .
\]
At \(t=T\), this gives \(\mathrm{Lip}(f;\X_0)\le\prod_{t=1}^T L_t<\infty\).
The anchored finite-gain statement is then Theorem~\ref{thm:compact-bound}.
For bounded variable length, the possible pairs \((n_s,n_t)\) form a finite set,
so
\[
L_{\max}:=\max_{(n_s,n_t)}\prod_t L_{t,n_s,n_t}<\infty .
\]
This gives the stated uniform family bound.
\end{proof}

\FloatBarrier

\section{Conceptual Figures}
\begin{figure}[H]
    \centering
    \includegraphics[width=0.78\columnwidth]{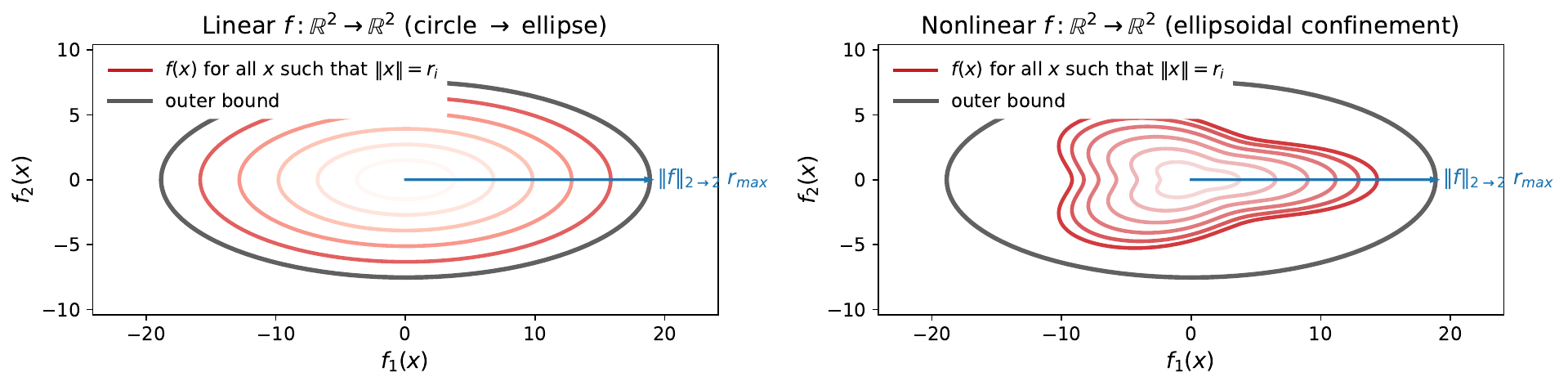}
    \caption{\textbf{Linear and nonlinear maps under a common induced-norm bound.}
    Red curves show $\{f(x):\|x\|_2=r_i\}$ for increasing radii $r_i$.
    A linear map sends the circles to ellipses (left), whereas a nonlinear map
    may produce non-elliptical images (right). In both cases the gray ellipse
    records the outer bound supplied by $\|f\|_{2\to2}$ at the largest plotted
    radius.}
    \label{fig:conceptual_figure_combined}
\end{figure}

\begin{figure}[H]
    \centering
    \includegraphics[width=0.78\columnwidth]{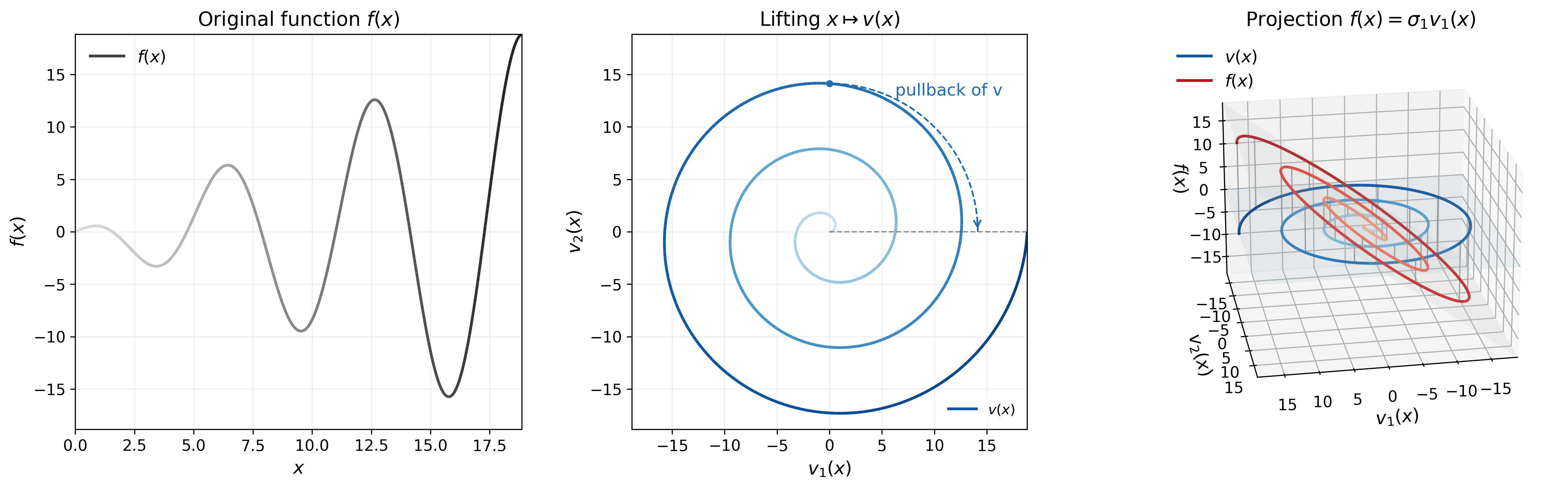}
    \caption{\textbf{Norm-preserving lift followed by linear scaling.}
    Left: a scalar nonlinear map. Middle: a curved lift $v(x)$ satisfying
    $\|v(x)\|_2=\|x\|_2$, with the dashed arrow indicating the pullback on its
    image. Right: the blue lifted curve and its red image under the fixed linear
    output map.}
    \label{fig:conceptual_figure_lifting}
\end{figure}

\FloatBarrier

\section{Appendix: Details for NLSVDNet Examples}
\label{sec:svdnet-examples-details}

\textbf{Datasets:} MNIST (60,000 train / 10,000 test) and Fashion-MNIST (60,000 train / 10,000 test), using the standard torchvision splits with no separate validation set.

\textbf{Model Architecture:} NLSVDNet consists of: (1) a norm-preserving encoder $g$: Conv2d ($1 \to 10$ channels) $\to$ \ManuscriptNLSVDNetResLayers ResNet layers (\ManuscriptNLSVDNetResChannels channels) $\to$ Linear ($7840 \to \ManuscriptNLSVDNetEncodingDims$) with norm preservation subject to the nonzero-code convention in Section~\ref{sec:svdnet}; (2) a linear classifier $K$: Linear($\ManuscriptNLSVDNetEncodingDims \to 10$, no bias); (3) a decoder $g^{-L}$: Linear($\ManuscriptNLSVDNetEncodingDims \to 7840$) $\to$ \ManuscriptNLSVDNetResLayers ResNet layers (\ManuscriptNLSVDNetResChannels channels) $\to$ Conv2d ($10 \to 1$ channels) with norm preservation. ResNet layers use $3 \times 3$ convolutions with ReLU activation, no batch normalization, and no dropout by default.

\textbf{Training:} Adam optimizer with learning rate $\ManuscriptNLSVDNetLearningRate$, batch size \ManuscriptNLSVDNetBatchSize, for \ManuscriptNLSVDNetMnistEpochs epochs (MNIST) or \ManuscriptNLSVDNetFashionEpochs epochs (Fashion-MNIST). Three loss terms per batch: prediction loss (MSE between $K(g(x))$ and one-hot targets with on-value \ManuscriptNLSVDNetLabelOnValue, off-value \ManuscriptNLSVDNetLabelOffValue), bijection loss (MSE between $g^{-L}(g(x))$ and $x$), and regularization loss (penalty on singular values of $K$ exceeding cutoff \ManuscriptNLSVDNetRegularizationCutoff). All losses are optimized jointly with equal weighting.

\textbf{Hyperparameters:} Encoding dimension: \ManuscriptNLSVDNetEncodingDims; ResNet layers: \ManuscriptNLSVDNetResLayers; ResNet channels: \ManuscriptNLSVDNetResChannels; encoding scale: \ManuscriptNLSVDNetMnistEncodingScale (MNIST) / \ManuscriptNLSVDNetFashionEncodingScale (Fashion-MNIST); regularization cutoff: \ManuscriptNLSVDNetRegularizationCutoff; label on-value: \ManuscriptNLSVDNetLabelOnValue; label off-value: \ManuscriptNLSVDNetLabelOffValue.

\textbf{Nullspace Generation:} For category $c$, compute base encoding as $\text{pinv}(K) \cdot \mathbf{e}_c$ where $\mathbf{e}_c$ is the one-hot vector. Extract nullspace basis from SVD of $K$ (right singular vectors corresponding to zero/negligible singular values). Generate samples by adding scaled noise from the nullspace to the base encoding, then decode via $g^{-L}$.

\textbf{Evaluation:} Test set accuracy is reported. No hyperparameter tuning or early stopping; fixed epoch counts are used.

  \ifrevisionmarked
    \begingroup
    \color{RevisionAddGreen}%
    \subsection{Held-Out Encoder Calibration}
\label{sec:held-out-encoder-calibration}

We additionally compare the encoded radius of NLSVDNet with that of the architecture-matched autoencoder used in Figure~\ref{fig:radial-distance-row-null}. The models each contain \(1{,}622{,}809\) trainable parameters, begin from identical weights, receive the same minibatches, and use the same classifier and reconstruction losses. Over \(2{,}000\) held-out MNIST inputs, NLSVDNet satisfies \(\|g(x)\|_2=\|x\|_2\) with mean relative error \(3.44\times10^{-8}\). The matched autoencoder has median relative error \(1.31\). Figure~\ref{fig:radial-distance-held-out} shows the corresponding radii.

\begin{figure}[t]
\centering
\includegraphics[width=0.78\linewidth]{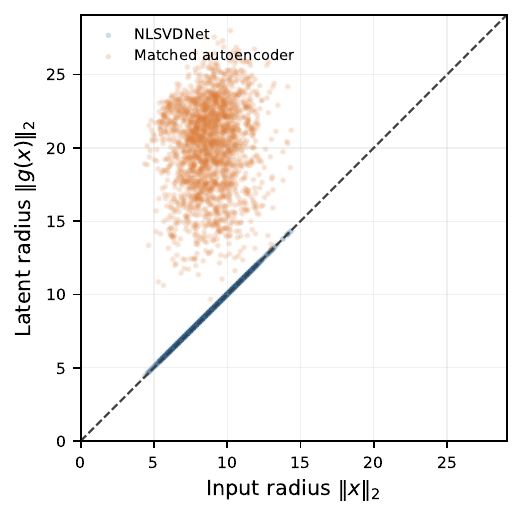}
\caption{\textbf{Held-out encoder radial calibration.}
The NLSVDNet encoding lies on the radius-equality line by construction. The matched autoencoder provides comparable classification and reconstruction performance without assigning the latent representation the input radius.}
\label{fig:radial-distance-held-out}
\end{figure}
    \endgroup
    \IssueRef{M27}%
  \else
    \ifrevisionapplychanges
      \ifcsname RevisionIncludeM27\endcsname
      \fi
    \fi
  \fi

\section{Appendix: Construction Limitation Examples} \label{sec:construction-limitation}

\subsection{NLSVDNet Un-recoverable Counter Example}
First we provide a counter-example showing that the NLSVD Construction Algorithm \ref{alg:nlsvd-construction} cannot recover NLSVDNet (Section \ref{sec:svdnet}) decomposition $K,g.$

Consider a scalar input $x \in \mathbb{R}$ and a norm-preserving lift $g: \mathbb{R} \to \mathbb{R}^3$ defined by $g(x) = [x \cos x, x \sin x \cos x, x \sin^2 x]^T$. Let $K = \text{diag}(10, 1, 0) \in \mathbb{R}^{2 \times 3}$, resulting in the black-box function:
\[ f(x) = \begin{bmatrix} 10x \cos x \\ x \sin x \cos x \end{bmatrix}. \]

\textbf{1. Construction of Singular Values ($\epsilon = 0.1$):}
The empirical gains are $\|f_1\| = 10$ and $\|f_2\| = 0.5$. With $p=2$, the construction in Algorithm \ref{alg:nlsvd-construction} yields singular values:
\[ \sigma_1 = 10 \sqrt{\frac{2}{0.9}} \approx 14.91, \quad \sigma_2 = 0.5 \sqrt{\frac{2}{0.9}} \approx 0.745. \]
Note the significant discrepancy from the internal weights $S' = \{10, 1\}$.
Changing $\epsilon$ of course affects $\sigma_i$.

\textbf{2. Reconstruction of $v(x)$:}
The slack factor $\gamma(x) = 1 - ( \frac{f_1(x)^2}{14.91^2 x^2} + \frac{f_2(x)^2}{0.745^2 x^2} )$ is used to define:
\[ v(x) = \frac{|x|}{\sqrt{\delta_1^2 + \delta_2^2 + x^2}} \begin{bmatrix} \delta_1(x) \\ \delta_2(x) \\ x \end{bmatrix}. \]

\begin{center}
\begin{tabular}{lcc}
\toprule
 & \textbf{NLSVDNet} & \textbf{Construction} \\
\cmidrule{2-3}
$\sigma_1$ & 10.0 & 14.91 \\
$\sigma_2$ & 1.0 & 0.745 \\
$U$ & Latent Orientation & Fixed Coordinates \\

\multirow{2}{*}{Lift} & Semantic Space & Norm-Preserving Lift \\
 & $g(x) \in \mathbb{R}^3$ & $v(x) \in \mathbb{R}^3$ \\
\bottomrule
\end{tabular}
\end{center}

\subsection{NLSVD on Linear Map Example}\label{sec:nlsvd-linear-example}
Next, we show that even if $f$ is linear, $v$ need not be linear (hence, not reducing directly to $V^\top$ in the linear SVD).
Consider the linear map $f:\R^2\to\R$ defined by $f(x) = ax_1$, $a>0$.
Clearly $\|f\|_{2 \to 2} = a$.
Given the slack parameter $\epsilon \in (0, 1)$, we define the singular value $\sigma = a\kappa$ where $\kappa = \sqrt{(1-\epsilon)^{-1}}$.
The slack function $\gamma(x)$ represents the residual energy after accounting for the output of $f$:
$$\gamma(x) = 1 - \frac{(ax_1)^2}{(a\kappa)^2 \|x\|_2^2} = 1 - \frac{x_1^2}{\kappa^2(x_1^2 + x_2^2)}.$$
The intermediate lifted component $\delta(x)$ is defined as:
$$\delta(x) = \frac{f(x)}{\sigma \sqrt{\gamma(x)}} = \frac{x_1}{\sqrt{\kappa^2(x_1^2 + x_2^2) - x_1^2}}.$$
The full lifted representation is the concatenation $v(x) = \frac{\|x\|_2}{\|x_\delta\|_2} x_\delta$, where $x_\delta = [\delta(x), x_1, x_2]^\top$.
By construction, this lift satisfies $\|v(x)\|_2 = \|x\|_2$ and preserves the original function via $f(x) = \sigma v_1(x)$.
Note that $v(x)$ is nonlinear because the normalization factor $\gamma(x)$ depends on the direction of $x$.
Specifically, the first component $v_1(x)$ can be expressed explicitly as:
$$v_1(x) = \frac{\|x\|_2 \delta(x)}{\sqrt{\delta(x)^2 + \|x\|_2^2}} = \frac{\|x\|_2 x_1}{\sqrt{x_1^2 + \|x\|_2^2 (\kappa^2 \|x\|_2^2 - x_1^2)}}.$$
\section{NLSVD Construction Validation}\label{sec:construction-validation}
We empirically probe Algorithm~\ref{alg:nlsvd-construction} on MNIST and Fashion-MNIST classifiers trained as NLSVDNets, so that the singular values of the final linear layer $K$ provide known ground-truth coordinate gains $\sigma_i(K)$.
Treating each classifier as a black-box map $f$, we estimate gains from a held-out set $X$ of \ManuscriptNLSVDRepresentativePoints inputs (data sampling), optionally refined by gradient ascent on $\phi_i(x)=|f_i(x)|/\|x\|_2$, and form $\hat U,\hat\Sigma,\hat v$ as in Algorithm~\ref{alg:nlsvd-construction}.

The nontrivial checks are whether the estimated gains are large enough for the residual $\gamma$ to stay nonnegative on the queried points, and how well they recover $\sigma_i(K)$ (Table~\ref{tab:nlsvd-construction-results}, rows d--e).
Mean gain recovery rises from about $64\%$ with data sampling alone to $97.1\%$ (MNIST) and $95.8\%$ (Fashion-MNIST) after gain search.

Rows a--c are numerical checks of identities that the construction enforces pointwise once $\hat\Sigma$ is fixed: for each queried $x$, the lift $\hat v(x)$ is built from $f(x)$ and $\hat\Sigma$, so $\|f(x)-\hat U\hat\Sigma\hat v(x)\|_2\approx0$, $\|\hat v(x)\|_2\approx\|x\|_2$, and $\hat v^{-L}(\hat v(x))\approx x$ hold by algebra (up to floating point) whenever $\gamma(x)\ge0$.
They are not a held-out test of a fitted surrogate that predicts $f$ without querying $f$.
Table~\ref{tab:nlsvd-construction-results} reports means over \ManuscriptNLSVDNumTrials runs.

\begin{table}[t]
\centering
\begin{threeparttable}
\caption{\textbf{NLSVD construction diagnostics} (Algorithm~\ref{alg:nlsvd-construction}) on MNIST \& Fashion-MNIST NLSVDNet classifiers treated as black boxes. Gains are estimated from a held-out set of \ManuscriptNLSVDRepresentativePoints points (with optional gradient gain search). Rows a--c check constructive identities at queried $x$ (using $f(x)$ to form $v(x)$); rows d--e compare estimated gains to the singular values of $K$. Means over \ManuscriptNLSVDNumTrials runs.}
\label{tab:nlsvd-construction-results}
\begin{tabular}{lcc}
\toprule
\textbf{Metric} & \textbf{MNIST} & \textbf{Fashion-MNIST} \\
\midrule
Reconstruct. Error\tnote{a} & $5.58\times 10^{-14}$ & $4.77\times 10^{-14}$ \\
Left Inverse Error\tnote{b} & $6.23\times 10^{-7}$ & $8.48\times 10^{-7}$ \\
Norm Preserv. Error\tnote{c} & $2.38\times 10^{-7}$ & $2.98\times 10^{-7}$ \\
Mean gain recovery\tnote{d} & $63.7\% / 97.1\%$ & $64.0\% / 95.8\%$ \\
Max gain violation\tnote{e} & $0$ & $0$ \\
Wall-clock Time\tnote{f} & $0.033$ s & $0.035$ s \\
\bottomrule
\end{tabular}
\begin{tablenotes}
\footnotesize
\item[a] Mean squared error $\|f(x) - U \Sigma v(x)\|_2^2$ at queried $x$ (identity check; $v(x)$ uses $f(x)$).
\item[b] Left inverse average norm error: $\|v^{-L}(v(x)) - x\|_2$ (identity check on $\mathrm{Im}(v)$).
\item[c] Norm preservation average absolute error: $|\|v(x)\|_2 - \|x\|_2|$ (identity check).
\item[d] Mean over canonical output coordinates of the data-sampled gain divided by the NLSVDNet head singular value, followed by the same quantity after gradient gain search.
\item[e] Maximum positive excess $(\hat g_i-\sigma_i(K))_+$ after gain search in the NLSVDNet canonical basis.
\item[f] Compute specifications in Section~\ref{sec:reproducibility-assets-compute}.
\end{tablenotes}
\end{threeparttable}
\end{table}

  \ifrevisionmarked
    \begingroup
    \color{RevisionRemoveRed}%
    \section{Appendix: Perturbation Attack Algorithms}
\label{sec:algorithms}

\begin{algorithm}[H]
\caption{``Black Box'' NLSVD Directional Search Attack}
\label{alg:nlsvd_directional_search}
\begin{algorithmic}[1]
\REQUIRE Black box function $f: \mathbb{R}^m \to \mathbb{R}^n$, NLSVD parameters $(U, \Sigma, v)$, input $x_0 \in \mathbb{R}^m$, step size $\Delta > 0$, budget $B > 0$, finite-difference epsilon $\epsilon > 0$
\ENSURE Adversarial perturbation $\delta \in \mathbb{R}^m$ or failure

\STATE \textbf{Step 1: Identify Base State}
\STATE Compute $y_0 = f(x_0)$ and identify current class $u_0 = \arg\max_j (y_0)_j$
\STATE Determine index $i_0$ such that the $i_0$-th component of the lifting corresponds to $u_0$ via $U$

\STATE \textbf{Step 2: Target Selection (Analytical)}
\STATE Compute feature representation $z = v(x_0) \in \mathbb{R}^{n+m}$
\FOR{each rival index $i \in \{1, \dots, n\} \setminus \{i_0\}$}
    \STATE Compute logit gap: $\text{Gap}_i = \sigma_{i_0} z_{i_0} - \sigma_i z_i$
\ENDFOR
\STATE Select target index $i^* = \arg\min_{i \neq i_0} \text{Gap}_i$

\STATE \textbf{Step 3: Direction Computation}
\FOR{each input dimension $j \in \{1, \dots, m\}$}
    \STATE $\nabla_x v_{i^*}[j] \gets \frac{v_{i^*}(x_0 + \epsilon e_j) - v_{i^*}(x_0)}{\epsilon}$
    \STATE $\nabla_x v_{i_0}[j] \gets \frac{v_{i_0}(x_0 + \epsilon e_j) - v_{i_0}(x_0)}{\epsilon}$
\ENDFOR
\STATE $\mathbf{d} \gets \sigma_{i^*} \nabla_x v_{i^*} - \sigma_{i_0} \nabla_x v_{i_0}$
\STATE $\hat{\mathbf{d}} \gets \mathbf{d} / \|\mathbf{d}\|_2$

\STATE \textbf{Step 4: Linear Probe}
\STATE Initialize $r \gets \Delta$
\WHILE{$r < B$}
    \STATE $x_{\text{pert}} \gets \text{clip}(x_0 + r \hat{\mathbf{d}}, 0, 1)$
    \IF{$\arg\max_j f_j(x_{\text{pert}}) \neq u_0$}
        \OUTPUT $\delta = x_{\text{pert}} - x_0$ \COMMENT{Attack successful}
    \ELSE
        \STATE $r \gets r + \Delta$
    \ENDIF
\ENDWHILE
\OUTPUT failure
\end{algorithmic}
\end{algorithm}
    \endgroup
    \IssueRef{M01}%
  \else
    \ifrevisionapplychanges
    \else
    \fi
  \fi

\newpage
\section{Additional Bias Geometry Measurements}
\label{sec:bias-appendix}

The experiments in Section~\ref{sec:bias} suggest that dataset imbalance induces measurable geometric changes in the NLSVD coordinates of a trained network. Here we give the full set of measurements, define each observable and its motivation, and report how each one responds to imbalance across all three datasets. These experiments are descriptive rather than a fairness benchmark: our aim is to illustrate the kinds of geometric quantities that the explicit factorization $f(x)=K\,g(x)$ makes directly accessible, and which of them behave consistently.

\paragraph{Setup, base model, and the well-trained requirement.}
Each model is an NLSVDNet, $f(x)=K\,g(x)$: a norm-preserving 20-layer convolutional ResNet encoder $g:\mathbb{R}^n\!\to\!\mathbb{R}^E$ with embedding dimension $E=1000$, followed by a bias-free linear head $K\in\mathbb{R}^{C\times E}$ whose ordinary SVD $K=U\Sigma V^\top$ \emph{is} the network's NLSVD. For each of MNIST, Fashion-MNIST, and CIFAR-10 we construct biased training sets by choosing a single \emph{target} class, keeping all of its examples, and under-sampling every other class by a factor (the \emph{sampling ratio}) swept over eight log-scaled values in $[0.005,0.64]$; sweeping all ten choices of target yields $8\times 10 = 80$ models per dataset. The fully represented target is the \emph{majority} and the collectively under-sampled remainder is the \emph{minority}; the test set is the original balanced split.

A prerequisite for this analysis is that the models be well-trained. The singular structure of $K$ and the geometry of $g$ describe how a network allocates a learned solution under imbalance, so a model that fails to fit the task yields degenerate coordinates regardless of bias. In the extreme it simply collapses to always predicting the majority class, and the resulting geometry is uninformative. We therefore tuned the models to reach well above the $0.1$ chance level in the near-balanced regime. MNIST and Fashion-MNIST train readily with the base configuration ($10$ residual channels, learning rate $10^{-3}$, $5$ epochs), reaching ${\approx}0.98$ and ${\approx}0.91$ balanced-regime accuracy. Fitting CIFAR-10 required $16$ residual channels with two strided spatial down-sampling stages and a reduced learning rate ($8\times 10^{-4}$) trained for longer ($20$ epochs), after which it reaches ${\approx}0.75$ balanced-regime accuracy. The reported CIFAR-10 measurements use this configuration; the accuracy rows of Table~\ref{tab:bias-appendix-quant} confirm all three model families are well above chance in the balanced regime.\IssueRef{M16}

\paragraph{Gain-structure metrics (functions of $K$).}
These probe the anisotropic gain $\Sigma$---how the head allocates amplification across output directions. Note that writing $z = V^\top g(x)\in\mathbb{R}^E$ with $C$ classes and $E$ embedding dimensions, the lifted energy splits into a \emph{row-space} (signal) part and a \emph{null-space} part:
\[
\|z\|_2^2 \;=\; \underbrace{\textstyle\sum_{i=1}^{C} z_i^2}_{\text{signal}} \;+\; \underbrace{\textstyle\sum_{i=C+1}^{E} z_i^2}_{\text{null space } N(K)} .
\]

\begin{itemize}
\item \textbf{Sigma ratio} $\sigma_1/\sigma_2$: ratio of the two largest singular values. The simplest scalar summary of spectral peakedness; large values mean a single direction dominates the map.
\item \textbf{Effective rank} $r_{\mathrm{eff}} = 1/\sum_i p_i^2$ with $p_i=\sigma_i^2/\sum_j\sigma_j^2$ (inverse participation ratio of the spectrum)~\citep{roy2007effective}: a continuous count of ``active'' output directions that decreases as energy concentrates.
\item \textbf{Tail singular value} $\sigma_{\min}$: the smallest singular value of $K$, i.e.\ the gain of the weakest class direction. A shrinking $\sigma_{\min}$ signals that some class directions are being \emph{starved} of gain, and is a more sensitive tail probe than $\sigma_1/\sigma_2$, which only sees the top of the spectrum.\IssueRef{M18}
\item \textbf{Target / minority dominance}: with $z=V^\top g(x)$, the fraction of \emph{signal}-space energy carried by the leading singular direction, $z_1^2/\sum_{i=1}^{C} z_i^2$, averaged separately over target-class and minority-class test samples. This NLSVD-native probe asks whether the top direction of $K$ encodes the majority class, and (evaluated on minority samples) whether minority inputs \emph{also} collapse onto that direction.
\item \textbf{Minority null-space energy}: $\sum_{i>C} z_i^2 / \|z\|_2^2$, averaged over minority samples. The fraction of a sample's lifted energy that lands in $N(K)$. This tests the original hypothesis that minority information is displaced into the null space (the model becoming ``blind'' to it).
\end{itemize}

\paragraph{Representation-geometry metrics (functions of $g(x)$).}
These probe the encoder $g$ directly, independently of $K$'s spectrum: the side of the factorization that ordinary analyses cannot isolate.
\begin{itemize}
\item \textbf{Feature effective rank}: the inverse participation ratio of the eigenvalues of the feature covariance $\mathrm{Cov}(g(x))$ over the test set (reported globally, and per class)~\citep{roy2007effective}. It measures the intrinsic dimensionality of the representation itself, and hence \emph{feature collapse}.
\item \textbf{NC1} (neural-collapse ratio)~\citep{papyan2020}: $\mathrm{tr}(\Sigma_W)/\mathrm{tr}(\Sigma_B)$, the within-class over between-class feature scatter, where $\Sigma_W=\sum_c\sum_{i\in c}\|g(x_i)-\mu_c\|^2$ and $\Sigma_B=\sum_c n_c\|\mu_c-\mu\|^2$. A standard diagnostic of class separability: it rises as classes become less linearly separable in feature space.\IssueRef{M19}
\item \textbf{Signal participation}: the effective \emph{number} of active signal directions, $\big(\sum_{i\le C} z_i^2\big)^2/\sum_{i\le C} z_i^4$, per group. It is the ``count'' complement of dominance ($1$ = fully collapsed onto one direction, $C$ = energy spread uniformly).
\end{itemize}

\paragraph{Quantitative results.}
Table~\ref{tab:bias-appendix-quant} reports, for every metric, its value in the balanced regime, its value under the strongest imbalance, and the Spearman correlation $\rho$ between the sampling ratio and the (per-ratio mean) metric; $|\rho|\!\to\!1$ indicates a monotone trend, and $\rho$ is negative for quantities that \emph{increase} as imbalance strengthens (since the sampling ratio decreases). Three effects are robust across all three datasets: the leading singular direction of $K$ aligns ever more tightly with the majority class (target dominance $\to\!0.95$ everywhere), the tail singular value $\sigma_{\min}$ contracts (direction starvation), and separability in feature space degrades (NC1 rises monotonically). Notably, $\sigma_1/\sigma_2$ and effective rank are only \emph{dataset-dependent} here: both are clean on Fashion-MNIST but weak or noisy on MNIST and CIFAR-10, so concentration surfaces more reliably as directional dominance and tail-gain starvation than as global rank reduction. Main-paper Figure~\ref{fig:bias-crossdataset} contrasts the two sides directly: target dominance climbs almost identically on all three datasets (a uniform \emph{gain}-side effect), whereas the magnitude of NC1's rise scales with task difficulty: negligible on Fashion-MNIST but a $6\times$ increase on CIFAR-10, where features become genuinely entangled.

\begin{table*}[t]
\centering
\small
\caption{\textbf{Bias geometry across datasets.} For each observable we list its mean value in the balanced regime (sampling ratio $0.64$) $\to$ under the strongest imbalance (ratio $0.005$), with the Spearman correlation $\rho$ in parentheses. $|\rho|\!\to\!1$ is monotone; $\rho<0$ marks quantities that increase under imbalance. ``Consistency'' summarizes the cross-dataset verdict.}
\label{tab:bias-appendix-quant}
\setlength{\tabcolsep}{3pt}
\begin{tabular}{@{}lcccl@{}}
\toprule
Metric (as sampling ratio decreases) & MNIST & Fashion-MNIST & CIFAR-10 & Consistency \\
\midrule
\multicolumn{5}{@{}l}{\textit{Gain structure} --- from $K=U\Sigma V^\top$}\\
\quad $\sigma_1/\sigma_2$ (spectral peakedness)      & $1.17\!\to\!1.67$ ($-0.17$) & $1.11\!\to\!1.70$ ($-1.0$) & $1.04\!\to\!1.33$ ($-1.0$)  & dataset-dependent \\
\quad Effective rank                                 & $9.0\!\to\!5.9$ ($+0.62$)   & $9.0\!\to\!5.3$ ($+1.0$)   & $9.6\!\to\!8.8$ ($+0.62$)   & dataset-dependent \\
\quad Tail singular value $\sigma_{\min}$ (starvation)& $1.29\!\to\!0.50$ ($+1.0$)  & $1.22\!\to\!0.48$ ($+1.0$) & $0.64\!\to\!0.40$ ($+0.88$) & \textbf{robust} \\
\quad Target dominance                               & $0.09\!\to\!0.95$ ($-1.0$)  & $0.11\!\to\!0.96$ ($-1.0$) & $0.16\!\to\!0.94$ ($-1.0$)  & \textbf{robust} \\
\quad Minority dominance                             & $0.09\!\to\!0.29$ ($-0.38$) & $0.09\!\to\!0.27$ ($-0.76$)& $0.08\!\to\!0.74$ ($-0.98$) & scales w/ difficulty \\
\quad Minority null-space energy                     & $0.48\!\to\!0.49$ ($+0.31$) & $0.70\!\to\!0.58$ ($+0.86$)& $0.83\!\to\!0.74$ ($+0.24$) & no consistent trend \\
\midrule
\multicolumn{5}{@{}l}{\textit{Representation geometry} --- from $g(x)$}\\
\quad Feature effective rank                          & $9.9\!\to\!4.8$ ($+0.67$)   & $6.3\!\to\!3.5$ ($+0.98$)  & $6.9\!\to\!4.7$ ($+0.57$)   & non-monotone \\
\quad NC1 (within / between scatter)                  & $0.18\!\to\!0.95$ ($-1.0$)  & $0.61\!\to\!0.80$ ($-0.98$)& $1.26\!\to\!8.19$ ($-1.0$)  & \textbf{robust}; scales w/ difficulty \\
\quad Signal participation (target)                  & $3.9\!\to\!1.1$ ($+1.0$)    & $2.7\!\to\!1.1$ ($+0.93$)  & $3.5\!\to\!1.1$ ($+1.0$)    & \textbf{robust} \\
\midrule
\multicolumn{5}{@{}l}{\textit{Induced bias} --- prediction-side sanity checks}\\
\quad Accuracy gap (target $-$ minority recall)      & $0.01\!\to\!0.61$ ($-1.0$)  & $0.03\!\to\!0.56$ ($-1.0$) & $0.08\!\to\!0.95$ ($-1.0$)  & \textbf{robust} \\
\quad Bias amplification                             & $1.0\!\to\!4.9$ ($-1.0$)    & $1.1\!\to\!4.7$ ($-1.0$)   & $1.2\!\to\!8.8$ ($-1.0$)    & \textbf{robust} \\
\bottomrule
\end{tabular}
\end{table*}

%

\paragraph{Spectral concentration.}
Figures~\ref{fig:sigma-spectrum} and~\ref{fig:sigma-ratio} illustrate the concentration effect, drawing one ghost trace per target class, a $\pm 1$ std band, and a mean line. To show each effect where it is clearest, Figure~\ref{fig:sigma-spectrum} pairs the effective rank of $K$ with the target-dominance ratio on Fashion-MNIST (where the effective-rank reduction is cleanest), and Figure~\ref{fig:sigma-ratio} shows $\sigma_1/\sigma_2$ on CIFAR-10. As Table~\ref{tab:bias-appendix-quant} indicates, target dominance is the reliable indicator across all three datasets, whereas $\sigma_1/\sigma_2$ and effective rank are dataset-dependent (see also main-paper Figure~\ref{fig:bias-crossdataset}).

\begin{figure}[]
\centering
\includegraphics[width=0.9\columnwidth]{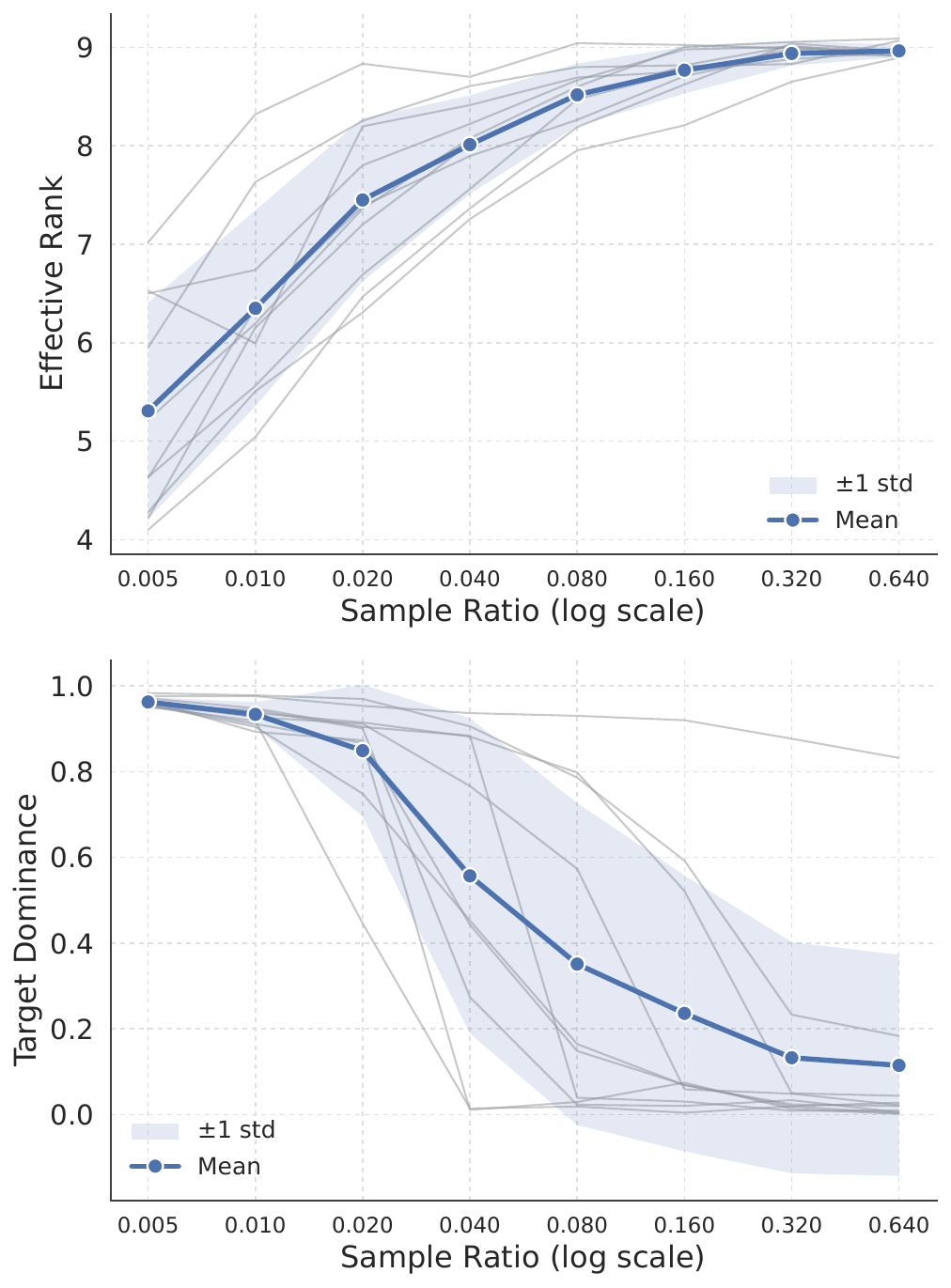}
\caption{
\textbf{Singular spectrum trend under dataset imbalance.}
Effective rank of $K$ (top) and target-dominance ratio (bottom) across biased Fashion-MNIST models: the dataset where the effective-rank reduction is clearest. Stronger imbalance (smaller sampling ratios) lowers the effective rank and concentrates signal-space energy onto the leading singular direction; see Table~\ref{tab:bias-appendix-quant} for all three datasets.
}
\label{fig:sigma-spectrum}
\end{figure}

\begin{figure}[]
\centering
\includegraphics[width=0.9\columnwidth]{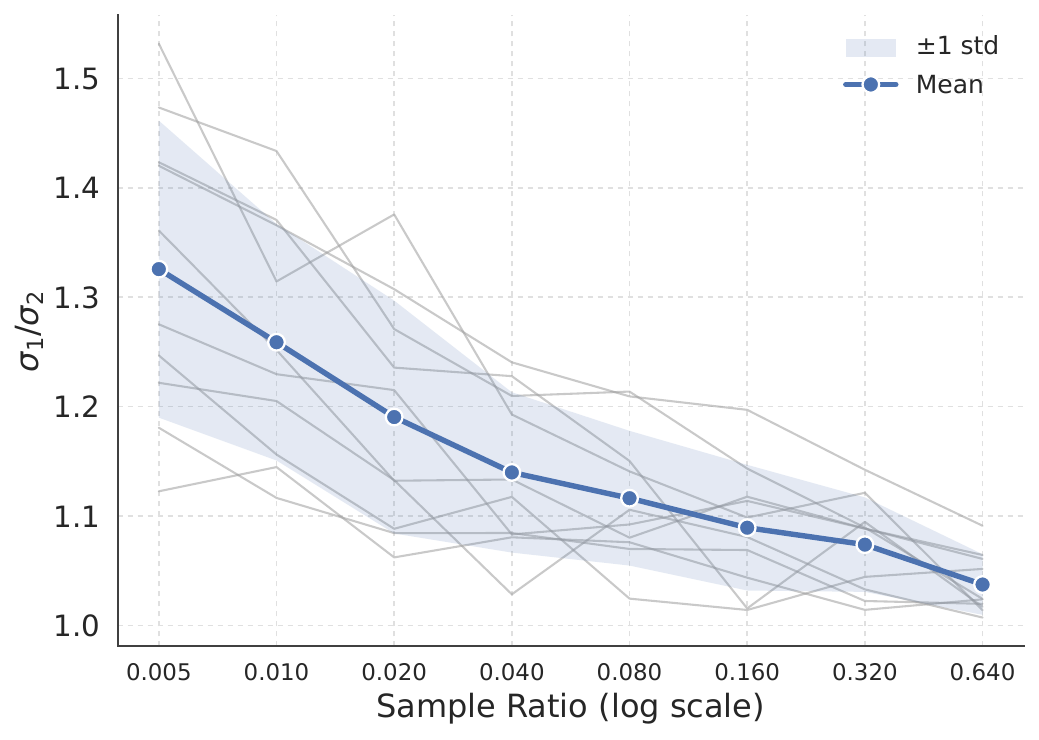}
\caption{
\textbf{Sigma ratios (CIFAR-10).}
Ratio $\sigma_1/\sigma_2$ between the two leading singular values of $K$ across biased CIFAR-10 models; stronger imbalance widens the gap. This summary is clean on CIFAR-10 and Fashion-MNIST but noisy across targets on MNIST (Table~\ref{tab:bias-appendix-quant}).
}
\label{fig:sigma-ratio}
\end{figure}

\paragraph{Concentration vs.\ collapse.}
The target/minority dominance pair distinguishes two qualitatively different regimes. Target dominance rises to ${\approx}0.95$ on every dataset (the majority always captures the leading direction), but \emph{minority} dominance behaves very differently: it stays low on MNIST ($0.09\!\to\!0.29$, $\rho=-0.38$): minority classes retain their own directions even under heavy imbalance. However, it  rises sharply on CIFAR-10 ($0.08\!\to\!0.74$, $\rho=-0.98$; Figure~\ref{fig:minority-dominance}), where minority inputs are increasingly funneled onto the majority axis. A large target--minority dominance gap therefore indicates \emph{concentration without collapse} (as on MNIST), while a small gap indicates genuine \emph{representation collapse} (as on CIFAR-10); this mirrors the NC1 trend, which is largest exactly where the gap is smallest.

\begin{figure}[]
\centering
\includegraphics[width=0.9\columnwidth]{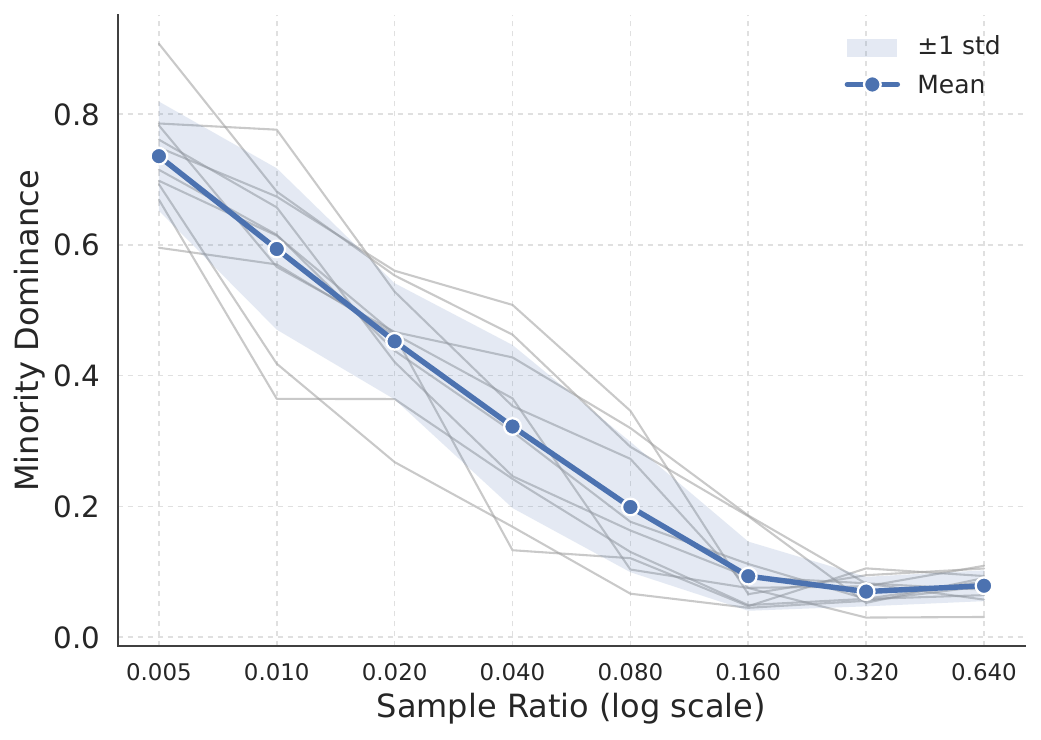}
\caption{
\textbf{Minority-class dominance ratio (CIFAR-10).}
Fraction of minority-class signal-space energy on the leading singular direction, across biased CIFAR-10 models. It rises sharply under imbalance ($0.08\!\to\!0.74$), indicating that minority inputs are increasingly funneled onto the majority axis (representation collapse). On MNIST the same quantity stays modest---concentration without collapse (Table~\ref{tab:bias-appendix-quant}).
}
\label{fig:minority-dominance}
\end{figure}

\paragraph{Null-space migration is not observed.}
Finally, we tested whether minority information is displaced into $N(K)$. It is not: the minority null-space energy fraction shows no consistent trend with imbalance on any dataset (Table~\ref{tab:bias-appendix-quant}; Figure~\ref{fig:nullspace-energy}). Combined with the strong dominance and tentative NC1 effects, this indicates that bias-induced representation collapse is driven by concentration into dominant \emph{nonzero} singular directions, rather than by projection into the null space.

\begin{figure}[]
\centering
\includegraphics[width=0.9\columnwidth]{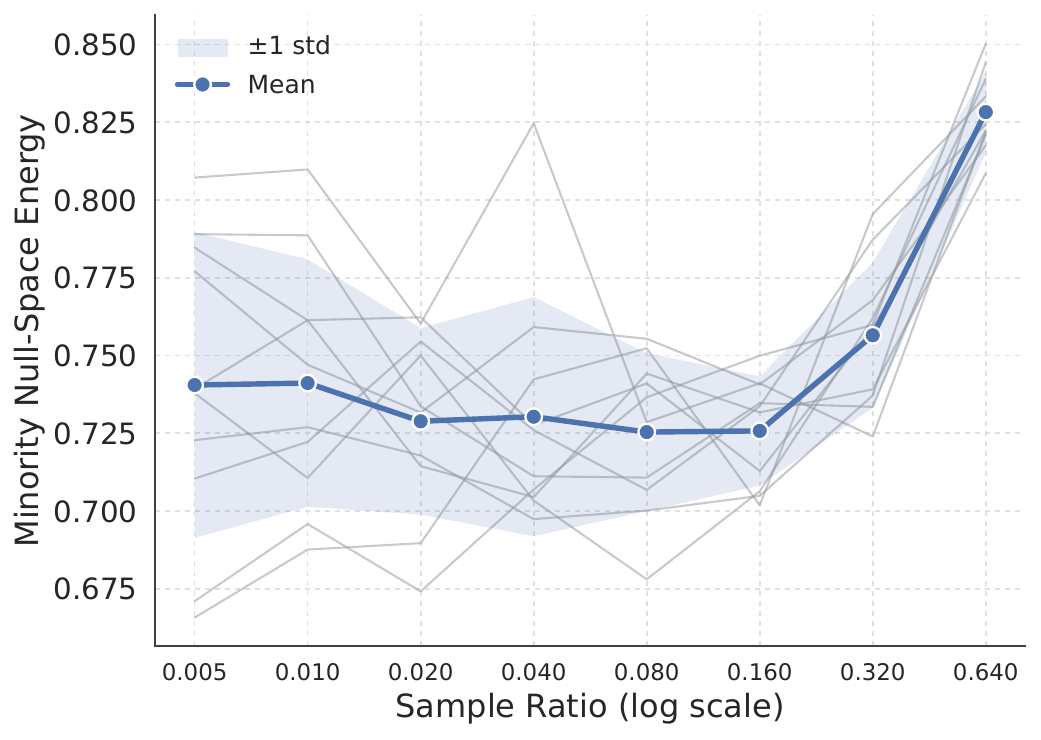}
\caption{
\textbf{Minority-class null-space energy (CIFAR-10).}
Fraction of lifted energy projected into $N(K)$ for minority-class samples, across biased CIFAR-10 models. Contrary to the original hypothesis, strong imbalance does not consistently increase null-space occupancy: the effect is flat or weak on all three datasets.
}
\label{fig:nullspace-energy}
\end{figure}

\paragraph{Summary.} These experiments show that the NLSVD decomposition turns dataset imbalance bias into a set of directly measurable coordinates, spanning \emph{gain} structure (directional dominance, tail-singular-value starvation) and \emph{representation} geometry (feature separability), plus the negative control of null-space allocation. We view them as a proof of concept motivating further study of bias implications on feature collapse, pruning, and representation geometry through an NLSVD lens.

\FloatBarrier

\section{Membership Inference Robustness}
\label{sec:mia-appendix}

This appendix expands on the membership-inference experiment of Section \ref{sec:mia}: the attack, the setup, and the two supporting analyses (the singular-value causal test and the leakage predictor).

\paragraph{Attack.}
We use online LiRA \cite{carlini2022membership}, the likelihood-ratio refinement of shadow-model
membership inference \cite{shokri2017membership}. The attack pool is the full CIFAR-10 training set;
each shadow and target model is trained on an independent random $50\%$ mask of the pool, so every
example is a member for about half of the models. For each pool example we take the logit-scaled
confidence of the true class, $\phi = \log p_y - \log\!\big(\sum_{y'\neq y} p_{y'}\big)$, which is
loss-agnostic and thus directly comparable across the MSE-trained NLSVDNet and the cross-entropy CNN.
Across shadows we fit per-example Gaussians $\mathcal{N}(\mu_{\text{in}},\sigma_{\text{in}}^2)$ and
$\mathcal{N}(\mu_{\text{out}},\sigma_{\text{out}}^2)$; each target/example pair is then scored by the
log-likelihood ratio of these two Gaussians and thresholded to trace one ROC \emph{per target model} ($25$ curves per model type). We report the median of these curves together with a shaded band
running from their $5$th to $95$th percentile, taken across the target models at each false-positive rate (the band reflects target-to-target variability, not a statistical confidence interval).

\paragraph{Setup.}
CIFAR-10, pool size $50{,}000$ (${\approx}25{,}000$ members per model). The NLSVDNet uses encoding dimension $1{,}000$, $20$ residual layers, $16$ residual channels, spatial downsampling in $g$, trained $20$ epochs at learning rate $8\times10^{-4}$). The CNN baseline is a compact two-conv-block classifier; its width is chosen so that its generalization gap (train accuracy - test accuracy) matches the NLSVDNet's ($15.6\%$), isolating architecture from overfitting. To separate the NLSVDNet architecture from its training objective we add a third condition: the identical NLSVDNet architecture trained with a plain cross-entropy objective, with the bijection reconstruction loss and $\sigma$-regularization removed. We train $128$ shadow and $25$ target models per condition.

\paragraph{Model comparison.}
Table~\ref{tab:mia-results} records accuracies, generalization gaps, and LiRA scores for the three conditions; the corresponding ROC curves appear in main-paper Figure~\ref{fig:mia-roc}. At matched gap, the full NLSVDNet ROC lies below the CNN across false-positive rates, with the largest separation at low FPR, and the plain-CE ablation of the same architecture leaks more than either baseline. Because the NLSVDNet and CNN share a generalization gap, the CNN's higher leakage is not explained by greater overfitting; the ablation indicates that the training objective, rather than the architecture alone, is associated with the reduced leakage. The inflated generalization gap under plain CE is consistent with the leakage predictor below.

\begin{table}[H]
\caption{\textbf{Membership Inference (LiRA) Robustness.}
NLSVDNet vs.\ a plain CNN (matched
generalization gap) and an ablation that keeps the NLSVDNet architecture but trains it with a plain cross-entropy objective. Accuracy and gap in \% on CIFAR-10; AUC / TPR@$10^{-3}$ (lower is better).}
\label{tab:mia-results}
\centering
\small
\setlength{\tabcolsep}{7.5pt}
\begin{tabular}{@{}lcccc@{}}
\toprule
Model & Acc. & Gap & AUC & TPR@$10^{-3}$ \\
\midrule
NLSVDNet, full obj. & $\mathbf{72.5}$ & $15.6$ & $\mathbf{.635}$ & $\mathbf{.003}$ \\
NLSVDNet, plain CE  & $68.7$          & $26.5$ & $.736$          & $.030$ \\
Plain CNN         & $67.1$          & $15.6$ & $.696$          & $.014$ \\
\bottomrule
\end{tabular}
\end{table}

\paragraph{Singular-value causal test.}
A natural hypothesis is that the singular-value regularization of $K$ (the closest thing the NLSVDNet has to a capacity/privacy dial) controls leakage. We test it directly by sweeping the regularization
cutoff over $\{1,2,4,8,16,64\}$ ($6$ seeds each) and scoring every model against the shadow farm. Leakage is flat (Table~\ref{tab:mia-sigma}, Figure \ref{fig:mia-predictor}): the variation across cutoffs is no larger than the seed-to-seed noise within a cutoff. We conclude that spectral regularization of the head $K$ is not a lever on memorization \RevisionReplace{M23}{. This is consistent with the interpretation that memorization resides in the encoder $g$.}{over the tested cutoff range. This sweep does not localize memorization to either the encoder or the head.}
\RevisionComment{M23}{A null response to one head regularization parameter does not locate membership information in the encoder because the logits depend on both \(g\) and \(K\).}

\paragraph{Leakage predictor.}
Mirroring the bias study in Section \ref{sec:bias}, we ask whether a cheap quantity read off a single trained model predicts its leakage. We correlate per-model observables against LiRA AUC across the
shadow farm (Spearman $\rho$, Table~\ref{tab:mia-predictor}). The generalization gap (Figure \ref{fig:mia-predictor}) and train accuracy are the strong predictors, while \emph{every} spectral observable of $K$ (the ratio $\sigma_1/\sigma_2$, effective/entropy rank, $\sigma_{\min}$, $\sigma_{\max}$) is essentially uncorrelated. This is the key contrast with the bias setting, where $K$'s spectrum \emph{does} track the induced bias.

\begin{figure}[htbp]
\centering
\includegraphics[width=\linewidth]{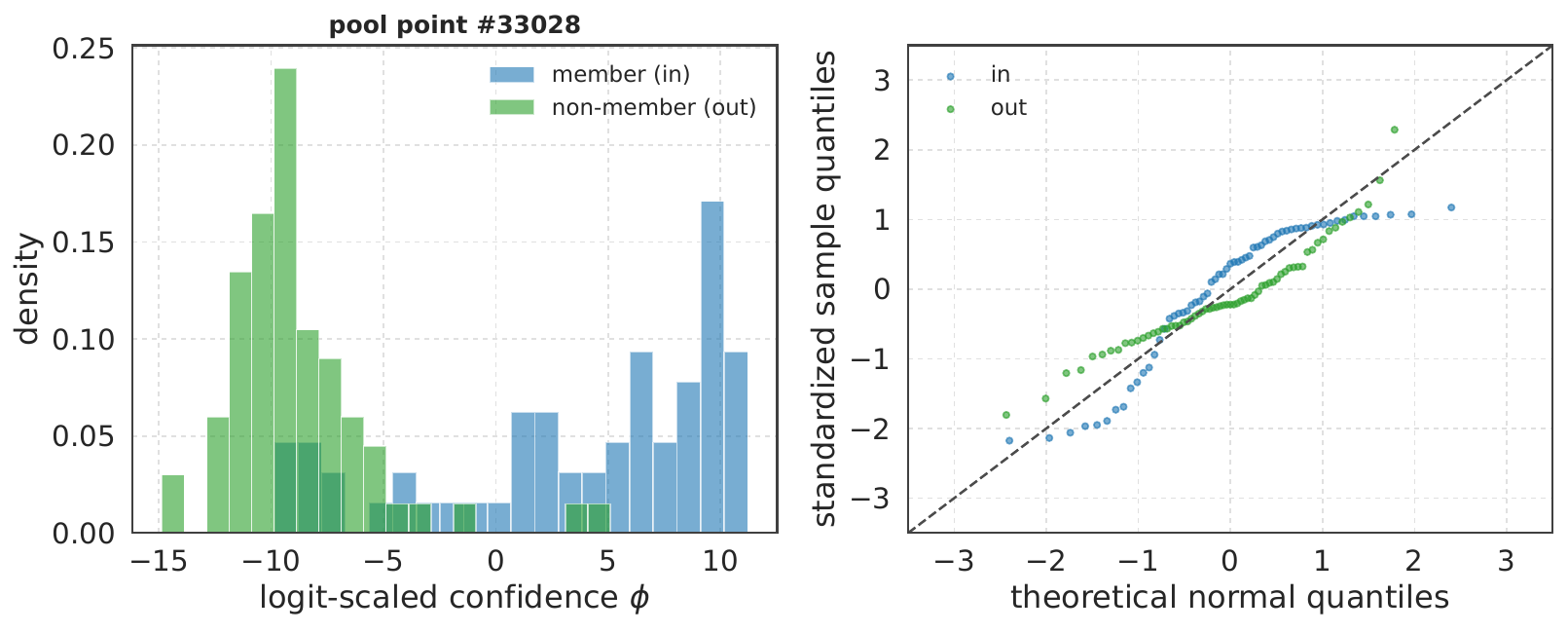}
\caption{Per-example in/out confidences.}
\label{fig:mia-qqplot}
\end{figure}


\begin{table}[H]
\caption{Leakage vs.\ singular-value regularization cutoff (mean\,$\pm$\,std over $6$ seeds). No
trend beyond seed noise.}
\label{tab:mia-sigma}
\centering
\small
\setlength{\tabcolsep}{5pt}
\begin{tabular}{@{}lcccccc@{}}
\toprule
Cutoff & $1$ & $2$ & $4$ & $8$ & $16$ & $64$ \\
\midrule
LiRA AUC        & $.640$ & $.642$ & $.640$ & $.642$ & $.644$ & $.641$ \\
$\pm$std        & $.008$ & $.007$ & $.004$ & $.009$ & $.005$ & $.008$ \\
TPR@$10^{-3}$   & $.0028$ & $.0035$ & $.0030$ & $.0028$ & $.0035$ & $.0037$ \\
\bottomrule
\end{tabular}
\end{table}

\begin{table}[H]
\caption{Spearman $\rho$ between per-model observables and LiRA leakage.}
\label{tab:mia-predictor}
\centering
\small
\setlength{\tabcolsep}{18pt}
\begin{tabular}{@{}lcc@{}}
\toprule
Observable & $\rho$ (AUC) & $\rho$ (TPR@$10^{-3}$) \\
\midrule
Train accuracy               & $+0.64$ & $+0.41$ \\
Generalization gap           & $+0.60$ & $+0.07$ \\
Test accuracy                & $+0.20$ & $+0.47$ \\
Bijection recon.\ error      & $-0.25$ & $+0.10$ \\
$\sigma_{\min}$              & $-0.15$ & $+0.04$ \\
$\sigma_1/\sigma_2$          & $+0.12$ & $+0.02$ \\
Entropy rank                 & $-0.08$ & $+0.12$ \\
Effective rank               & $-0.08$ & $+0.11$ \\
$\sigma_{\max}$             & $+0.00$ & $-0.02$ \\
\bottomrule
\end{tabular}
\end{table}

\begin{figure}[H]
\centering
\begin{minipage}[t]{0.48\linewidth}\centering
\includegraphics[width=\linewidth]{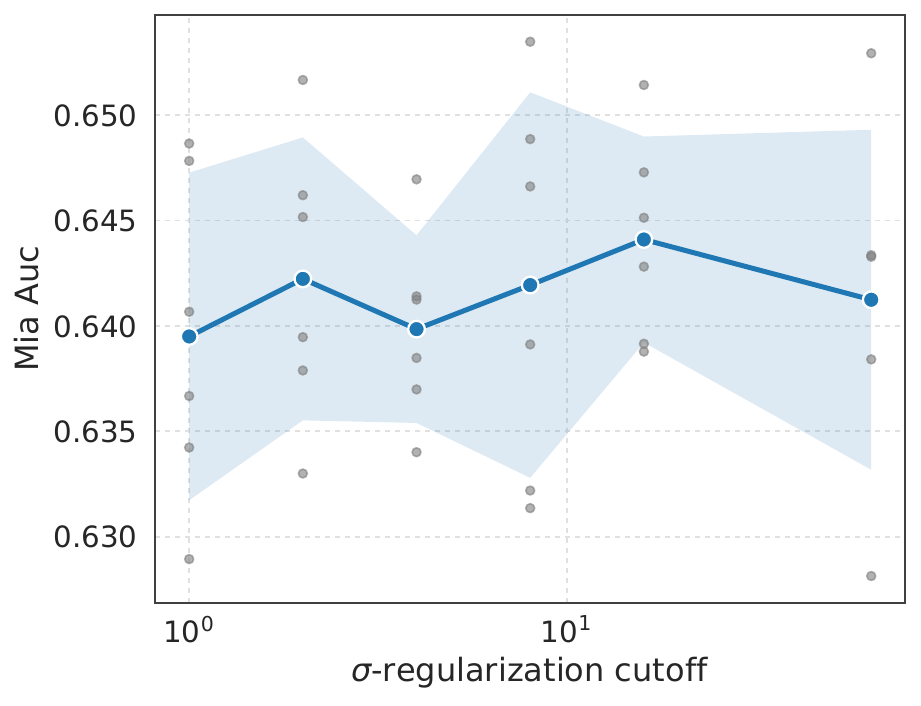}
\end{minipage}
\begin{minipage}[t]{0.48\linewidth}\centering
\includegraphics[width=\linewidth]{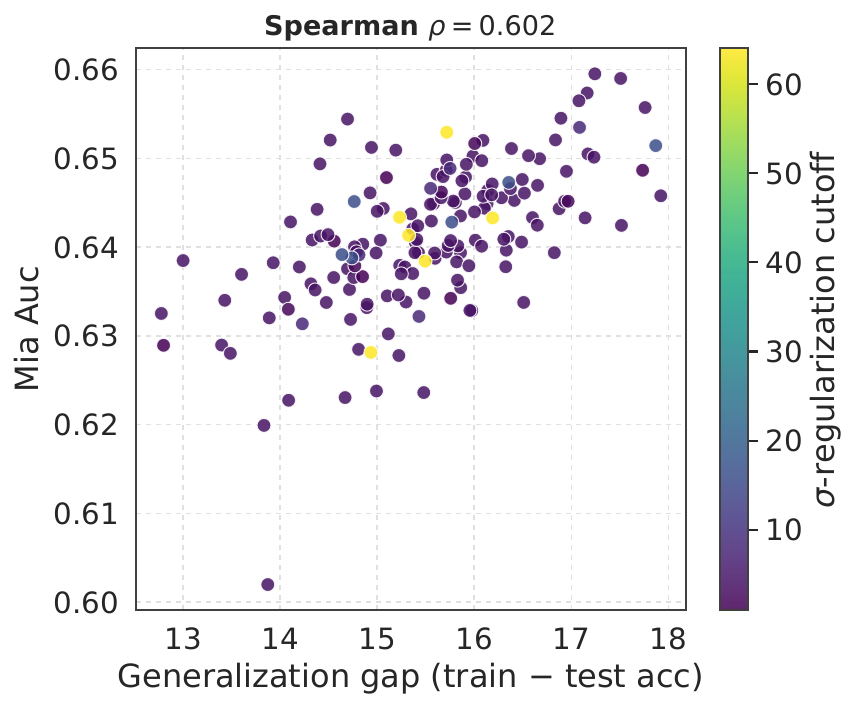}
\end{minipage}
\caption{Leakage is flat across the regularization cutoff (left), but rises with the generalization gap (right)}
\label{fig:mia-predictor}
\end{figure}

\FloatBarrier

\section{Membership Inference Attacks}
\label{sec:mia-attack-appendix}

LiRA \cite{carlini2022membership} is the strongest membership-inference attack available, but an expensive one: it trains a farm
of $128$ shadow models to estimate, for each point, the confidence a model would place on it had it \emph{not} been trained on it. Because the NLSVDNet exposes structure a conventional network
hides (the null space of the head $K$, its singular spectrum, the learned bijection $g^{-L}\!\circ g$) it is natural to ask whether an NLSVDNet can match LiRA's performance \emph{without} that farm by exploiting the structure instead. Using the same setup as in Section \ref{sec:mia} (reusing the same $25$ target models but \emph{zero} shadow models), and calibrating against a set of public non-members (the CIFAR-10 test set), our strongest attack turns out to be a shadow-free \emph{confidence} test. For a
candidate example $(x,y)$ we compute its logit-scaled confidence
$\phi(x)=\ell_y-\log\!\sum_{k\neq y}e^{\ell_k}=\log\tfrac{p_y}{1-p_y}$, which corresponds to the model's log-odds on the
true class $y$, and turn it into a per-example membership score by
$z$-scoring it against the reference non-members of the same class, so the threshold adapts to how confident the model is on that class in general. This exploratory experiment reaches AUC $0.596$, just short of LiRA's $0.635$ score in our setup. Clearly above a random guessing baseline of $0.5$, this shows it is possible to achieve comparable leakage detection at a tiny fraction of the cost. The details of this experiment follow below.

\paragraph{Setup.}
We reuse the $25$ CIFAR-10 NLSVDNet target models of Section \ref{sec:mia-appendix} and their membership masks. The candidate pool is the full $50{,}000$-point training pool (labels = each target's mask). A disjoint set of guaranteed non-members (the CIFAR-10 test set, standing in for public same-distribution data) provides per-example calibration as explained below. Each attack emits one score for each \emph{(model, point)} pair; we sweep it into a per-target ROC and aggregate over the $25$ targets exactly as in Section \ref{sec:mia-appendix} (median with a $5$--$95$th-percentile band), so every curve is directly comparable to the LiRA ROC. Each raw statistic is turned into a membership score two ways: a per-example/per-class $z$-score against the reference non-members (``calibrated''), and the bare statistic under a single global threshold (``raw'').

\paragraph{The attacks.} We consider a wide variety of attacks and approaches that leverage the NLSVDNet structure.
\emph{(i) Null-space pullback (white-box).} Following Section \ref{sec:svdnet-examples}, for each class $j$ we decode a cloud of manifold samples $g^{-L}(K^{\dagger}y_j+z_{\mathcal N})$, $z_{\mathcal N}\in\mathcal N(K)$, and score a candidate by its $k$-NN distance (norm-invariant pixel space, as $g$ is norm-preserving) to the cloud of its class. Because $g^{-L}$ is only a \emph{left} inverse, $g\circ g^{-L}$ need not be the identity, so the decoded clouds are not automatically class-consistent (they re-classify as class $j$ only $0$--$64\%$ of the time). We therefore source the null-space content from real reference encodings and \emph{reject-filter} the clouds, keeping only decoded samples the model re-classifies as $j$, so the manifold is class-faithful by
construction where the decoder permits, and empty for classes it cannot round-trip. We report both the calibrated and raw/global-threshold scores.
\emph{(ii) NLSVD $\gamma$ residual (black-box).} From query access alone we run the Algorithm-1 reconstruction (Section \ref{sec:construction}) on the reference set and score a point by the residual $\gamma(x)=1-\sum_j f_{u_j}(x)^2/(\sigma_j^2\lVert x\rVert^2)$, corresponding to the energy \emph{not} captured by
the estimated top-$p$ gain directions.
\emph{(iii) Bijection reconstruction (white-box).} We score the relative reconstruction error $\lVert g^{-L}(g(x))-x\rVert/\lVert x\rVert$ of the NLSVDNet's built-in autoencoder. Since the
bijection loss is optimized on the training members, memorization would make members reconstruct better. This corresponds to the standard reconstruction attack against autoencoding and generative models.
\emph{(iv) Combined.} The mean of the calibrated confidence and reconstruction scores, the two per-example signals the NLSVDNet actually trains on.
We compare against two baselines: a \emph{shadow-free} confidence attack (the calibrated logit-scaled confidence $\phi$ of Section \ref{sec:mia-appendix}, and LiRA itself, with $128$ shadows.

\begin{table}[t]
\caption{Constructive attacks on the NLSVDNet's NLSVD structure (CIFAR-10, aggregated over $25$ target models). All structure-native attacks use \emph{zero} shadow models. Higher AUC / TPR@$10^{-3}$ means a stronger attack; every structure-native attack is at or near chance and none beats even the
shadow-free confidence baseline, or LiRA.}
\label{tab:mia-attack}
\centering
\footnotesize
\setlength{\tabcolsep}{5pt}
\resizebox{\columnwidth}{!}{%
\begin{tabular}{@{}lcc@{}}
\toprule
Attack & AUC & TPR@$10^{-3}$ \\
\midrule
\multicolumn{3}{@{}l}{\emph{Structure-native, no shadow models}}\\
\quad Null-space pullback, calibrated (white-box) & $.501$ & $.0010$ \\
\quad Null-space pullback, raw/global (white-box) & $.500$ & $.0010$ \\
\quad NLSVD $\gamma$ residual (black-box)          & $.537$ & $.0011$ \\
\quad Bijection reconstruction (white-box)        & $.504$ & $.0011$ \\
\quad Confidence $+$ reconstruction               & $.572$ & $.0016$ \\
\midrule
\multicolumn{3}{@{}l}{\emph{Baselines}}\\
\quad Confidence $\phi$ (shadow-free)             & $.596$ & $.0012$ \\
\quad LiRA ($128$ shadow models)                  & $\mathbf{.635}$ & $\mathbf{.0030}$ \\
\bottomrule
\end{tabular}
}
\end{table}

\begin{figure}[t]
\centering
\includegraphics[width=\linewidth]{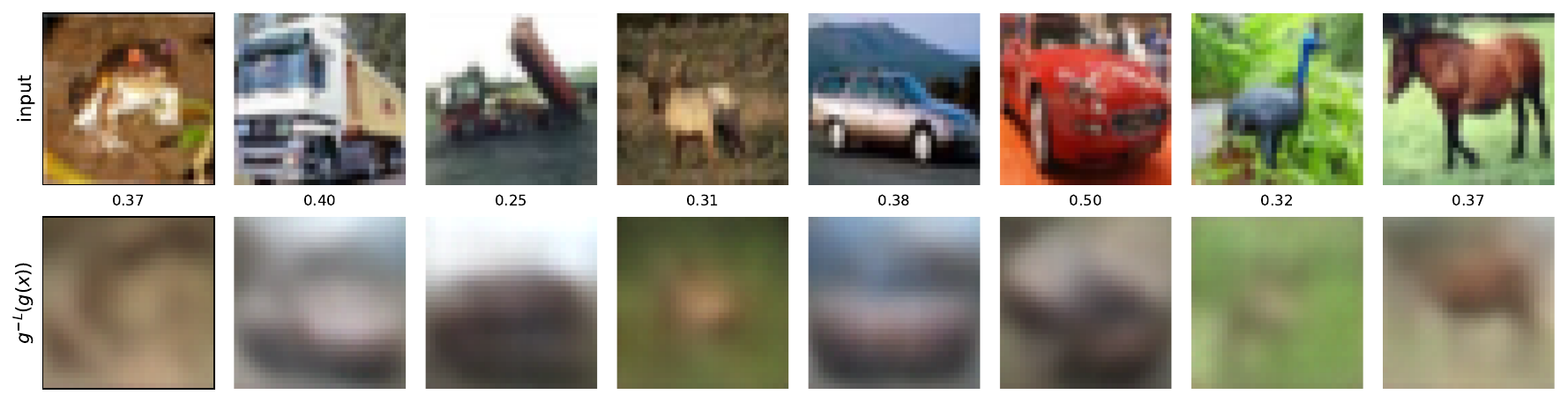}
\caption{The NLSVDNet bijection is a lossy, low-frequency near-inverse. Top: eight CIFAR-10 inputs;
bottom: their reconstructions $g^{-L}(g(x))$ (relative error above each). The decoder recovers the dominant color and coarse layout but not fine detail, and does so identically for members and non-members, which is arguably why the reconstruction attack is at chance.}
\label{fig:mia-recon}
\end{figure}

\paragraph{Findings.}
Table~\ref{tab:mia-attack} is uniformly negative. The null-space pullback distance is at chance (AUC $0.50$) whether calibrated or thresholded globally, and even on the classes whose clouds are
faithful. Arguably, this is because a distance to a class-faithful manifold measures class \emph{typicality}, which
members and same-class non-members share, not the per-example detail that separates a memorized point from an unseen one of the same class. The reconstructed NLSVD geometry does slightly better (AUC $0.54$); and the bijection reconstruction does
not discriminate at all (AUC $0.50$; the mean member/non-member relative-error gap is ${\approx}0.1\%$). None reaches the shadow-free confidence attack (AUC $0.60$). Combining confidence with reconstruction \emph{hurts} ($0.57$), showing that the latter signal is noisy. The reconstruction result is the most informative: the decoder reconstructs members and non-members equally, and only coarsely (Figure~\ref{fig:mia-recon}) at that. The bijection objective \emph{generalizes} rather than \emph{memorizes} individual training points.
This mirrors the leakage-predictor analysis of Section \ref{sec:mia-appendix}: membership tracks per-example fit (confidence / generalization gap), which the training objective suppresses.

\paragraph{Can the shadow-free baseline be strengthened?}
The one attack that carries real signal without shadow models is the calibrated confidence $\phi$ (AUC $0.596$), and it still trails LiRA ($0.635$). Since LiRA's advantage comes entirely from its \emph{per-example} out-distribution (the confidence of $128$ shadow models that never trained on $x$) we asked whether the NLSVDNet's exposed structure could supply a shadow-free substitute or an orthogonal signal that closes the gap. Table~\ref{tab:mia-shadowfree} sweeps four families, all reusing the same $25$ targets with zero shadow models.
\emph{(1) Finer calibration.} Replacing the per-class $z$-score with a per-class empirical rank ($0.592$), or with a genuinely per-example calibration (the $z$-score of $\phi(x)$ against the
$k{=}200$ reference non-members nearest to $x$ in encoder space $g(x)$, the shadow-free analogue of LiRA's per-example distribution) reduces the score to ($0.554$).
\emph{(2) Augmentation.} Aggregating $\phi$ over query augmentations (horizontal flip, pixel shifts) hurts identically whether we average the raw $\phi$ then calibrate or calibrate each
augmentation then average ($0.554$ both ways), and the per-example spread across augmentations is at chance ($0.497$). These augmentations, yet unseen in training, arguably makes a member's augmented confidence look non-member-like, diluting the clean single query.
\emph{(3) Spectral fingerprints.} White-box per-example statistics of the trained model: local $2{\to}2$ gain, the participation ratio of the row-space NLSVD coordinates $V^{\top}g(x)$, and the null-space energy fraction. These are individually weak ($|\mathrm{AUC}-0.5|\le 0.05$) and, combined with $\phi$, all fall \emph{below} $\phi$ alone. This is expected: the logits
are $U\Sigma V^{\top}g(x)$, so the spectral decomposition is a re-parameterization of the same forward pass, not the missing counterfactual as in LiRA.
\emph{(4) Richer reconstruction.} Beyond the scalar bijection error of attack~(iii), we resolved the reconstruction into high- and low-frequency bands (the detail band being where memorization of exact pixels would most likely surface even if the total error is member-independent) and iterated the round-trip
$T=g^{-L}\!\circ g$ to probe closeness to a fixed point. Every variant still remains at chance ($0.50$--$0.51$).
\emph{(5) MC-dropout.} Reading each dropout mask as a model variant makes the spread over $T{=}10$ stochastic forward passes a per-example distribution over variants. Superficially, this is the
ingredient shadow models supply. But these variants all share the training set, so the spread is \emph{epistemic uncertainty of one model} (a memorization proxy), not the counterfactual of a model
that never saw $x$. Empirically the uncertainty (predictive variance, entropy, argmax disagreement) is weak and, crucially, redundant with $\phi$: swept over the dropout rate $p\in\{0.02,\dots,0.35\}$ its AUC is monotone, and best at the gentlest $p{=}0.02$ ($\approx0.55$). It decays to chance as larger dropout degenerates the model (trained without dropout). Every combination with $\phi$ stays below $\phi$ alone, and MC-averaging $\phi$ itself does not improve it. The $p$-sweep rules out a poorly chosen dropout rate.
\paragraph{Summary.} Our shadow-free attacks top off at a $0.596$ ceiling, slightly below LiRA ($0.635$). We conclude that the residual gap to LiRA is attributable specifically to the per-example, over-models counterfactual that only shadow models provide.

\begin{table}[t]
\caption{Attempts to strengthen the shadow-free baseline (CIFAR-10, $25$ targets, \emph{zero}
shadow models). ``$+\phi$'' is the attack combined with the calibrated confidence. No attack or
combination beats the plain per-class-calibrated confidence ($0.596$), and none approaches LiRA.}
\label{tab:mia-shadowfree}
\centering
\small
\setlength{\tabcolsep}{5pt}
\begin{tabular}{@{}lcc@{}}
\toprule
Attack & AUC & $+\phi$ (AUC) \\
\midrule
Confidence $\phi$, per-class $z$ (baseline) & $\mathbf{.596}$ & --- \\
\midrule
\multicolumn{3}{@{}l}{\emph{Finer calibration}}\\
\quad per-class empirical rank              & $.592$ & --- \\
\quad per-example neighbor ($g(x)$ $k$-NN)  & $.554$ & --- \\
\multicolumn{3}{@{}l}{\emph{Augmentation}}\\
\quad flip/shift, calibrated                & $.554$ & --- \\
\quad cross-augmentation consistency        & $.497$ & --- \\
\multicolumn{3}{@{}l}{\emph{Spectral fingerprints (white-box)}}\\
\quad local gain (sharpness)                & $.546$ & $.585$ \\
\quad signal participation ratio            & $.513$ & $.575$ \\
\quad null-space energy fraction            & $.539$ & $.583$ \\
\multicolumn{3}{@{}l}{\emph{Richer reconstruction}}\\
\quad high-frequency band error             & $.510$ & $.584$ \\
\quad round-trip drift / contraction        & $.500$ & $.550$ \\
\multicolumn{3}{@{}l}{\emph{MC-dropout uncertainty}}\\
\quad best rate ($p{=}0.02$)                & $.554$ & $.588$ \\
\midrule
LiRA ($128$ shadow models)                  & $.635$ & --- \\
\bottomrule
\end{tabular}
\end{table}

%
  \ifrevisionmarked
    \begingroup
    \color{RevisionRemoveRed}%
    \section{Appendix: Experiment details and extra results}\label{sec:perturbation-attack-details}
The NLSVD timing values reported in the main text were obtained on the rerun environment used to generate this manuscript.
Timing is not directly comparable for the C\&W attack because our experimental framework used PyTorch while the C\&W code used TensorFlow; hence, in each iteration, tensors were converted, sent to the GPU, then transferred back and converted again.

\subsection{Larger-Budget C\&W $L^2$ Results}\label{sec:cw-maxiter-full}
For the \citet{carlini2017towards} attack, we also use
\texttt{MAX\_ITER = \ManuscriptCWBigMaxIterations}. Compared with
Table~\ref{tab:perturbation-attack-results}, the results are similar but require
about $\ManuscriptCWBigQueryRatioMnist\times$ as many queries on MNIST,
$\ManuscriptCWBigQueryRatioFashion\times$ on Fashion-MNIST, and
$\ManuscriptCWBigQueryRatioCifar\times$ on CIFAR-10.
\begin{table}[t]
\centering
\footnotesize
\begin{threeparttable}
\begin{tabular}{lccc}
\toprule
\textbf{Metric} & \textbf{MNIST} & \textbf{Fashion} & \textbf{CIFAR-10} \\
\midrule
Success percent (\%) & 90.4 & 91.6 & 87.0 \\
Avg. perturb. norm & 13.47 & 11.53 & 13.70 \\
Avg. queries/sample & 9010.0 & 9010.0 & 9010.0 \\
\bottomrule
\end{tabular}
\end{threeparttable}
\end{table}

\FloatBarrier

\FloatBarrier%
    \endgroup
    \IssueRef{M01}%
  \else
    \ifrevisionapplychanges
    \else
    \fi
  \fi

\section{Appendix: Reproducibility, Assets, and Compute}
\label{sec:reproducibility-assets-compute}

\paragraph{Code and execution.}
\RevisionReplace{M01}{The submitted supplement contains the experiment code, configuration files, and
setup scripts.  The project targets Python 3.12 with dependencies pinned in
\texttt{requirements.txt}.  From the repository root, the full rerun workflow is
\texttt{make rerun-full} or equivalently
\texttt{./scripts/run\_all\_experiments.sh}; both regenerate the attack
experiments, NLSVD-construction experiments, NLSVDNet sweeps, and derived
manuscript tables.  The final synchronization step is
\texttt{python scripts/sync\_manuscript\_results.py}.  Small CIFAR smoke tests
are available through \texttt{make smoke-cifar} and
\texttt{make smoke-cifar100}.}{The submitted supplement contains the experiment code, configuration files, and setup scripts. The project targets Python 3.12 with dependencies pinned in \texttt{requirements.txt}. The bias and membership appendices record the model grids and evaluation protocols used for those experiments.}

\paragraph{Data, splits, and configurations.}
\RevisionReplace{M01}{All datasets are standard public image-classification benchmarks loaded through
\texttt{torchvision}: MNIST, Fashion-MNIST, CIFAR-10, and CIFAR-100.  For each
attack experiment, the training split is used to train the classifier/NLSVDNet
model, a held-out test subset of \ManuscriptPerturbationNLSVDPoints samples is
used to estimate the NLSVD, and a disjoint held-out test subset is used for attack
evaluation.  The evaluation sizes are
\ManuscriptPerturbationEvalPointsMnistFashion samples for MNIST and
Fashion-MNIST and \ManuscriptPerturbationEvalPointsCifar samples for CIFAR-10
and CIFAR-100.  The checked-in configurations
\texttt{mnist.json}, \texttt{fashion\_mnist.json}, \texttt{cifar10.json}, and
\texttt{cifar100.json} specify the seeds, training epochs, attack budgets,
C\&W iteration counts, and NLSVD slack parameter.}{The experiments use standard public image-classification benchmarks loaded through \texttt{torchvision}: MNIST, Fashion-MNIST, and CIFAR-10. Sections~\ref{sec:svdnet-examples-details}, \ref{sec:bias-appendix}, and \ref{sec:mia-appendix} give the corresponding training and evaluation details.}

\paragraph{Hyperparameters and reported statistics.}
\RevisionReplace{M01}{NLSVDNet hyperparameters for the visual examples are listed in
Appendix~\ref{sec:svdnet-examples-details}.  Algorithmic details for the NLSVD
directional search attack are listed in Appendix~\ref{sec:algorithms}.  Table
\ref{tab:nlsvd-construction-results} reports means over
\ManuscriptNLSVDNumTrials NLSVD-construction runs.  Table
\ref{tab:perturbation-attack-results} reports empirical success rates, average
perturbation norms, and query counts on the fixed evaluation subsets above; we
do not claim formal statistical significance beyond these empirical summaries.}{NLSVDNet hyperparameters for the visual examples are listed in Appendix~\ref{sec:svdnet-examples-details}. Table~\ref{tab:nlsvd-construction-results} reports means over \ManuscriptNLSVDNumTrials NLSVD-construction runs.}

\paragraph{Compute resources.}
\RevisionReplace{M01}{The reruns used to produce the reported tables were run on a workstation with an
NVIDIA GeForce RTX 3080 GPU.  The logs record wall-clock runtime and query counts
for the reported attack and NLSVD-construction runs; peak memory was not
separately logged.  The main tables report per-sample wall-clock or construction
time where relevant, and Appendix~\ref{sec:perturbation-attack-details} records
the timing caveat for comparing PyTorch NLSVD runs with the TensorFlow C\&W
baseline.}{The reruns used to produce the reported tables were run on a workstation with an NVIDIA GeForce RTX 3080 GPU. The logs record wall-clock runtime for the NLSVD-construction runs. Peak memory was not separately logged.}

\paragraph{Existing assets and licenses.}
The work uses public benchmark datasets and software assets rather than scraped
or newly collected human-subject data.  MNIST is used from the public
LeCun--Cortes--Burges release; Fashion-MNIST is MIT licensed; CIFAR-10 and
CIFAR-100 are used from the public Krizhevsky release and standard
\texttt{torchvision} loaders.  \texttt{torchvision} is BSD-3-Clause licensed,
\RevisionRemove{M01}{and the original Carlini--Wagner attack implementation used for comparison is
BSD licensed.  }The submitted code package does not redistribute private data.

\end{document}